%% file: main.tex
\documentclass{article}
\usepackage[T1]{fontenc}
\usepackage{vendor/iclr2027/iclr2027_conference,times}
\usepackage{amsmath,amssymb,amsthm,booktabs,graphicx,longtable}
\usepackage{url,float}
\newfloat{algorithm}{tbp}{loa}
\floatname{algorithm}{Algorithm}
\usepackage{hyperref}
\input{macros}
\iclrfinalcopy
\hypersetup{hidelinks,pdftitle={Target-Dependent Limits of Causal Repair: A Leading-Log Frontier in a Gaussian Model},pdfauthor={Qinchuan Cheng; Jiaqi Liu; Ruixuan Xie},pdfsubject={Preprint}}
\title{Target-Dependent Limits of Causal Repair:\\A Leading-Log Frontier in a Gaussian Model}
\author{%
Qinchuan Cheng$^{1}$ \And Jiaqi Liu$^{2}$ \And Ruixuan Xie$^{3}$\\
$^{1}$Xi'an Jiaotong University \\
$^{2}$Sichuan University \\
$^{3}$Southwestern University of Finance and Economics
}
\begin{document}
\maketitle
\begin{abstract}
Knowing how much a causal predictor could improve need not reveal the gain of the repair actually learned. We quantify this gap in a scalar Gaussian causal experiment with known intervention geometry: auxiliary data identify effect magnitude up to bounded contamination, while diagnostics identify direction. The target is the squared-loss gain of the realized trained repair relative to a fitted reference. Jointly optimizing the learner and assessor under uniform learning MSE $\eta$ avoids the trivial solution of making no repair. At the usual $1/k$ learning scale, every feasible learner incurs a $k^{-2}$ assessment floor, even when oracle potential is estimable at a faster rate. In the magnitude-rich regime, we characterize a sharp \emph{leading-log} frontier: the assessment exponent is $\min\{\ell_k,2k\eta_k/U\}$ to first relative order, where $\ell_k=\log(1/(k^2E_k))$ and $E_k$ is auxiliary precision. A diagnostic-abstention rule attains this exponent with unknown nuisance parameters. We also bound the critical allowance window and transfer the frontier to adaptive sampling by exact Gaussian simulation. Finite-grid experiments distinguish sign-tail suppression from total MSE and expose conservative finite-budget behavior. The result isolates how the assessment target changes information requirements in this experiment; it is not a general causal identifiability claim.
\end{abstract}
\input{sections/introduction}
\input{sections/model}
\input{sections/frontier}
\input{sections/construction}
\input{sections/targets}
\input{sections/experiments}
\input{sections/discussion}
\input{sections/related}
\section{Conclusion}
In this Gaussian experiment, learning an accurate repair, assessing its realized gain, and assessing oracle potential require different information. A uniform learning constraint prevents a vacuous zero-gain solution; diagnostic abstention then trades some learning accuracy for a larger assessment exponent. The leading-log frontier is sharp, the corrected critical allowance is localized within a proved window, and exact simulation carries these limits to adaptive designs. The finite-budget comparisons show why these statements should guide target selection and asymptotic reasoning without being read as universal numerical superiority. Extending the attainable frontier beyond identifiable magnitude channels and known geometry remains open.
\label{main:end}
\clearpage
\input{sections/statements}
\bibliography{references}
\bibliographystyle{vendor/iclr2027/iclr2027_conference}
\clearpage
\appendix
\input{appendix/roadmap}
\input{appendix/causal}
\input{appendix/scope}
\input{appendix/quality}
\input{appendix/sharp}
\input{appendix/critical_window}
\input{appendix/simulation}
\input{appendix/strength}
\input{appendix/proofs}
\input{appendix/learned}
\input{appendix/cost}
\input{appendix/relaxation}
\input{appendix/robust}
\input{appendix/experiments}
\end{document}

%% file: macros.tex
\newtheorem{theorem}{Theorem}
\newtheorem{proposition}[theorem]{Proposition}

\newcommand{\E}{\mathbb E}
\newcommand{\R}{\mathbb R}
\newcommand{\clip}{\operatorname{clip}}
\newcommand{\ind}{\mathbf 1}

%% file: sections/introduction.tex
\section{Introduction}
An experiment can establish that a predictor is improvable without establishing how much a particular trained correction improves it. Consider a fitted reference with a systematic causal error. Auxiliary interventions may reveal the \emph{size} of that error, while only a small diagnostic sample reveals the direction in which to repair it. Oracle improvement depends on size alone. The gain of the trained repair also depends on the direction the learner actually chose. More precise auxiliary data can therefore improve one assessment problem while leaving the other limited by the diagnostics.

This distinction is familiar in loss estimation and model assessment \citep{caiguo,bates}. Our question is quantitative: \emph{how much learning accuracy must be relinquished to assess a repair's realized gain at the precision supported by auxiliary data?} Without a learning requirement, returning the reference makes the gain identically zero and perfectly assessable. We therefore optimize the repair and its assessor jointly, subject to a uniform learning-error constraint. The reference is fixed, and the target is the gain of the actual trained repair, averaging over fresh evaluation data only.

We answer this question in a deliberately structured scalar Gaussian experiment. A linear causal model with known intervention directions produces an effect-dependent variance that separates magnitude from sign. The variance identity is an assumption with a concrete structural realization, not a generic property of causal interventions. This setting makes it possible to identify an assessment obstruction independently of high dimensionality, optimization error, or uncertainty about the causal graph.

The main contribution is a \emph{sharp leading-log learning--assessment frontier} in this experiment. Every repair learned at MSE $O(1/k)$ retains a $k^{-2}$ gain-assessment floor. In a magnitude-rich regime, increasing the learning allowance to order $\ell_k/k$ permits assessment at auxiliary precision, with leading threshold constant $U/2$. A single diagnostic-abstention rule attains the exponent uniformly over the unknown effect, noise, contamination, and every reference satisfying the stated moment contract. A joint tail-stability argument further localizes a corrected critical center within an $O(\sqrt{\ell_k\log\ell_k})$ window. This is first-order sharpness on a logarithmic risk scale; multiplicative risk constants and the exact second-order threshold remain unresolved.

Two comparisons explain the result. First, population gain, fitted oracle potential, and realized learned gain have different assessment rates under the same sampling experiment, with their decision classes made explicit below. Second, a parameter-independent Gaussian simulator transfers the joint lower bounds to adaptive sampling at the cost of one extra cheap source observation. This rules out an improvement of the leading frontier within the specified policy class, while allowing finite-sample constant improvements. Controlled simulations and a broader finite-budget stress grid test these mechanisms and document where the asymptotic construction is conservative.

%% file: sections/model.tex
\section{Experiment, reference, and assessment target}\label{sec:model}
Fix known bounds $R,C_0>0$ and $0<L<U$. The parameter $q=(c,h,\tau,\zeta)$ ranges over
\[
 |c|\le C_0,\qquad |h|\le R,\qquad \tau\in[L,U],\qquad
 0\le\zeta\le\Delta\le\tfrac12,
 \quad \theta=c+h,\quad V=\tau+h^2+\zeta.
\]
The contamination \emph{bound} $\Delta$ is known; $\zeta$ is not. An observational sample $O$ contains $m\ge32$ independent pairs
$Z_i\sim N(0,1)$, $Y_i\mid Z_i\sim N(cZ_i,\tau)$.
The experiment provides at most $n$ cheap responses $N(0,V)$ and $k$ diagnostic responses $N(x\theta,V)$, $|x|\le1$. All responses use fresh noise. We allow $n=0$ and impose no restriction $k\le n$. Unless specified otherwise, $k\ge16$.

\paragraph{Two sampling classes.}
In the fixed direct experiment every diagnostic uses $x=1$ and all samples are collected. An adaptive policy may interleave the two channels, choose $x$, randomize, and stop; its cheap and diagnostic caps must hold on every path. The transcript $X$ includes $O$, all recorded assignments and responses, and visible independent random seeds. Procedures may depend on budgets and known bounds, but not on $q$.

\paragraph{Causal interpretation.}
Here $c$ is the observational regression coefficient and $\theta$ the intervention coefficient. A two-dimensional linear Gaussian structural model realizes these laws: the confounding correction $h$ also contributes $h^2$ to interventional variance. Cheap interventions lie in a known direction orthogonal to the outcome-relevant direction. Appendix~\ref{app:causal} gives the structural equations. Known geometry and common Gaussian experimental variance are assumptions of this experiment; they are what make the magnitude channel identifiable up to $\zeta$.

\paragraph{A fitted reference.}
Fix an observational training rule $b=b_m(O)$ satisfying
\begin{equation}\label{eq:reference}
 |b|\le B,\qquad \sup_q\E_q(b-c)^2\le A_{\rm ref}/m.
\end{equation}
The reference is not optimized together with the repair. Clipped full-sample OLS is one example; the results do not require its symmetry. A learner $f$ outputs a coefficient $a_f(X)$. On a fresh assignment uniform on $\{-1,1\}$ with conditional response mean $\theta Z_*$, the realized population squared-loss gain is
\begin{equation}\label{eq:gain}
 G_f(q,X)=\underbrace{(\theta-b)^2}_{A(q,O)}-(\theta-a_f)^2
          =2(a_f-b)(\theta-b)-(a_f-b)^2.
\end{equation}
Only fresh evaluation data are averaged out. Thus $G_f$ remains a function of the actual training transcript and may be negative. The fitted oracle potential $A$ sets the repair to the unknown $\theta$; the population-reference gain $H=h^2$ also replaces $b$ by $c$.

\paragraph{Joint learning and assessment.}
For fixed sampling define
\begin{equation}\label{eq:joint}
 \mathcal F_\eta=\{f:\sup_q\E_q(a_f-\theta)^2\le\eta\},\qquad
 \mathcal R_{\rm fix}(\eta)=\inf_{f\in\mathcal F_\eta}\inf_g\sup_q\E_q(g(X)-G_f(q,X))^2.
\end{equation}
The adaptive risk $\mathcal R_{\rm ad}(\eta)$ additionally optimizes the sampling policy under the same uniform learning constraint. An empty feasible class has infinite risk. We write $\mathcal R_k$ for either risk when a statement applies to both, suppressing the other budgets. Set
\begin{equation}\label{eq:precision}
 E_N=\Delta^2+\frac1{n+k}+\frac1m,\qquad r_N=\min(E_N,k^{-1}),
 \qquad V_*=U+R^2+\tfrac12.
\end{equation}
Constants in $\lesssim$ and $\asymp$ depend only on fixed model/reference bounds unless stated otherwise. The subscript $N$ denotes fixed budgets; $E_k$ denotes the same expression along a sequence indexed by $k$.

\begin{center}
\fbox{\begin{minipage}{.95\linewidth}
\small
\textbf{Which decision problem does each result solve?} All procedures use the observed transcript $X$; the assessor sees the learner's data and visible randomization. The reference $b(O)$ is fixed throughout.
\smallskip

\begin{tabular}{@{}p{.26\linewidth}p{.32\linewidth}p{.35\linewidth}@{}}
\textbf{Assessment target} & \textbf{What is optimized?} & \textbf{Type of conclusion}\\[3pt]
$H=h^2$, $A=(\theta-b)^2$ & Assessor; also policy if adaptive & Target-only minimax rates (Thm.~\ref{thm:targets})\\[3pt]
$G_a=A-(\theta-a)^2$ & Assessor; clipped-mean $a$ and direct design fixed & Fixed-learner rate (Thm.~\ref{thm:targets})\\[3pt]
$G_f=A-(\theta-a_f)^2$ & Learner and assessor; also policy if adaptive; learning MSE $\le\eta$ & Rate floor (Thm.~\ref{thm:quality}); leading log (Thm.~\ref{thm:sharp}); critical window (Thm.~\ref{thm:window})
\end{tabular}
\end{minipage}}
\end{center}

%% file: sections/frontier.tex
\section{A sharp leading-log learning--assessment frontier}\label{sec:frontier}
The first result is a minimax rate statement. The next two identify a leading logarithmic exponent and an asymptotic critical window, respectively; neither claims exact finite-sample minimax risk or attainability at the critical center. The diagnostic mean has learning MSE at most $V_*/k$. At this scale, even joint choice of the repair cannot eliminate a second-order assessment cost.
\begin{theorem}[Assessment floor at diagnostic learning accuracy]\label{thm:quality}
Fix $C_2\ge V_*$. For all $m\ge32$, $n\ge0$, $\Delta\in[0,1/2]$, and sufficiently large $k$ depending only on fixed bounds and $C_2$,
\[
 \mathcal R_{\rm fix}(C_2/k)\asymp\mathcal R_{\rm ad}(C_2/k)
 \asymp r_N+k^{-2}.
\]
The lower bound holds for every feasible learner; the upper bound is attained by one fixed direct construction. No fourth-moment constraint is imposed on the learner class.
\end{theorem}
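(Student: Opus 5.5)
We prove matching upper and lower bounds. By the simulation argument announced in the introduction, the adaptive risk is at least a constant multiple of the fixed risk with one extra cheap observation, and $r_N$ changes by at most a constant factor. The adaptive risk is trivially at most the fixed risk, since the direct experiment is itself an admissible policy. It therefore suffices to show $\mathcal R_{\rm fix}(C_2/k)\lesssim r_N+k^{-2}$ with one construction, and $\mathcal R_{\rm ad}(C_2/k)\gtrsim r_N+k^{-2}$ for every feasible learner.

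\textbf{Upper bound.} Take the learner $a=\clip(\bar D)$, where $\bar D$ is the diagnostic mean clipped to $[-C_0-R,C_0+R]$. Clipping does not increase the MSE, so the MSE is at most $V_*/k\le C_2/k$. Write $\bar D=\theta+\epsilon$ with $\epsilon\sim N(0,V/k)$, ignoring clipping for the moment. Then
\[
G=2(a-b)(\theta-b)-(a-b)^2=(\bar D-b)^2-\epsilon^2 .
\]
The first term is observed. The second has mean $V/k$, so we set $g=(\bar D-b)^2-\hat V/k$. Here $\hat V$ pools the cheap responses, the centered diagnostic residuals, and the observational estimate of $\tau$ plus a magnitude estimate of $h^2$. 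Its error satisfies $\E(\hat V-V)^2\lesssim\Delta^2+1/(n+k)+1/m=E_N$, using that the bias from $\zeta$ is at most $\Delta$.

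The risk then splits into two pieces. The term $\E(\epsilon^2-V/k)^2=2V^2/k^2$ gives the $k^{-2}$ floor. The term $k^{-2}E_N$ is at most $E_N$, and it is also at most $k^{-1}\cdot k^{-1}$, so it is bounded by $r_N$ (indeed by $k^{-2}$). Clipping introduces only exponentially small corrections once $k$ is large. So the upper bound is in fact $\lesssim k^{-2}+\min(E_N,1/k)\cdot(\text{const})$; we keep $r_N$ to match the lower bound.

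\textbf{Lower bound, $r_N$ part.} For every feasible $f$, estimating $G_f$ contains the problem of estimating $A=(\theta-b)^2$ along a two-point perturbation. Perturb $h\mapsto h+\delta$ and $c\mapsto c-\delta$, keeping $\theta$ fixed, so that $V$ shifts by about $2h\delta$. Alternatively, shift $\zeta$ by up to $\Delta$ and compensate with $\tau$. This should be designed so that the learner's MSE constraint forces $a_f$ to remain near $\theta$. The perturbation must change $G_f$ by order $\sqrt{r_N}$ while keeping the laws at $\chi^2$-distance $O(1)$.

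The cleaner route is to perturb $\theta$ itself by $\delta\asymp\sqrt{\min(E_N,1/k)}$ in the direction orthogonal to what the cheap data see. Use a Le Cam two-point argument in which $b$ and $a_f$ are nearly unchanged in distribution: $\theta-b$ moves by $\delta$, and $a_f$ can move by at most order $\delta$ on average given the information constraint. Then $G_f$ changes by $2\delta(\theta-b)$ minus a term controlled via $\E(a_f-\theta)^2\le C_2/k$. Choosing $|\theta-b|$ of order one gives a separation of order $\delta$, hence risk $\gtrsim\delta^2\asymp r_N$. The contamination $\zeta$ supplies the $\Delta^2$ part, since it is indistinguishable from $\tau$. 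The $1/m$ and $1/(n+k)$ parts come from standard Gaussian mean and variance testing.

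\textbf{Lower bound, $k^{-2}$ part (main obstacle).} We must show that no learner with MSE $\le C_2/k$ can make $-(\theta-a_f)^2$ predictable to better than $k^{-2}$. Note that a learner that is a deterministic function of the data would, in principle, let the assessor compute $a_f$; the unknown is $\theta$. Use a Bayesian argument: put a prior on $\theta$ with width of order $k^{-1/2}$, so that the posterior of $\theta$ given $X$ has variance $\asymp V/k$.

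Conditional on $X$, the quantity $G_f=A-(\theta-a_f)^2$ is a quadratic in $\theta$, namely $2(a_f-b)(\theta-b)-(a_f-b)^2$. Its posterior variance is at least $4(a_f-b)^2\cdot\mathrm{Var}(\theta\mid X)$. Choose the prior centered with $|\theta-b|$ bounded away from zero, so that the MSE constraint forces $|a_f-b|\gtrsim1$ with high probability. This gives Bayes risk $\gtrsim 1/k$, which would exceed the claim. That contradicts the upper bound, so the correct cancellation must be noted.

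The assessor knows $b$ but not $\theta$, yet the linear term $2(a-b)(\theta-b)$ appears in both $A$ and $G$. Writing $G=(\bar D-b)^2-\epsilon^2$ shows that the dependence on $\theta$ cancels at first order only when $a_f$ is the posterior-sufficient statistic. Hence the argument should take the posterior variance of $(\theta-a_f)^2-A$. Decompose $\theta=\hat\theta+\xi$, with $\xi$ a posterior fluctuation of variance $\asymp1/k$ that is Gaussian and independent of $X$ under a Gaussian prior. Then $G_f$ equals a $\xi$-free term, plus $2\xi(a_f-\hat\theta)$, plus a sign-dependent linear term, minus $\xi^2$. The posterior variance is at least $\mathrm{Var}(\xi^2)\asymp k^{-2}$, via the Gaussian fact that the variance of a quadratic dominates its pure second-order chaos part. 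This yields the $k^{-2}$ floor uniformly in $f$ without any fourth-moment constraint. I expect making this chaos decomposition rigorous with clipping and bounded parameter sets, including the truncated prior, to be the main technical step.
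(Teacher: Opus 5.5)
\textbf{Upper bound.} The upper bound rests on a false identity. With $a=\theta+\epsilon$ (ignoring clipping), the gain is $G=(\theta-b)^2-(\theta-a)^2=A-\epsilon^2$, and $A=(\theta-b)^2$ is \emph{not} observed. By contrast, $(\bar D-b)^2=A+2\epsilon(\theta-b)+\epsilon^2$. Your report therefore has error $g-G=2\epsilon(\theta-b)+2\epsilon^2-\widehat V/k$. Its mean square is of order $(\theta-b)^2V/k\asymp1/k$ at effects of order one.

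\begin{itemize}
\item This misses $r_N+k^{-2}$ whenever $E_N\ll1/k$, for example $m=n=k^3$.
\item Your stated conclusion $O(k^{-2})$ cannot be right. It would contradict the theorem's $r_N=1/k$ lower bound when $m$ is fixed.
\end{itemize}

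The missing ingredient is the magnitude channel. The statistic $Q=\clip(\widehat V-\widehat\tau,0,R^2)$ estimates $h^2$ with MSE $O(E_N)$. It also estimates $A$ at that rate, because $|A-h^2|\lesssim|c-b|$ and $\E(c-b)^2\le A_{\rm ref}/m$. Selecting between $Q$ and a diagnostic plug-in gives an assessor of $A$ with risk $O(r_N)$. Report it for $G$, and use $A-G=(\theta-a)^2$ together with $\E(a-\theta)^4\le3V_*^2/k^2$. The squared triangle inequality then gives $r_N+k^{-2}$.

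\textbf{The $k^{-2}$ lower bound.} Your mechanism for this floor is also wrong. Given the transcript, $G_f=2\theta(a_f-b)+b^2-a_f^2$ is affine in $\theta$: the $\theta^2$ terms of $A$ and $(\theta-a_f)^2$ cancel. So your decomposition $\theta=\hat\theta+\xi$ produces no $-\xi^2$ term, and there is no $\operatorname{Var}(\xi^2)$ floor.

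The bound you first wrote, $\E[4(a_f-b)^2\operatorname{Var}(\theta\mid X)]$, is exactly the right pointwise tool, and it needs no fourth moment. Your ``contradiction'' came from ignoring the cheap and observational data. At $|h|\asymp1$ they pin $|h|$ to precision $\sqrt{E_N}$, so there $\operatorname{Var}(\theta\mid X)\lesssim\min(E_N,1/k)$, and that prior can at best give the $r_N$ scale.

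To get $k^{-2}$ you need both factors of order $1/k$ at once. That requires a sign pair: $c=0$, $\zeta=0$, fixed $\tau$, and $h=\pm r$ with $r=K/\sqrt k$.
\begin{itemize}
\item All auxiliary laws depend on $h$ only through $h^2$. Hence all sign information is in $M$, and $\operatorname{Var}(\theta\mid X)=r^2\operatorname{sech}^2(krM/V)$.
\item Take $K^2\ge64\max(C_2,A_{\rm ref},1)$ and $m\ge k$. Learning MSE, reference MSE and Chebyshev then give an event of mixture probability at least $1/4$. On it, $a_f-b\ge r/4$ and $|\sqrt k M|$ is bounded.
\item Hence the risk is $\gtrsim r^4=K^4/k^2$, uniformly over feasible learners.
\item When $m<k$, $r_N=1/k$ already dominates $k^{-2}$.
\end{itemize}

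\textbf{The $r_N$ sketch.} Your first perturbation, which holds $\theta$ fixed, gives zero gain gap. At a shared transcript, $G_f$ depends on $q$ only through $\theta$. The pairs must instead move $h$ with $c=0$, compensating the change in $h^2$ within $V$ by nothing, by $\tau$, or by $\zeta$. You also need an explicit event to keep $a_f-b$ away from zero in the shared-transcript overlap inequality. An example is $\{\sigma b\le0,\ |a_f-\sigma r_0|\le r_0/2\}$, with the sign $\sigma$ fixed in advance.
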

For clean $m=n=k^3$, the oracle potential has assessment risk $k^{-3}$, while every repair with the stated $1/k$ learning accuracy incurs gain-assessment risk at least of order $k^{-2}$. The next theorem characterizes how this obstruction recedes as the learning allowance increases.

\begin{theorem}[Leading-log frontier]\label{thm:sharp}
Consider budget and contamination-bound sequences with
\[
 E_k=\Delta_k^2+(n_k+k)^{-1}+m_k^{-1},\quad k^2E_k\to0,
 \quad \eta_k\to0,\quad z_k=k\eta_k\to\infty.
\]
Put $\ell_k=\log(1/(k^2E_k))$. For both sampling classes,
\begin{equation}\label{eq:frontier}
 \frac{\log(1/(k^2\mathcal R_k(\eta_k)))}
      {\min\{\ell_k,2z_k/U\}}\longrightarrow1.
\end{equation}
Algorithm~\ref{alg:gate}, with its declared fast slack schedule, attains this leading logarithm. If $\ell_k=o(k)$, every fixed $C>U/2$ gives
$\mathcal R_k(C\ell_k/k)\asymp E_k$, while $\mathcal R_k(\eta_k)=O(E_k)$ requires
$\liminf k\eta_k/\ell_k\ge U/2$.
\end{theorem}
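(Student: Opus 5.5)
The plan is to prove matching upper and lower bounds on the log-risk, both at the scale $k^2\mathcal R_k\approx \max\{k^2E_k,\ e^{-2z_k/U}\}$ up to factors that are $e^{o(\min\{\ell_k,z_k\})}$. The underlying picture: the gain decomposes as $G_f=2(a_f-b)(\theta-b)-(a_f-b)^2$. A learner that knows $|\theta-b|$ to precision $E_k$ from the magnitude channel (cheap responses give $V$, hence $h^2$ up to $\zeta$; the observational sample gives $c$ and $\tau$) still needs the \emph{sign} of $\theta-b$. That sign comes only from the $k$ diagnostics, whose mean has noise variance $V/k\le U'/k$, with $V$ close to $\tau\le U$ in the worst case $h\to0$. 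The irreducible assessment error is therefore governed by the event that the learner commits to a direction the assessor cannot verify. Under learning allowance $\eta$, the learner may abstain (set $a_f=b$, so $G_f=0$ exactly) whenever the diagnostic $t$-statistic is below a threshold $\lambda$. This costs learning MSE about $(\theta-b)^2\Pr(\text{abstain})$. Committing with a wrong sign costs assessment error of order $k^{-2}$ times a Gaussian tail of the form $e^{-\lambda^2/2}$ at the relevant scale.

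\textbf{Upper bound (Algorithm~\ref{alg:gate}).} I would first estimate $\hat A\approx(\theta-b)^2$ using the cheap, observational, and reference data with error $O(E_k)$, uniformly over $q$ and over references satisfying \eqref{eq:reference}. This step reuses the direct construction behind the upper bound of Theorem~\ref{thm:quality}. The learner then outputs $a=b+\operatorname{sgn}(\bar Y-b)\sqrt{\hat A}$ when $|\bar Y-b|\ge \sqrt{\hat A}+s_k$, and $a=b$ otherwise. The assessor outputs $\hat A$ times the committed indicator, using the identity $G=A$ when the sign is right and $G\approx-3A$ when it is wrong.

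The learning MSE splits into two parts. The abstention part satisfies $A\cdot\Pr(\text{abstain})\le \sup_{A}A\,\Phi\bigl(\cdot\bigr)$. Optimizing over $A$ shows that the worst-case $A$ is of order $\lambda^2U/k$, which forces $\lambda^2\lesssim 2z_k/U$ to first relative order; this is exactly where the constant $U/2$ enters. The assessment error is then $O(E_k)$ plus $A^2\Pr(\text{wrong sign})\lesssim k^{-2}\lambda^4e^{-\lambda^2 /2\cdot(\ldots)}$. The fast slack schedule $s_k$ is chosen so that polynomial factors in $\lambda$ are $e^{o(\lambda^2)}$. Taking logs gives $\log(1/(k^2\mathcal R))\ge(1-o(1))\min\{\ell_k,2z_k/U\}$. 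The unknown nuisances $\tau,\zeta$ enter only through $V\le\tau+h^2+\Delta$, and the case $\tau=U$ is worst, so the bound is uniform over $q$.

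\textbf{Lower bound.} I would use a two-point (sign-flip) prior: $\theta-b=\pm\sqrt{A_0}$ with $A_0=\lambda^2U/k$, $\tau=U$, and $h$ chosen so that both hypotheses induce nearly identical cheap and observational laws. The magnitude channel then carries no sign information, and the remaining indistinguishability of magnitude is of order $E_k$, which gives the $E_k$ term via Theorem~\ref{thm:quality}'s lower bound. Any feasible learner must commit often at $A_0$, since otherwise its MSE exceeds $\eta$. Meanwhile $G_f$ takes values differing by order $A_0$ under the two hypotheses on the same transcript. A Le Cam / Bayes argument on the Gaussian likelihood ratio of $\bar Y$ then shows that the posterior mass on the wrong sign, on the commit region, is at least $e^{-(1+o(1))\lambda^2/2}$, which forces risk $\gtrsim A_0^2e^{-(1+o(1))2z_k/U\cdot(\ldots)}$. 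For adaptive sampling, I would invoke the paper's Gaussian simulator reduction, which costs one extra cheap observation and does not change $E_k$ to first order.

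\textbf{Corollaries.} The statements $\mathcal R_k(C\ell_k/k)\asymp E_k$ for $C>U/2$, and the necessity of $\liminf k\eta_k/\ell_k\ge U/2$, follow by plugging $z_k=C\ell_k$ into \eqref{eq:frontier}. The first requires the sharper non-logarithmic upper bound with slack $o(\ell_k)$, which is where $\ell_k=o(k)$ is used.

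The main obstacle is the lower bound: matching the constant $U/2$ exactly. The hard part is showing that no learner, including randomized ones or ones using partial shrinkage rather than full abstention, can beat the Gaussian tail exponent. My approach there would be to reduce to a one-dimensional testing problem on $\bar Y$ and optimize the learner's decision rule pointwise by a Neyman–Pearson-type argument.
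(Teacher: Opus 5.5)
Your outline has the right qualitative picture, but there is a genuine gap. The theorem is precisely the constant $2/U$, and neither half of your argument produces it.

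\textbf{Upper bound.} Your explicit learner commits only when $|\bar Y-b|\ge\sqrt{\hat A}+s_k$. Under the correct sign, $\bar Y-b\approx\sqrt A+\xi$ with $\xi\sim N(0,V/k)$, so commitment requires $\xi\gtrsim s_k$, which has probability at most about $1/2$. The learning MSE is then at least about $A/2$, which is of order one when $|h|\asymp1$, so the rule is infeasible for $\eta_k\to0$.

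Flipping the slack to $\sqrt{\hat A}-s_k$ does not rescue it. Feasibility at $A$ of order one forces $s_k\sqrt k\to\infty$. Then, with no magnitude gate, the threshold vanishes at $A\asymp1/k$ and a wrong-sign commitment has constant probability. The sign-pair bound then gives risk $\gtrsim k^{-2}$ for every assessor, i.e.\ exponent $O(1)$.

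Your bookkeeping with a fixed $t$-statistic threshold $\lambda$ does not reach $2/U$ either. With your own numbers ($\lambda^2\lesssim2z_k/U$, tail $e^{-\lambda^2/2}$) it gives $z_k/U$. A pure $t$-statistic gate in fact costs $\approx\lambda^2V/k$ in learning MSE, which forces $\lambda^2\lesssim z_k/U$ and yields only the magnitude-only exponent $z_k/(2U)$. The factor hidden in your $(\ldots)$ is the entire content of the result.

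The missing idea is the two-gate structure of Algorithm~\ref{alg:gate}:
\begin{itemize}
\item a magnitude gate $Q\ge t^2=(1-\epsilon)\eta$, which spends essentially all of the allowance on abstention when $h^2<t^2$;
\item a diagnostic threshold $\sqrt Q-d(Q)$ lying slightly \emph{below} the estimated magnitude.
\end{itemize}
Correct-sign rejection then needs $\operatorname{sign}(h)\xi<-d(Q)/2$, whose conditional probability is at most $\epsilon\eta/(8Q)$; the slack absorbs this, and for the fast schedule the remaining terms are $o(\epsilon)$. An accepted wrong sign needs $\operatorname{sign}(h)\xi\le-(2-o(1))r$ with $kr^2\gtrsim(1-\epsilon)z_k$. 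Its tail is $\exp\{-(2-o(1))kr^2/V\}$, which gives $2/U$ via $V\le U+r^2+\Delta_k$ once a fraction of the exponent absorbs the factor $r^4$. Uniformity follows from the envelope $(g-G)^2\le12\delta^2+48IQh^2\ind_W+12R^2e^2$ and a split on $|\delta|>\alpha r^2$, $|e|>\alpha r$. This split avoids assuming independence of $Q$ and $b$.

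\textbf{Lower bound.} The exponent is misstated here as well. You cannot prescribe $\theta-b=\pm\sqrt{A_0}$, because $b$ is random; instead fix the legal pair $c=0$, $\tau=U$, $\zeta=0$, $h=\pm r$, and control $|b|$ through $\beta/\eta_k\to0$. Under the positive sign the log likelihood ratio is $2s+2\sqrt s\,Z$ with $s=kr^2/(U+r^2)$. The typical posterior weight of the wrong sign is therefore $e^{-2s}\approx e^{-2\lambda^2}$ at your $A_0=\lambda^2U/k$, not $e^{-\lambda^2/2}$. With your $\lambda^2=2z_k/U$ you get $A_0=2\eta_k$, and the honest bound then gives only $4z_k/U$.

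The pair must sit just above the allowance, at $r^2=(1+\varepsilon)\eta_k$. Markov applied to the learning MSE gives $P_+(|a_f-r|\le dr)\ge1-1/((1+\varepsilon)d^2)>0$ for fixed $d\in((1+\varepsilon)^{-1/2},1)$. Intersecting with $|b|\le\frac{1-d}2r$ and $\log\Lambda\le2s+2\sqrt sH_0$ yields $a_f-b\ge\frac{1-d}2r$ on an event of fixed positive probability. The pointwise identity $p(g-G_+)^2+(1-p)(g-G_-)^2=(g-g_B)^2+p(1-p)(G_+-G_-)^2$ then bounds every assessor's maximum risk below by $\E_+[8r^2(a_f-b)^2/(1+\Lambda)]\gtrsim k^{-2}z_k^2e^{-2s-2\sqrt sH_0}$. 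This holds uniformly over feasible learners and without fourth moments. No Neyman--Pearson optimization over randomized or shrinking learners is needed, so the obstacle you anticipate does not arise. Send $\varepsilon\downarrow0$ only after taking limits.

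\textbf{Necessity clause.} This is not a direct substitution into \eqref{eq:frontier}, because that display's hypotheses are not given there. First, the magnitude-only exponential bound forces $z_k\to\infty$. Then, on subsequences with bounded $k\eta_k/\ell_k$, the assumption $\ell_k=o(k)$ gives $\eta_k\to0$, after which the $2/U$ lower bound applies.
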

The exponent grows linearly with learning allowance until it reaches the precision supplied by magnitude information (Figure~\ref{fig:frontier}). Equation~\eqref{eq:frontier} is a relative first-order statement about logarithms; it does not assert multiplicative equivalence of the underlying risks. The coefficient $U/2$ is sharp away from equality. At equality, the signal's own contribution to experimental variance matters.

\begin{figure}[t]
\centering
\includegraphics[width=.88\linewidth]{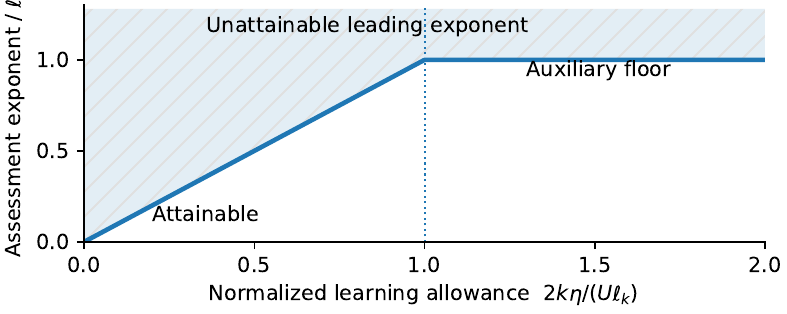}
\caption{Leading-log frontier, normalized by $\ell_k$. The admissible assessment exponent saturates at auxiliary precision $E_k$. The curve depicts Theorem~\ref{thm:sharp}, not an exact finite-sample risk. Theorem~\ref{thm:window} shifts the first-order center $U\ell_k/2$ to $z_c=U\ell_k/(2-\ell_k/k)$. That correction can exceed the stated allowance window even when $\ell_k=o(k)$; it is not discarded in the refined result.}
\label{fig:frontier}
\end{figure}

\begin{theorem}[Asymptotic critical allowance window]\label{thm:window}
Suppose $k^2E_k\to0$ and $\ell_k=o(k)$, and define
\[
 z_c=\frac{U\ell_k}{2-\ell_k/k},\qquad w_k=\sqrt{\ell_k\log\ell_k}.
\]
Reaching $\mathcal R_k(\eta_k)=O(E_k)$ requires $k\eta_k\ge z_c-O(w_k)$.
Conversely, for a sufficiently large fixed $C_{\rm win}$, Algorithm~\ref{alg:gate} attains $\mathcal R_k(\eta_k)\asymp E_k$ at $k\eta_k=z_c+C_{\rm win}w_k$.
If additionally $\ell_k^3/(k^2\log\ell_k)\to\infty$, then the uncorrected allowance $\eta_k=U\ell_k/(2k)$ yields $\mathcal R_k(\eta_k)/E_k\to\infty$.
\end{theorem}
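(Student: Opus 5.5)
The approach mirrors the two-sided structure of Theorem~\ref{thm:sharp}, but carries the Gaussian tail computations to second order instead of only to leading logarithm. The mechanism behind the frontier is the following. A learner with allowance $\eta$ can afford to abstain (return a magnitude-based repair of sign fixed by $b$, or the reference) only when the diagnostic sign is unreliable. The assessment error is then driven by the probability that the learner commits to the wrong sign of $\theta-b$ while the assessor cannot detect it. This probability is a Gaussian tail $\exp(-k\theta_0^2/(2V))$ evaluated at the least favorable effect size $\theta_0$. We must localize where this tail equals $k^2E_k$, that is, where $k\theta_0^2/(2V)=\ell_k$, in the presence of the learning cost $\eta\approx\theta_0^2\cdot P(\text{abstain})$. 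We will also keep the variance $V=\tau+h^2+\zeta$, which contains the signal's own contribution $h^2\approx\theta_0^2$ when $c$ is small. That contribution produces the correction $2-\ell_k/k$: with $\theta_0^2=2V\ell_k/k$ and $V=U+\theta_0^2$, solving gives $k\theta_0^2=2U\ell_k/(2-2\ell_k/k)$ to first order. After normalizing by the learning scale we obtain $z_c=U\ell_k/(2-\ell_k/k)$, and I expect the algebra to confirm this exact form.

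\textbf{Lower bound.} We take the two-point (or small-prior) construction used for Theorem~\ref{thm:sharp}: $\tau=U$, $\zeta$ chosen to absorb the magnitude information up to $\Delta_k$, and $\theta=\pm\theta_0$ with $h$ carrying the effect. The parameters are tuned so that the two hypotheses are indistinguishable through the cheap channel and the observational channel at precision $E_k$. Any feasible learner must then have $\E(a_f-\theta)^2\le\eta$ under both signs. A change-of-measure (Neyman--Pearson) inequality on the diagnostic likelihood ratio gives the following: if $k\eta<z_c-Cw_k$, then on an event of probability at least $k^2E_k\cdot e^{C'\sqrt{\ell_k\log\ell_k}}$ the learner's repair is non-negligibly correlated with the wrong sign. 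On that event $G_f$ differs by order $\theta_0^2\asymp\ell_k/k$ between hypotheses that the assessor cannot separate. The resulting risk is $\gtrsim (\ell_k/k)^2\cdot(\text{tail})\gg E_k$.

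The width $w_k=\sqrt{\ell_k\log\ell_k}$ arises because we must optimize over a window of effect sizes $\theta_0$ rather than a single point. The Mills-ratio polynomial prefactors ($\ell_k^{\pm1/2}$ and powers of $\ell_k$ coming from $\theta_0^4$) shift the log-tail by $O(\log\ell_k)$. Translating such a shift into $z$ through the square-root relation between $\theta_0$ and $\ell_k$ costs $O(\sqrt{\ell_k\log\ell_k})$ after the union over the window.

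\textbf{Upper bound.} We run Algorithm~\ref{alg:gate} with slack tuned so that its abstention threshold sits at $z_c+C_{\rm win}w_k$. We reuse the risk decomposition from the proof of Theorem~\ref{thm:sharp}, namely magnitude error $O(E_k)$ plus sign-tail term. For large $C_{\rm win}$, the extra $w_k$ in the exponent beats every polynomial-in-$\ell_k$ prefactor uniformly over the unknown nuisance parameters, including the $V$-dependence. This makes the sign-tail term $o(E_k)$ while the learning MSE remains $\le\eta_k$.

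\textbf{Uncorrected allowance.} At $z=U\ell_k/2$ the gap to $z_c$ is $z_c-U\ell_k/2\asymp \ell_k^2/k$. Under $\ell_k^3/(k^2\log\ell_k)\to\infty$ we have $\ell_k^2/k\gg w_k$, so the lower-bound part applies with diverging margin, and $\mathcal R_k/E_k\to\infty$ follows.

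\textbf{Main obstacle.} The hard step is the \emph{joint} tail-stability argument in the lower bound. We need a uniform second-order Gaussian tail estimate that controls the learner's wrong-sign probability and the assessor's indistinguishability simultaneously, with the variance itself depending on $\theta_0$. First I would try to reduce to a one-dimensional sufficient statistic (the diagnostic mean) and apply exact Gaussian likelihood-ratio bounds with explicit Mills-ratio remainders. For the adaptive class, I would transfer the result via the simulator argument of Theorem~\ref{thm:sharp}.
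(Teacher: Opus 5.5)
Your three-part architecture matches the paper's: a finite sign lower bound, a finite upper bound for the gate, and a comparison at $U\ell_k/2$. The last step is fine once necessity holds along subsequences, with a constant depending only on the risk bound. The two core steps, however, do not go through as you describe them.

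\textbf{Upper bound.} You cannot reuse the decomposition from Theorem~\ref{thm:sharp}. That decomposition splits off $\{|Q-h^2|>\alpha h^2\}$ and $\{|c-b|>\alpha|h|\}$ at cost of order $E_k/\alpha^2$. On the complement it only obtains the wrong-sign exponent $B_\alpha kh^2/V$, with $2-B_\alpha\asymp\alpha$. At resolution $w_k$ the exponent must be within $O(\sqrt{z\log z})$ of $\psi_k(z)=2z/(U+z/k)$. That forces $\alpha\lesssim\sqrt{\log z/z}$, and then the nuisance term is of order $E_kz/\log z\gg E_k$. So the obstacle is not polynomial prefactors. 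It is a nuisance multiplier that diverges as the splitting parameter shrinks.

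The missing ingredient is the joint tail-stability inequality \eqref{eq:joint-tail-upper}. Conditionally on $(Q,b)$, the weighted wrong-sign term at $(h^2,Q,c-b)$ is compared with its value at $(w,w,0)$, $w=\max\{h^2,t^2\}$, with additive error $O((Q-h^2)^2+(c-b)^2)$; this is \eqref{eq:joint-comparison}. The comparison works because $\sqrt{XY}\,\sqrt{\Phi(-\cdot)}$ has bounded partial derivatives, which follows from $s|H'(s)|\le3$ for $H(s)=\sqrt{\Phi(-s)}$. It yields an $E$-coefficient independent of the slack and exponent $B_*kw/(U+\Delta+w)$ with $2-B_*\le2a_*=O(\sqrt{\log z/z})$.

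With the declared schedule $\epsilon=\sqrt{\log z/z}$ (not a tuned slack), you then get $\mathcal R_k\le C_0E_k+C_1k^{-2}z^2\exp\{-\psi_k(z)+C_2\sqrt{z\log z}+C_3z\Delta_k\}$. Here $z\Delta_k\to0$ because $\Delta_k\le\sqrt{E_k}$. Learning feasibility follows because all remainders are $o(\epsilon)$. Joint tail stability is therefore needed here, not in the lower bound.

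\textbf{Lower bound.} Your heuristic exponent $k\theta_0^2/(2V)$ is the magnitude-only sign-error exponent. Solved with $V=U+\theta_0^2$, it puts the threshold near $2U\ell_k$, four times too large and contradicted by the upper bound. Your algebra reaching $z_c$ does not follow from it. The governing quantity is the posterior odds. Take $c=0$, $\tau=U$, $\zeta=0$, $h=\pm r$; the auxiliary and observational laws then coincide exactly, so no $\zeta$ compensation is needed. Then $\log\Lambda=2s+2\sqrt sZ$ with $s=kr^2/(U+r^2)$, which is exactly what produces $\psi_k$.

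Your account of the width is also wrong. Mills-ratio prefactors shift the log-risk by $O(\log\ell_k)$, hence $z$ by only $O(\log\ell_k)$, since $\psi_k'$ is eventually between $1/(2U)$ and $2/U$; no square-root conversion occurs. The $\sqrt{\ell_k\log\ell_k}$ comes from learner uniformity. At $r^2=(1+\varepsilon)\eta$ the constraint forces $|a_f-r|\le dr$ only with probability $\kappa\asymp\varepsilon$, and an arbitrary feasible learner may place that event where $\log\Lambda$ is atypically large. You must therefore intersect it with $\{\log\Lambda\le2s+2\sqrt sH\}$, $H\asymp\sqrt{\log(1/\varepsilon)}$. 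The pointwise posterior square identity then gives $k^2\mathcal R_k\gtrsim\varepsilon^3z^2e^{-2s-2\sqrt sH}$. Balancing $2\varepsilon z/U$ against $2\sqrt sH$ via $\varepsilon=\sqrt{\log z/z}$ produces the $\sqrt{z\log z}$ loss, which the mean value theorem on $\psi_k$ converts into the window. A single effect size suffices; no union over effect sizes is involved.
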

This localizes the sufficient and necessary allowances; it does not identify an exact second-order coefficient or settle attainability at $z_c$. Indeed,
$z_c-U\ell_k/2=U\ell_k^2/(4k-2\ell_k)$, which can dominate $w_k$ within $\ell_k=o(k)$.
Under $k^2E_k\to0$, the same sign lower bound implies that consistent learning together with assessment at full auxiliary precision is possible exactly when $\ell_k=o(k)$ (Appendix~\ref{app:relaxation}).

%% file: sections/construction.tex
\section{Diagnostic abstention: construction and proof mechanism}\label{sec:construction}
Let $M$ be the direct diagnostic mean. Estimate $\tau$ by the full-sample OLS residual sum of squares divided by $m-1$, using the \emph{raw}, un-clipped fit. Pool cheap energy and centered diagnostic energy:
\begin{equation}\label{eq:pooled}
 \widehat V=\frac{\sum_{i=1}^n C_i^2+\sum_{j=1}^k(D_j-M)^2}{\nu},
 \qquad \nu=n+k-1,\qquad Q=\clip(\widehat V-\widehat\tau,0,R^2).
\end{equation}
Gaussian orthogonal decomposition gives $\nu\widehat V/V\sim\chi_\nu^2$ and independence of $M$ from $(Q,b)$. In particular,
$\E(Q-h^2)^2\le C_Q E_N$ with $C_Q=\max(1,4V_*^2)$. Diagnostic residuals remain informative when there are no cheap observations.

\begin{algorithm}[htbp]
\centering
\begin{minipage}{.97\linewidth}
\hrule\medskip
\textbf{Diagnostic-abstention repair and gain assessment}
\smallskip

\textbf{Inputs:} $(b,M,Q)$; diagnostic count $k$; learning allowance $\eta$; known variance bound $V_*$.\par
\textbf{Slack:} $z=k\eta$; set $\epsilon=\min\{1/4,\sqrt{\log z/z}\}$ for $z>e$, and $\epsilon=1/4$ otherwise.\par
\textbf{Magnitude gate:} Set $t^2=(1-\epsilon)\eta$. If $Q<t^2$, return $(a,g)=(b,0)$.\par
\textbf{Diagnostic gate:} Otherwise calculate
\[
 d(Q)=\sqrt{\frac{8V_*}{k}\log\frac{8Q}{\epsilon\eta}},\qquad
 T(Q)=\max\{0,\sqrt Q-d(Q)\}.
\]
If $|M-b|<T(Q)$, return $(a,g)=(b,0)$.\par
\textbf{Repair and report:} Return $a=b+\operatorname{sign}(M-b)\sqrt Q$, $g=Q$.
\medskip\hrule
\end{minipage}
\caption{The same declared rule attains the leading exponent and the critical-window upper bound, with eventual uniform learning feasibility in their respective regimes. No unknown effect, variance, or sign is an algorithm input.}\label{alg:gate}
\end{algorithm}

\paragraph{Why gate on diagnostic evidence?}
With exact magnitude $r=|h|$ and an exact reference, a magnitude-only rule accepts either sign whenever $r^2\ge t^2$. A wrong repair then occurs when diagnostic noise crosses approximately $-r$. Requiring $|M-b|\gtrsim r$ moves an \emph{accepted} wrong-sign crossing to approximately $-2r$. Squaring the Gaussian tail distance changes the exponent from $kr^2/(2V)$ to $2kr^2/V$. Correct-sign rejection is controlled by $d(Q)$; its cost is absorbed by the learning slack. This is the mechanism behind the factor $2/U$.

\paragraph{Upper bound with unknown nuisance parameters.}
Writing $e=c-b$, $\delta=Q-h^2$ and $W$ for a wrong sign, the exact gain identity implies the useful envelope
\[
 (g-G)^2\le12\delta^2+48I Qh^2\ind_W+12R^2e^2,
\]
where $I$ denotes acceptance. The first and last terms cost $O(E_k)$. A joint square-root-tail inequality controls the middle term without paying a nuisance multiplier that diverges as $\epsilon$ shrinks. It yields, for every fixed $a_0<2/U$,
\begin{equation}\label{eq:upper-main}
 \sup_q\E_q(g-G)^2\le C_1E_k+C_2k^{-2}e^{-a_0k\eta_k},
 \qquad \sup_q\E_q(a-\theta)^2\le\eta_k
\end{equation}
eventually for Algorithm~\ref{alg:gate}. Appendix~\ref{app:window} proves the finite inequality; Appendix~\ref{app:sharp} shows why the single fast schedule suffices for the whole leading-log regime.

\paragraph{Lower bound that survives learner choice.}
At a shared transcript, changing $\theta$ changes the gain by $2(a_f-b)\,d\theta$. Uniform learning accuracy forces $a_f-b$ away from zero on a positive-probability event at suitable nonzero effects. Feasible pairs that conceal changes in experimental variance, observational calibration, or contamination yield the $E_k$ obstruction. For the exponential obstruction, take $c=0$, $\tau=U$, $\zeta=0$ and $h=\pm r$, with $r^2=(1+\varepsilon)\eta_k$. Magnitude channels have identical laws, while under the positive sign the log likelihood ratio is
\[
 \log\Lambda=2s+2\sqrt s\,Z,\qquad s=\frac{kr^2}{U+r^2},\qquad Z\sim N(0,1).
\]
The learning constraint forces a nontrivial repair despite the residual sign ambiguity. A pointwise posterior square identity gives the lower exponent $2/U$ without assuming fourth moments for arbitrary feasible learners. Keeping the $r^2$ contribution in the denominator gives the refined exponent $\psi_k(z)=2z/(U+z/k)$; solving $\psi_k(z)=\ell_k$ produces $z_c$.

\paragraph{Why adaptive diagnostics do not escape the frontier.}
An orthogonal Gaussian reservoir transforms a fixed experiment with counts $(m,n+1,k)$ into the exact transcript of any policy capped at $(n,k)$. It leaves $O$ unchanged and preserves the simulated learner's gain and learning error. Therefore
\begin{equation}\label{eq:simulation-sandwich}
 \mathcal R_{\rm fix}(m,n+1,k;\eta)
 \le\mathcal R_{\rm ad}(m,n,k;\eta)
 \le\mathcal R_{\rm fix}(m,n,k;\eta).
\end{equation}
The one-sample change perturbs $\ell_k$ by $o(1)$, preserving both leading results. Appendix~\ref{app:simulation} proves the simulator and explains why removing that extra sample does not generally give an exact experiment comparison.

%% file: sections/targets.tex
\section{Changing the target changes the information limit}\label{sec:targets}
The frontier concerns actual trained gain. Replacing it by a population or oracle target changes the problem even though every sample is unchanged. For a prescribed target $T$, write $\mathcal R(T)=\inf_g\sup_q\E_q(g-T)^2$, with an additional policy infimum in the adaptive case. There is no learner infimum unless explicitly stated.
\begin{theorem}[Target-dependent assessment rates]\label{thm:targets}
Let $K=\min(k,m)$. For population gain $H=h^2$ and fitted oracle potential $A=(\theta-b)^2$, the fixed and adaptive minimax MSEs have orders
\[
 \mathcal R(H)\asymp\min(E_N,K^{-1}),\qquad
 \mathcal R(A)\asymp\min(E_N,k^{-1}).
\]
For the fixed direct learner $a=\clip(M,-C_0-R,C_0+R)$,
\[
 \mathcal R_{\rm fix}(G_a)\asymp\min(E_N,k^{-1})+k^{-2}.
\]
The first two statements extend to $k=0$ with $1/0=\infty$ and $n+k\ge16$.
\end{theorem}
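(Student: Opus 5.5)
Each of the three statements is proved as a matched pair: an upper bound from an explicit estimator in the fixed direct experiment, and a two-point (Le Cam) lower bound. The lower bounds are transferred to adaptive policies through the simulation sandwich \eqref{eq:simulation-sandwich}, since one extra cheap sample changes $E_N$ only by a constant factor. Every rate is a minimum of two candidates, so each upper bound is the better of two estimators. The $E_N$-branch uses the magnitude channel $Q$ from \eqref{eq:pooled}. The $k^{-1}$ (or $K^{-1}$) branch uses diagnostic and observational means directly.

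\textbf{Upper bounds.} For $H$, the first estimator is $Q$, with $\E(Q-h^2)^2\le C_QE_N$ as stated after \eqref{eq:pooled}. The second is $\clip(M-\hat c,\,-R,R)^2$ with $\hat c$ the OLS slope. Since $M-\hat c$ estimates $h$ with variance $\lesssim k^{-1}+m^{-1}$, bounded $h$ gives $\E(\widehat{h^2}-h^2)^2\lesssim K^{-1}$.

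For $A=(\theta-b)^2$, $b$ is observed, and $(\clip(M)-b)^2$ has error $\lesssim k^{-1}$. For the $E_N$ branch I would write $A=h^2+2h(c-b)+(c-b)^2$. The term $(c-b)^2$ has second moment $\lesssim m^{-2}$ by \eqref{eq:reference} and boundedness. The cross term is the delicate one: $h(c-b)$ has second moment $\lesssim m^{-1}\le E_N$, so reporting $Q$ alone already costs only $O(E_N)$.

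For $G_a$, use the exact identity \eqref{eq:gain}. With $a-b$ known, estimate $\theta-b$ either by $\pm\sqrt Q$ with the sign taken from $M-b$, or by $M-b$ itself. This reduces to bounding $(a-b)^2$ times the error in $\theta-b$. The $k^{-2}$ term comes from the quadratic noise $(a-\theta)^2$, whose conditional mean $V/k$ is known up to $O(E_N/k)$ and whose variance is of order $k^{-2}$.

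\textbf{Lower bounds.} For $H$ and $A$, I would construct two parameter pairs.
\begin{itemize}
\item A pair with the same $(\theta,c)$ but $\zeta$ differing by $\Delta$ and $\tau$ adjusted, so that all laws coincide. This gives $\Delta^2$.
\item A variance perturbation of $h^2$ by $(n+k)^{-1/2}$ hidden in $\tau$ or $\zeta$. This gives the $(n+k)^{-1}$ term.
\item A perturbation of $c$ of size $m^{-1/2}$ with $\theta$ fixed. This gives $m^{-1}$, and for $H$ (where $\theta$ can be moved) the $K^{-1}$ cap.
\end{itemize}
For $A$ the cap is $k^{-1}$ rather than $K^{-1}$, because $b$ is observed. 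Moving $\theta$ by $k^{-1/2}$ at $\theta-b$ of order one changes $A$ by order $k^{-1/2}$. I must check these perturbations respect $\tau\in[L,U]$ and $\zeta\le\Delta$, choosing interior $\tau$.

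The $k^{-2}$ floor for $G_a$ comes from a second-order two-point argument in the Gaussian diagnostic mean. Since $G_a$ depends on $M$ through $(\theta-a)^2$, its conditional variance given the sufficient statistics visible to $g$ is of order $V^2/k^2$. I would exhibit this by conditioning on $M$: the target at the transcript differs across $\theta$ values consistent with $M$ at scale $k^{-1}$, because the posterior of $\theta$ has spread $k^{-1/2}$.

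\textbf{Main obstacle.} The hardest step is the $E_N$ branch of the $A$ upper bound when $m$ is the binding quantity. There the cross term $2h(c-b)$ cannot be absorbed unless its second moment is at most $m^{-1}$, and the paper explicitly does not assume symmetry of $b$. I would try to bound it directly by Cauchy--Schwarz using \eqref{eq:reference}. If the minimum is instead $k^{-1}$, I would fall back on $(M-b)^2$. A secondary difficulty is showing the $k^{-2}$ lower bound is not beaten by an assessor exploiting $O$; I would handle this by sufficiency of $M$ for $\theta$ given the other channels.
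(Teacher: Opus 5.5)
Your upper bounds for $H$ and $A$ match the paper's, but the lower-bound pairs for $H$ and $A$ do not work as specified.

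\textbf{The lower-bound pairs for $H$ and $A$.} Your first pair keeps $(\theta,c)$ fixed, so $h=\theta-c$ is fixed. Then $H$ and $A=(\theta-b(O))^2$ are identical at both points and the bound is zero; adjusting $\tau$ also changes the law of $O$. This cannot be repaired. For $k\ge1$, $O$ identifies $c$ and the diagnostics identify $\theta$, so any pair that moves $h$ is seen by the diagnostics. A bare $\Delta^2$ lower bound would contradict the $O(k^{-1})$ upper bound of $(\clip(M,-C_0-R,C_0+R)-b)^2$; it is available only at $k=0$.

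The working construction fixes $c=0$ and an interior $\tau_0$, and moves $h$ from an order-one $h_0$ by $d$. The gap is then about $2h_0d$ and the diagnostic KL is $\asymp kd^2$. The three components of $\min(E_N,k^{-1})$ arise from which nuisance absorbs $D_2=h_1^2-h_0^2$:
\begin{enumerate}
\item none: the shift in $V$ is left visible but below the resolution of $n+k$ responses, giving $d\asymp(n+k)^{-1/2}$;
\item $\tau$: observational KL is $\asymp mD_2^2$, giving $d\asymp(m+k)^{-1/2}$;
\item $\zeta$: feasible only when $D_2\le\Delta$, giving $d\asymp\min(\Delta,k^{-1/2})$.
\end{enumerate}
Your second bullet attributes $(n+k)^{-1}$ to hiding the shift in $\tau$ or $\zeta$, which is the wrong mechanism: hiding in $\tau$ yields $(m+k)^{-1}$, and hiding in $\zeta$ requires $\Delta$.

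Your third pair (move $c$, fix $\theta$, with $\tau$ compensating $h^2$) correctly supplies the $m^{-1}$ part of $K^{-1}$ for $H$. At a shared transcript, however, it leaves $A$ unchanged. So the $\min(m^{-1},k^{-1})$ component of $\mathcal R(A)$ needs the $\tau$-compensated pair that moves $\theta$.

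Because $A$ depends on the random $b(O)$, Le Cam must be run at a shared transcript on an event $B$ where the gap is large: $\max_iR_i\ge(g^2/4)[P_0(B)-\mathrm{TV}]$. ``$\theta-b$ of order one'' is not automatic for $m$ near $32$ and an arbitrary accurate reference. The paper handles this by fixing $\sigma$ with $P(\sigma b\le0)\ge1/2$ and taking $h_0=\sigma r_0$.

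\textbf{The $G_a$ part.} You never say where the $\min(E_N,k^{-1})$ part of its lower bound comes from. The paper transfers it from $A$: $A-G_a=(\theta-a)^2$ and $\E(a-\theta)^4\le3V_*^2/k^2$ give $|\sqrt{\mathcal R_{\rm fix}(G_a)}-\sqrt{\mathcal R_{\rm fix}(A)}|\le\sqrt3V_*/k$. This settles the case where $\min(E_N,k^{-1})$ exceeds a large constant times $k^{-2}$, and the floor covers the rest. The same transfer gives the upper bound at once.

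Your $\pm\sqrt Q$ estimator also works, but not by bounding $(a-b)^2$ by a constant. Near $h=0$, $\sqrt Q$ has mean-square error of order $\sqrt{E_N}$ for $|h|$. You need $|a-b|\le|M-\theta|+|h|+|c-b|$, $h^2(\sqrt Q-|h|)^2\le(Q-h^2)^2$, independence of $M$ and $Q$, and $\sqrt{E_N}/k\le E_N+k^{-2}$.

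Your $k^{-2}$ floor rests on ``sufficiency of $M$ for $\theta$ given the other channels,'' which is false in general: the pooled variance carries $h^2$ and $O$ carries $c$. What makes it true is the sign pair $c=0$, common $(\tau,\zeta)$, $h=\pm a_0/\sqrt k$. Under it every non-diagnostic law coincides, diagnostic KL is $O(1)$, and at a shared transcript $G_+-G_-=4a_0(a-b)/\sqrt k$. You still need $|a-b|\ge t_0/\sqrt k$ with probability bounded below. This follows from independence of $M$ and $O$ and anti-concentration of the clipped mean: its interior density is at most $\sqrt{k/(2\pi V)}$, and each clipping atom has mass at most $1/2$.

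\textbf{Minor points.} Your ``main obstacle'' is not one: $\E[h^2(c-b)^2]\le R^2A_{\rm ref}/m\le R^2A_{\rm ref}E_N$ follows directly from \eqref{eq:reference}, with no symmetry needed. Under that contract the second moment of $(c-b)^2$ is only $O(m^{-1})$, not $O(m^{-2})$, which still suffices. Your adaptive transfer via the simulation sandwich is valid here because $H$ and $A$ depend only on $(q,O)$; the paper instead bounds the adaptive KL directly.
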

Table~\ref{tab:targets} exhibits both strict separations; it is not a comparison of three identical learner optimizations. When diagnostics are plentiful, the fitted target avoids a floor incurred by the population-reference target. When magnitude information is plentiful, replacing a learned repair by its oracle removes an assessment obstruction instead.
\begin{table}[t]
\centering
\caption{Same data, explicitly different decision classes. Clean data ($\Delta=0$); entries are assessment-MSE orders. The first two columns optimize only assessment (and policy if adaptive); the last fixes both the direct design and the clipped-mean learner. Joint learner selection is the separate problem in Theorem~\ref{thm:quality}.}
\label{tab:targets}
\begin{tabular}{lccc}
\toprule
Budgets $(m,n,k)$ & Population $H$ & Fitted oracle $A$ & Learned gain $G_a$\\
Decision class & Target only & Target only & Fixed clipped mean\\
\midrule
$(t,t^2,t^2)$ & $t^{-1}$ & $t^{-2}$ & $t^{-2}$\\
$(t^3,t^3,t)$ & $t^{-3}$ & $t^{-3}$ & $t^{-2}$\\
\bottomrule
\end{tabular}
\end{table}

A simple transfer bound clarifies where the targets agree:
\[
 \left|\sqrt{\mathcal R_{\rm fix}(G_a)}-\sqrt{\mathcal R_{\rm fix}(A)}\right|
 \le\frac{\sqrt3V_*}{k}.
\]
This follows from $A-G_a=(\theta-a)^2$ and the clipped mean's fourth error moment. It gives equivalent risks when $k^2r_N\to\infty$, but cannot establish the additional $k^{-2}$ lower bound. That bound requires a sign pair. Proofs, extensions to $|h|^p$ with $1<p<2$, and acquisition-cost consequences appear in Appendices~\ref{app:strength}--\ref{app:cost}.

%% file: sections/experiments.tex
\section{Numerical evidence}\label{sec:experiments}
We retain seven mechanism/baseline suites (241 independently seeded cells; 4.34 million draws) and add 1,920 finite-budget configurations and 42 variance-bound configurations (23.54 million draws). Methods within a cell share exact Gaussian sufficient-statistic draws; every report is scored against its own realized reference and learner. Conditional Gaussian integration resolves rare diagnostic errors. These are finite-grid procedure risks, not estimates of minimax suprema or uniform feasibility. Code, grids, paired Monte Carlo SEs, log tails and source hashes accompany the paper; Appendix~\ref{app:experiments} gives the full protocols.

\paragraph{The target changes the observed risk scale.}
For clean $m=n=k^3$, $c=0$, $h=.3$ and $\tau=2$, the identical variance report $Q$ assesses $A$ and the clipped-mean repair's $G_a$. Across $k=16$ to $4096$, $k^3$-scaled oracle MSE ranges from 15.38 to 17.57, while $k^2$-scaled learned-gain MSE ranges from 13.08 to 14.21 (Figure~\ref{fig:mechanisms}). Subtracting $\widehat V/k$ improves the learned-gain constant without removing this scale difference. The full grid also includes local effects and a Stein assessor.

\begin{figure}[t]
\centering
\includegraphics[width=.49\linewidth]{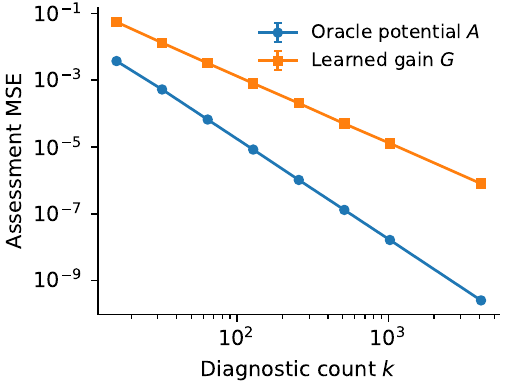}\hfill
\includegraphics[width=.49\linewidth]{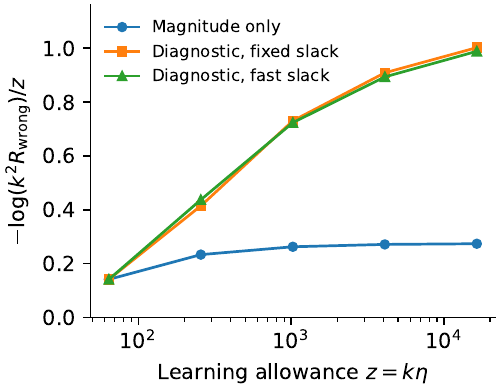}
\caption{Controlled mechanism checks. \textbf{Left:} identical report $Q$, different targets, at $m=n=k^3$, $h=.3$; bars are $\pm2$ Monte Carlo SEs. \textbf{Right:} integrated wrong-sign MSE at $k=2^{24}$, $h^2=1.1\eta>0$, and $z=k\eta$. The fixed-slack pair isolates the diagnostic gate; the fast schedule is Algorithm~\ref{alg:gate}. Tail suppression need not materially reduce total MSE.}
\label{fig:mechanisms}
\end{figure}

\paragraph{Tail exponents and total MSE answer different questions.}
At $k=2^{24}$ and $h^2=1.1\eta$, increasing $z$ from 64 to 16384 raises the fixed-gate normalized wrong-sign exponent from .141 to 1.003, versus .141 to .274 for magnitude-only gating; fast slack reaches .989. Yet total MSE at large $z$ stays near $8E_k$ because calibration dominates. Table~\ref{tab:suite-summary} summarizes \emph{all seven} suites using high-quantile and worst observed MSE ratios. The oracle count rule and joint strength rules have unfavorable cells, and Stein loses to the plug-in at an active clipping boundary. Rate attainment and unbiasedness do not imply finite-sample dominance.
\input{appendix/table_suite_summary}

\paragraph{Moderate budgets, nuisance variation, and reference rules.}
The new grid crosses $k\in\{64,256,1024,4096\}$, $c=\pm.6$, $\tau\in\{1.5,2.5\}$, six signed/zero/local effects, two bounded accurate references, five auxiliary regimes, and allowances at one and four times the clean-path center. The regimes vary $m,n,\Delta,\zeta$, including $n=0$ and contamination .1. Both gates use identical fast slack. Table~\ref{tab:finite-summary} reports joint learning and assessment performance at the larger allowance; all cells, both allowances, and the unprojected fallback are retained in the results.
\input{appendix/table_finite_summary}

On the clean magnitude-rich grid, diagnostic gating changes total MSE by less than $4\times10^{-24}$ relative to matched magnitude gating: sign errors are already negligible wherever its extra threshold acts. Under the stress regimes, gate learning MSE can exceed the allowance; projected fallback has a grid-maximum learning ratio .291 at the displayed allowance, which also meets its theorem's condition $k\eta\ge4V_*$. This numerical maximum is not a proof of uniform feasibility. These comparisons support the mechanism and its limits rather than a claim that the frontier-attaining rule is the best finite-budget pipeline.

\paragraph{Sensitivity to a conservative variance bound.}
With $k=16384$, $h^2=1.1\eta$ and $z=256$, using $V_*=4$ gives log wrong-sign MSE $-130.3$; using $4V_*$ gives $-78.6$ and no positive diagnostic threshold in the nuisance draws. Both total MSEs are $3.75\times10^{-12}$ to three significant digits. The 42-cell ablation varies the bound by factors $1,2,4,16$ without changing data or allowance (Appendix~\ref{app:finite}). A larger valid bound delays tail suppression; improving that tail is useful for total MSE only before calibration dominates.

%% file: appendix/table_suite_summary.tex
\begin{table}[t]
\centering
\small
\caption{All seven original suites: cellwise total-MSE ratios (first report divided by second). P90 and maximum summarize the stated grid, not a minimax supremum. The Cells column gives positive-denominator cells / all cells; omitted cells have both observed MSEs zero. $\dagger$ Changes the learner and its target; every other row compares assessors of the same target.}\label{tab:suite-summary}
\setlength{\tabcolsep}{4pt}
\begin{tabular}{llrrr}
\toprule
Suite & Comparison & Cells & P90 & Max.\\
\midrule
Target scale & Corrected / $Q$ & 16/16 & $0.677$ & $0.681$\\
Diagnostic gate & Diagnostic / magnitude$^\dagger$ & 90/120 & $1$ & $1$\\
Robustness & Projected / raw fallback & 16/24 & $1$ & $1$\\
Residual strength & Joint / mean & 27/27 & $1.14$ & $1.9$\\
Fitted oracle & Selected / clipped unbiased & 24/24 & $3.67$ & $39.4$\\
Pooled budgets & Joint / mean & 18/18 & $1.59$ & $1.83$\\
Clipping / Stein & Stein / plug-in & 12/12 & $1.3$ & $5.01$\\
\bottomrule
\end{tabular}
\end{table}

%% file: appendix/table_finite_summary.tex
\begin{table}[t]
\centering
\small
\caption{Finite-budget pipelines at $k\eta=4z_c^{\rm clean}$: 192 clean, magnitude-rich cells and 768 stress cells (balanced, no-cheap, contamination-bound-only, and contaminated). $L_{\max}$ is the grid maximum of learning MSE divided by $\eta$; $R_{90}$ is P90 of assessment MSE divided by $E_N+k^{-2}$. Each pipeline assesses its own realized repair; only the first two rows share a learner. Values are estimates, not uniform feasibility or dominance certificates.}\label{tab:finite-summary}
\setlength{\tabcolsep}{4pt}
\begin{tabular}{lrrrr}
\toprule
Learner + report & \multicolumn{2}{c}{Clean, rich} & \multicolumn{2}{c}{Stress}\\
 & $L_{\max}$ & $R_{90}$ & $L_{\max}$ & $R_{90}$\\
\midrule
Clipped mean + Stein & $0.174$ & $6.12\times10^{3}$ & $0.178$ & $8.69$\\
Clipped mean + $Q-\widehat V/k$ & $0.174$ & $17.6$ & $0.178$ & $8.27$\\
Magnitude gate + $IQ$ & $0.5$ & $7.6$ & $10.7$ & $16.5$\\
Diagnostic gate + $IQ$ & $0.5$ & $7.6$ & $3.27$ & $14.5$\\
Fallback + projected report & $0.122$ & $21.2$ & $0.291$ & $8.37$\\
\bottomrule
\end{tabular}
\end{table}

%% file: sections/discussion.tex
\section{Scope and practical interpretation}\label{sec:scope}
\paragraph{What survives a different magnitude channel?}
The $k^{-2}$ obstruction uses a sign pair: the auxiliary laws agree at $h=\pm r$, while $k$ noisy diagnostics must distinguish them. It does not use the quadratic variance identity. Proposition~\ref{prop:sign-scope} shows that this lower bound survives arbitrary sign-invariant auxiliary observations and unequal channel variances, provided diagnostic variance stays bounded away from zero and infinity and the reference is sufficiently accurate. For a diagnostic variance tending to $v_0$, the same pair yields the one-sided exponent bound $2/v_0$. These are lower bounds, not a matching generalized frontier.

The attainable $E_k$ frontier needs more. If cheap variance is an \emph{unknown} monotone function of $|h|$, its level need not calibrate $h^2$; the bound $\E(Q-h^2)^2=O(E_k)$ is then unavailable. With unequal experimental variances, the pooled chi-square identity and parameter-independent reservoir no longer apply as written. Unknown directions can also destroy the sign-invariant channel. Thus neither the $2/U$ equality, the critical center, nor the adaptive sandwich is asserted outside the stated experiment. Appendix~\ref{app:scope} separates these dependencies.

\paragraph{A loose variance bound delays the gate.}
Replacing $V_*$ by a larger \emph{fixed} valid bound preserves the leading exponent, but enlarges the diagnostic discount $d(Q)$ and the sufficient-window constants. The diagnostic threshold is positive only if
$kQ>8V_*\log(8Q/(\epsilon\eta))$; otherwise Algorithm~\ref{alg:gate} is exactly the magnitude-only rule on that draw. Near the local threshold, the discount relative to magnitude is of order $\sqrt{V_*\log z/z}$. Thus the asymptotic constant $U/2$ does not imply an effective gate at moderate budgets. The finite-budget ablation measures this delay directly. Bounds growing with $k$ are outside the theorem's fixed-bound assumptions.

\paragraph{Learning accuracy is not a repair-selection guarantee.}
For each parameter, $\E_qG_f=\E_qA-\E_q(a_f-\theta)^2\ge\E_qA-\eta$.
The constraint therefore controls average excess loss relative to the oracle. It guarantees neither positive realized gain nor a bound on $\Pr_q(G_f<0)$; even positive average gain needs $\E_qA>\eta$. Abstention returns the fitted reference, with exactly zero realized gain, while spending learning allowance on the missed improvement. A deployment objective requiring positive gain, a lower confidence bound, or intervention costs defines a different feasible class and needs a separate analysis. The present frontier quantifies the learning cost of assessability under parameter MSE.

%% file: sections/related.tex
\section{Relation to prior work}\label{sec:related}
\paragraph{Assessing a learned rule.}
Loss estimation already distinguishes learning a parameter from assessing an estimator's accuracy. \citet{caiguo} derive lower bounds for the loss of rate-optimal estimators under constraints on their parameter-estimation performance; \citet{fourdrinier} study improved loss estimation for shrinkage rules. \citet{bates} distinguish a fitted model's prediction error from training-averaged error. \citet{gastpar} show accuracy--tightness trade-offs for algorithm-dependent generalization bounds in overparameterized settings. Our target is the realized gain of a jointly selected, quality-constrained repair in a two-channel causal experiment. When oracle potential is sufficiently easy to estimate, a root-risk transfer also connects our gain frontier to learned-loss assessment (Appendix~\ref{app:relaxation}). The contribution is the joint realized-gain criterion together with an experiment-specific leading exponent, an attaining rule, and a corrected allowance window; the distinction between estimation and loss assessment itself is established precedent.

\paragraph{Constrained risk, rejection, and nonlinear functionals.}
Constrained-risk inequalities provide a general route from good performance at one parameter to a lower bound elsewhere \citep{duchiruan}; optimal reject-option rules threshold conditional risk, with possible boundary randomization \citep{franc}. Here the learning constraint acts on the repair, while the competing risk assesses that repair's random gain. Joint control of magnitude error, reference error and accepted wrong-sign tails yields the uniform constant $2/U$ and the critical window. The supporting strength rates follow a modulus calculation \citep{donoholu}. Related functional-estimation problems include nonsmooth functionals \citep{cailow}, sparse linear and quadratic functionals \citep{collier}, and forms estimated from indirect observations \citep{juditsky}.

\paragraph{Intervention design and target-dependent information acquisition.}
\citet{zhangintervention} actively select interventions to move a causal model's post-intervention mean toward a desired target. \citet{cobed} use Bayesian experimental design for information-theoretic contextual optimization. \citet{arumugamvoi} jointly design learning targets and information acquisition through rate-distortion and information-directed sampling. These works optimize what to learn or which action to take for a downstream objective. We instead characterize minimax assessment of the \emph{realized gain of the chosen learner}, subject to a uniform parameter-MSE constraint. Our adaptive result compares statistical experiments for that random target; it neither proposes a generally optimal acquisition function nor rules out adaptive gains in other causal designs.

\paragraph{Confounding and Gaussian experiment comparison.}
Structural sensitivity analysis for multiple treatments \citep{zheng} and confounded-bandit decisions \citep{tennenholtz} address richer forms of causal uncertainty. Here known geometry isolates the magnitude--direction information split. The exact one-extra-sample simulator preserves the original learner's target and its second-moment quality constraint. Its same-budget approximation connects to sample amplification \citep{axelrod}, but a bounded-loss approximation alone would not transfer our unrestricted learner class.

%% file: sections/statements.tex
\section*{Reproducibility statement}
All mathematical statements are accompanied by proofs in the appendices. The supplement includes the complete experiment implementation, fixed parameter grids, independent-cell seeds, paired Monte Carlo standard errors, machine-readable results, numerical tests, and source/result SHA-256 hashes. The reproduction command reruns all nine numerical suites, regenerates every figure and numerical table, and checks the released outputs. Each suite samples the exact Gaussian sufficient-statistic distribution, subject to floating-point arithmetic. The experiments use only synthetic data and a CPU. Log-domain conditional integration is used for rare diagnostic errors. Appendix~\ref{app:experiments} identifies the finite-grid scope and numerical checks.

\section*{AI use statement}
Generative AI tools assisted with research framing, formulation and critical review of mathematical claims, proof writing and refinement, related-work search, experimental design, implementation, synthetic-data simulation code, interpretation of numerical results, and manuscript editing. This assistance included the fast-slack construction, proof review, the sign-invariant lower-bound extension, citation verification, and the executable synthetic experiments and their analysis. The synthetic observations are generated by specified Gaussian and chi-square random-number routines, not by language-model sampling. Verification artifacts include analytic derivations, numerical identity and distributional checks, deterministic replay, and compilation checks; these do not constitute formal proof verification. Responsibility for the submitted claims, code, citations, and disclosure rests with the authors.

\section*{Ethics statement}
The work uses a synthetic structural model and no human-subject or personal data. The reported guarantees concern statistical assessment MSE under explicit model assumptions. They do not certify that a realized intervention is beneficial or safe, and gain estimates should not be used as deployment authorization without an appropriate decision criterion and model validation.

%% file: appendix/roadmap.tex
\section{Proof and implementation guide}\label{app:guide}
The main frontier results share a sign obstruction but make different precision claims. Appendix~\ref{app:scope} proves which part survives a change to the magnitude channel and identifies what remains open at the critical center. Appendix~\ref{app:quality} proves the fixed-quality floor with lower bounds that apply to every feasible learner. Appendix~\ref{app:sharp} proves the leading exponent and the unified fast-slack construction. Appendix~\ref{app:window} supplies its finite joint-tail inequality and the critical-window bounds. Appendix~\ref{app:simulation} transfers these results to adaptive policies while preserving the original learner and reference on the simulated transcript. Appendices~\ref{app:strength} and~\ref{app:proofs} establish the supporting functional and oracle-target rates; Appendix~\ref{app:learned} handles the fixed clipped-mean learner. The magnitude-only bounds in Appendix~\ref{app:relaxation} supply the allowance-dependent lower bound and nuisance lower bound used in Appendix~\ref{app:sharp}, as well as the consistency boundary. Appendix~\ref{app:robust} gives the separate contamination fallback. Appendix~\ref{app:experiments} records all empirical protocols, including the finite-budget stress and variance-bound suites.

\begin{center}
\begin{tabular}{lp{.70\linewidth}}
\toprule
Symbol & Meaning\\
\midrule
$c,h,\theta$ & Observational coefficient, confounding correction, and causal coefficient $\theta=c+h$.\\
$\tau,\zeta,V$ & Observational residual variance, experimental variance increment, and $V=\tau+h^2+\zeta$.\\
$m,n,k$ & Observational count, cheap count/cap, and diagnostic count/cap.\\
$b,a,g$ & Fixed observational reference, trained repair, and reported gain.\\
$H,A,G$ & Population-reference gain $h^2$, fitted oracle potential $(\theta-b)^2$, and actual trained gain $A-(\theta-a)^2$.\\
$E_N,r_N$ & $\Delta^2+(n+k)^{-1}+m^{-1}$ and $\min(E_N,k^{-1})$.\\
$E_k,\ell_k,z_k$ & Sequence version of $E_N$, $\log(1/(k^2E_k))$, and $k\eta_k$.\\
$Q,\delta,e$ & Clipped magnitude-squared estimate, $Q-h^2$, and $c-b$.\\
\bottomrule
\end{tabular}
\end{center}
The notation $E$ below abbreviates $E_N$ when budgets are fixed. The un-clipped difference $\widehat V-\widehat\tau$ is stated explicitly when it is used in a minimum-distance objective; $Q$ in gate proofs is always clipped to $[0,R^2]$. All empirical results refer to the included implementation and result files, not to an external repository.

%% file: appendix/causal.tex
\section{Causal realization and sufficient statistics}\label{app:causal}
\subsection{A structural model with magnitude and directional channels}
For known orthonormal vectors $u,v\in\R^2$, choose known $\rho\in(0,1)$ with $R^2(1-\rho)/\rho<L$. Let independent primitive noises have laws
\[
 U_0\sim N(0,1),\quad \epsilon_W\sim N(0,I-\rho vv^\top),\quad
 \epsilon_Y\sim N\!\left(0,\tau-h^2\frac{1-\rho}{\rho}\right),
\]
and define
\[
 W=\sqrt\rho\,vU_0+\epsilon_W,\qquad
 Y=(c+h)v^\top W-hU_0/\sqrt\rho+\epsilon_Y.
\]
The noise covariance and variance are positive under the stated bounds. Since $W\sim N(0,I)$ and $\E(U_0\mid W)=\sqrt\rho\,v^\top W$, Gaussian conditioning gives
$Y\mid W\sim N(cv^\top W,\tau)$. Thus $Z=v^\top W$ yields the observational experiment; the orthogonal component carries no parameter information. Under the full-vector intervention $\operatorname{do}(W=x)$, the response has mean $\theta v^\top x$ and variance $\tau+h^2$. Independent measurement noise with variance $\zeta$ gives $V$. Interventions along $u$ have zero mean; assignments $xv$ with $|x|\le1$ give the diagnostic channel.

This construction identifies what is being assumed. The geometry is known, experimental measurement noise has the same increment across channels, and the scalar effect contributes quadratically to variance. Without such restrictions, an observed variance difference need not reveal a squared causal correction.

\subsection{Pooled variance for every diagnostic budget}\label{app:pooled}
Let $N=n+k\ge16$. For $k\ge1$ the direct diagnostic mean $M$ and its centered residuals are independent. With $\nu=N-1$, equation~\eqref{eq:pooled} obeys
\[
 \nu\widehat V/V\sim\chi^2_\nu,
 \qquad \widehat V\ \text{independent of }(O,M).
\]
For $k=0$, set $\nu=n$ and use the cheap sum alone. In either case $\nu\ge15N/16$. A single diagnostic contributes no residual degree of freedom. Using the raw OLS fit for the observational residual sum gives
\[
 (m-1)\widehat\tau/\tau\sim\chi^2_{m-1},\qquad
 \E(\widehat V-\widehat\tau-h^2)^2
 =\zeta^2+2V^2/\nu+2\tau^2/(m-1)\le C_QE_N.
\]
Clipping to $[0,R^2]$ decreases this error. Moreover $M$ is independent of $(Q,b)$ for any observational reference satisfying~\eqref{eq:reference}; the gate proofs do not assume independence of $Q$ and $b$.

For the OLS reference used in experiments, one can sample the whole sufficient-statistic law without constructing raw arrays:
\[
 S_Z\sim\chi^2_m,\quad c_{\rm raw}=c+\sqrt{\tau/S_Z}\,Z_c,\quad
 \widehat\tau=\tau S_\tau/(m-1),\quad S_\tau\sim\chi^2_{m-1},
\]
with independent $S_Z,Z_c,S_\tau$. Independently sample cheap energy $V\chi^2_n$, diagnostic residual energy $V\chi^2_{k-1}$, and $M=\theta+\sqrt{V/k}\,Z_M$. These are exact distributional identities, not large-sample approximations. Appendix~\ref{app:experiments} discusses floating-point precision for very large degrees of freedom.

%% file: appendix/scope.tex
\section{Which assumptions support which conclusions?}\label{app:scope}
The attainable frontier is specific to the experiment in Section~\ref{sec:model}.
The following lower-bound extension separates the sign obstruction from the
quadratic variance identity. It is useful even when auxiliary data do not
identify magnitude at all.

\begin{proposition}[Sign-invariant auxiliary information]\label{prop:sign-scope}
Consider an experiment containing, for every sufficiently small $r>0$, two
parameters with $c=0$ and $h=\theta=\pm r$. Let the auxiliary transcript $Y$
have the same law at both signs, and independently observe $k$ direct
diagnostics $D_j\sim N(\pm r,v(r))$, where
$0<v_-\le v(r)\le v_+<\infty$. The fixed reference is a function of $Y$
with $\E b^2\le A_{\rm ref}/m$. Learners may use the full transcript and
visible independent randomization. Let $\mathcal R(\eta)$ denote their joint
gain-assessment risk under uniform learning MSE at most $\eta$ on the
parameter class, for the same gain $G_f=(\theta-b)^2-(\theta-a_f)^2$.
Then:
\begin{enumerate}
\item For every fixed $C_2>0$, if $m\ge k$, there is a constant $c_*>0$
such that $\mathcal R(C_2/k)\ge c_*k^{-2}$ for all sufficiently large $k$.
No information about $r$ is required from the auxiliary channel.
\item If $v(r)\to v_0>0$ as $r\downarrow0$, $\eta_k\to0$,
$k\eta_k\to\infty$, and $A_{\rm ref}/(m_k\eta_k)\to0$, then
\[
 \limsup_k\frac{\log(1/[k^2\mathcal R(\eta_k)])}{k\eta_k}
 \le \frac{2}{v_0}.
\]
This is only a lower bound on risk; auxiliary identifiability and an
attaining procedure are not asserted.
\end{enumerate}
The same lower bounds apply to adaptive diagnostic assignments $|x|\le1$
with at most $k$ responses $N(x\theta,v(r))$, provided all other available
information is sign-invariant and independent before sampling.
\end{proposition}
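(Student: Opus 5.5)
The plan is a two-point argument at the sign pair $q_\pm$ with $c=0$ and $h=\theta=\pm r$. The shared transcript is $X=(Y,D,U)$, where $U$ denotes visible randomization. Because $Y$ and $b$ have the same law under both signs, only the diagnostics $D$ distinguish $q_\pm$.

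\textbf{Reduction to the gain difference.} First I write the gain difference at a common transcript. From \eqref{eq:gain},
\[
 G_f(q_+,X)-G_f(q_-,X)=(r-b)^2-(r+b)^2-(r-a_f)^2+(r+a_f)^2=4r(a_f-b).
\]
For any assessor $g$, the standard mixture bound gives
\[
 \sup_\pm\E_\pm(g-G_\pm)^2\ge\tfrac12\E_+\big[(g-G_+)^2+\Lambda^{-1}(g-G_-)^2\big],
\]
where $\Lambda=dP_+/dP_-$ depends only on $D$. Minimizing the bracket pointwise in $g$ yields the posterior square identity
\[
 \frac{\Lambda^{-1}}{1+\Lambda^{-1}}\,16r^2(a_f-b)^2.
\]
Hence
\[
 \mathcal R\ge 8r^2\,\E_+\!\left[\frac{(a_f-b)^2}{1+\Lambda}\right].
\]
This is pointwise, so no moments of $a_f$ beyond this expression are needed.

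\textbf{Forcing $a_f-b$ away from zero.} Next I use the learning constraint at both signs. Since $\E_\pm(a_f\mp r)^2\le\eta$ and $\E b^2\le A_{\rm ref}/m$, the triangle inequality gives $\E_\pm(a_f-b\mp r)^2\le(\sqrt\eta+\sqrt{A_{\rm ref}/m})^2=:\eta'$. On the event $S=\{|a_f-b|\ge r/2\}$ we have $(a_f-b)^2\ge r^2/4$. Outside $S$, Markov's inequality gives $P_\pm(S^c)\le 4\eta'/r^2$. I also need a lower bound on $(1+\Lambda)^{-1}$ on a $P_+$-event where $\Lambda$ is not too large. The simplest route is to work with the symmetric sum
\[
 \E_+[(a_f-b)^2/(1+\Lambda)]=\E_-[(a_f-b)^2\Lambda/(1+\Lambda)]\ge\tfrac12\E_-[(a_f-b)^2\ind\{\Lambda\ge1\}].
\]
By Gaussian symmetry, the event $\{\Lambda\ge1\}$, i.e. $\bar D\ge0$, has $P_-$-probability $\bar\Phi(r\sqrt{k/v})$.

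\textbf{Part 1.} Choose $r=\sqrt{C'/k}$ with $C'$ large enough that $4\eta'/r^2\le 4C_2/C'+o(1)$ is small; $m\ge k$ keeps $A_{\rm ref}/m$ of order $1/k$. Then $\{\Lambda\ge1\}\cap S$ has $P_-$-probability bounded below by a constant $p_0>0$. This holds because $P_-(\bar D\ge0)\ge\bar\Phi(\sqrt{C'/v_-})$ is a fixed positive number, provided $4\eta'/r^2<\bar\Phi(\cdot)/2$. The one real tension is that $C'$ appears on both sides. To resolve it I would instead use $\{\Lambda\ge e^{-t}\}$ with fixed $t$, or simply note that $\eta'/r^2\asymp C_2/C'$ decays polynomially while $\bar\Phi(\sqrt{C'/v_-})$ decays like $e^{-C'/2v_-}$. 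That trade-off fails, so the fix is to keep the weight $(1+\Lambda)^{-1}$ rather than an indicator. Concretely, I bound $\E_+[(a_f-b)^2\ind_S/(1+\Lambda)]$ via Cauchy–Schwarz in the reverse form $P_+(S)^2\le\E_+[\ind_S/(1+\Lambda)]\,\E_+[1+\Lambda]$. Here $\E_+\Lambda=e^{kr^2/v}$ up to variance-mismatch factors, which are bounded because $v_\pm$ is fixed. Since $P_+(S)\ge1-4\eta'/r^2\ge1/2$, this gives $\E_+[\ind_S/(1+\Lambda)]\ge c\,e^{-C'/v_-}$. The final bound is then
\[
 \mathcal R\ge 8r^2\cdot\frac{r^2}{4}\cdot c\,e^{-C'/v_-}\asymp k^{-2}.
\]
I expect this step, getting a constant-probability weight without fourth moments, to be the main obstacle. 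The Cauchy–Schwarz trick is my first attempt.

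\textbf{Part 2.} Take $r^2=(1+\varepsilon)\eta_k$, so $\eta'/r^2\to1/(1+\varepsilon)<1$ because $A_{\rm ref}/(m\eta_k)\to0$. Then $P_+(S)\ge1-(4/(1+\varepsilon))$ is not enough, so I replace $r/2$ by a threshold $\alpha r$ with $(1-\alpha)^2>1/(1+\varepsilon)$. This gives $P_+(S)\ge p_\varepsilon>0$. I then lower-bound $\E_+[\ind_S/(1+\Lambda)]$ by $P_+(S\cap\{\Lambda\le e^{L}\})e^{-L}/2$. The Gaussian likelihood ratio has $\log\Lambda=2s+2\sqrt sZ$ under $P_+$ with $s\approx kr^2/v_0$, and a correction from the variance ratio, which is $1$ in the equal-variance case. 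Choosing $L=2s+2\sqrt s\,z_p$ with $P(Z\le z_p)\ge1-p_\varepsilon/2$ gives
\[
 \mathcal R\gtrsim r^4 e^{-2s(1+o(1))}.
\]
Hence $\log(1/(k^2\mathcal R))\le(2+o(1))(1+\varepsilon)k\eta_k/v_0$, using $k^2r^4=z_k^2(1+\varepsilon)^2$, which is subexponential. Letting $\varepsilon\downarrow0$ gives the bound $2/v_0$.

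\textbf{Adaptive case.} For adaptive designs with $|x|\le1$, the log-likelihood ratio is a sum over at most $k$ steps with per-step KL divergence bounded by that of $x=1$. I would bound $\E_+\Lambda$ and the upper quantiles of $\log\Lambda$ by their martingale analogues, using the exponential martingale with $|x|\le1$, and then repeat the same arguments.
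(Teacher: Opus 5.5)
Your proposal is correct. The reduction to $8r^2\,\E_+[(a_f-b)^2/(1+\Lambda)]$ and your Part~2 match the paper's argument: you use a threshold $\alpha r$ with $(1-\alpha)^2>1/(1+\varepsilon)$ and cap $\log\Lambda\le 2s+2\sqrt{s}\,z_p$. The paper splits your event into $|a_f-r|\le dr$ and $|b|\le ur$ instead of absorbing $b$ into $\eta'$ by Minkowski, but that is the same step.

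You differ in two places.

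\textbf{Part~1.} The paper intersects with the Chebyshev event $|\sqrt{k}(M-r)|\le4\sqrt{v_+}$. Since $r=K/\sqrt{k}$ with $K$ fixed, the posterior weight $\operatorname{sech}^2(krM/v(r))$ is bounded below on that event. You instead use Cauchy--Schwarz, $P_+(S)^2\le\E_+[\ind_S/(1+\Lambda)]\,(1+\E_+\Lambda)$.
\begin{itemize}
\item Your route needs only a chi-square bound, so it transfers to any sign pair with bounded $1+\chi^2(P_+\|P_-)$.
\item Its weight is of order $e^{-4s}$, against $e^{-2s-O(\sqrt{s})}$ from a quantile cap. In Part~2 it would therefore give only $4/v_0$, so it cannot replace the quantile argument there.
\item The obstacle you flag comes only from thresholding at $\Lambda\ge1$. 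Your own Part~2 device, capping $\Lambda$ at $e^{L}$ with $L$ a large constant, already settles Part~1, because $s=kr^2/v(r)$ stays bounded there.
\end{itemize}

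\textbf{Adaptive extension.} The paper simulates each requested response on the revealed pair as $xD_j+\sqrt{1-x^2}\sqrt{v(r)}Z_j$ and then reuses the direct argument verbatim. You bound the adaptive likelihood ratio directly. The pathwise drift $\sum_t2x_t^2r^2/v\le2s$ and the exponential martingale give $\E_+\Lambda\le e^{4s}$ and $P_+(\log\Lambda>2s+2\sqrt{s}H)\le e^{-H^2/2}$. Your route needs no simulation kernel and would extend to responses with sub-Gaussian log-likelihood increments. The paper's is shorter given the Gaussian rotation.

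\textbf{Two harmless slips.}
\begin{itemize}
\item Under $P_+$, $\log\Lambda\sim N(2s,4s)$, so $\E_+\Lambda=e^{4kr^2/v(r)}$, not $e^{kr^2/v}$. There is also no variance-mismatch factor, since both signs share $v(r)$.
\item With $m\ge k$, the reference contributes $O(1)$ rather than $o(1)$ to $4\eta'/r^2$. Taking $C'\ge8(\sqrt{C_2}+\sqrt{A_{\rm ref}})^2$ handles this.
\end{itemize}
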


\begin{proof}
It suffices to restrict to the indicated pair and reveal its common $r$
and $v(r)$. This can only help assessment. All remaining sign information
is in $M=k^{-1}\sum_jD_j$. Under the equal sign prior the pointwise square
identity gives the lower bound
\[
 \E\left[4r^2(a_f-b)^2\operatorname{sech}^2(krM/v(r))\right].
\]
The identity is pointwise and remains valid if a competing assessment risk
is infinite. It requires only a second moment of $a_f-b$, not a fourth
moment of the learner.

For the first assertion choose fixed
$K^2\ge64\max\{C_2,A_{\rm ref},1\}$ and $r=K/\sqrt k$.
Under the positive sign, the failures of
\[
 |a_f-r|\le r/2,\qquad |b|\le r/4,\qquad
 |\sqrt k(M-r)|\le4\sqrt{v_+}
\]
have probabilities at most $1/16$, $1/4$, and $1/16$.
On their intersection $a_f-b\ge r/4$ and
$|krM/v(r)|\le K(K+4\sqrt{v_+})/v_-$.
The intersection has mixture probability at least $1/4$.
The displayed posterior bound is therefore at least a fixed positive
constant times $r^4=K^4/k^2$, uniformly over feasible learners.

For the second assertion fix $\delta>0$ and set
$r^2=(1+\delta)\eta_k$,
$s=kr^2/v(r)$. Under the positive sign the likelihood ratio obeys
$\log\Lambda=2s+2\sqrt s\,Z$, $Z\sim N(0,1)$.
Choose fixed $d\in((1+\delta)^{-1/2},1)$ and $u=(1-d)/2$.
The learning constraint, the reference moment bound and a sufficiently
large fixed normal cutoff $H$ ensure that
\[
 |a_f-r|\le dr,\quad |b|\le ur,\quad
 \log\Lambda\le2s+2\sqrt s H
\]
has probability bounded below by a positive constant under the positive
sign. The same posterior identity, written under that sign, is
$\E_+[8r^2(a_f-b)^2/(1+\Lambda)]$.
Consequently
$k^2\mathcal R(\eta_k)\ge c_\delta(k\eta_k)^2
e^{-2s-2\sqrt s H}$.
Divide negative logarithms by $k\eta_k$, use $v(r)\to v_0$, and then
let the fixed $\delta$ decrease to zero.

For the adaptive extension, on each revealed pair the variance $v(r)$ is
known and independent of the sign. An unused direct observation $D_j$
and fresh $Z_j\sim N(0,1)$ generate a requested response as
$xD_j+\sqrt{1-x^2}\sqrt{v(r)}Z_j$.
The independent sign-invariant information can be supplied in advance.
This simulates any capped adaptive learner and assessor on the pair and
preserves their learning error and realized target. The direct-source
argument therefore applies. This pairwise lower-bound comparison does
not provide a parameter-independent simulator over the original unknown-
variance parameter class.
\end{proof}

\paragraph{Dependencies of the matching upper bound.}
The proof of Algorithm~\ref{alg:gate} uses three properties together:
(i) a magnitude statistic with $\E(Q-h^2)^2\le CE_k$ and a reference with
$\E(b-c)^2\le A_{\rm ref}/m$; (ii) independent diagnostic mean noise with
known valid upper variance bound; and (iii) the local variance relation
$V\le U+h^2+\Delta_k$, which fixes the attainable exponent and the
corrected center. Proposition~\ref{prop:sign-scope} requires none of the
magnitude-estimation property. If cheap variance is an unknown monotone
function of $|h|$, different calibrations of that function can agree on
the observed variance while assigning it different magnitudes. Monotonicity
alone therefore supplies no inverse calibration or uniform $E_k$ bound.
A known invertible function would need a separate inverse-error analysis,
especially where its derivative degenerates.

Unequal cheap and diagnostic variances need not remove the sign lower
bound, but the pooled variance law in Appendix~\ref{app:pooled} and the
unknown-variance rotations in Appendix~\ref{app:simulation} then cease
to have their stated distributions. Unknown intervention directions need
not admit the same sign pair or preserve a scalar diagnostic mean.
No unchanged frontier, critical center, or exact adaptive sandwich follows
in either setting from the present upper proof.

\paragraph{What the critical-window evidence can establish.}
Theorem~\ref{thm:window} places the necessary and sufficient allowances
within $O(w_k)$ of $z_c$, $w_k=\sqrt{\ell_k\log\ell_k}$.
Neither its constants nor simulations of a particular procedure locate
the minimax threshold inside that window. In particular, a gate's failure
at $z_c$ does not rule out another feasible learner--assessor pair there.
The bounds permit a smaller-order correction as well as an order-$w_k$
gap; possible dependence of its coefficient on the nuisance or reference
class remains unresolved. The finite-budget checks are reported as
procedure diagnostics rather than evidence resolving this open threshold.

%% file: appendix/quality.tex
\section{Joint choice under a learning-quality constraint}\label{app:quality}
Keep the pooled fixed experiment of Appendix~\ref{app:pooled} and the reference $b=b_m(O)$,
with $|b|\le B$ and $\sup_q\E(b-c)^2\le A_{\rm ref}/m$.
A learner $f$ may use the entire experiment and a visible independent random
seed, producing $a_f$. All functions are independent of unknown $q$; they may
use the known bounds, including $\Delta$, and budgets. Set
\[
 \mathcal F_2=\{f:\sup_q\E_q(a_f-\theta)^2\le C_2/k\},\qquad
 G_f=(\theta-b)^2-(\theta-a_f)^2,
\]
where $C_2\ge V_*=U+R^2+1/2$ is fixed independently of sample counts.

\paragraph{Proof of Theorem~\ref{thm:quality}.}
\begin{proof}
The clipped diagnostic mean belongs to $\mathcal F_2$ and has fourth error
moment at most $3V_*^2/k^2$. The pooled fitted-potential assessor of
Appendix~\ref{app:pooled} has risk $Cr_N$. Reporting it for this learner's gain gives
the upper bound by $A-G_f=(\theta-a_f)^2$ and the squared triangle inequality.

For a lower bound fix any $f\in\mathcal F_2$ and assessor. At a common full
transcript the gain is $2(a_f-b)\theta+b^2-a_f^2$. If two parameters have
gain gap at least $g$ on an event $B_0$, the shared-transcript inequality gives
\[
 \max(R_0,R_1)\ge(g^2/4)[P_0(B_0)-\operatorname{TV}(P_0,P_1)].
\]
Independent random seeds do not change the information bounds.

Set $r_0=R/4$, $\tau_0=(L+U)/2$, and choose a fixed sign $\sigma$ with
$P_{c=0,\tau_0}(\sigma b\le0)\ge1/2$. At a null $h_0=\sigma r_0$, use
$B_0=\{\sigma b\le0,\ |a_f-\sigma r_0|\le r_0/2\}$.
For $k\ge32C_2/r_0^2$, learning accuracy implies $P_0(B_0)\ge3/8$.
On this event $\sigma(a_f-b)\ge r_0/2$, so the alternative
$h_1=\sigma(r_0+d)$ has gain gap at least $r_0d$.

Put $H_0=(2r_0+1)^2/(4L^2)+1/(2L)$ and choose positive $\varepsilon$ at most
\[
 \min\{1,R/4,(U-L)/(2(2r_0+1)),1/(2r_0+1),(32H_0)^{-1/2}\}.
\]
With $D_2=2r_0d+d^2$, use the three pairs from Appendix~\ref{app:proofs} with this sign:
(i) $d=\varepsilon/\sqrt{n+k}$, unchanged $\tau$, zero increments;
(ii) $d=\varepsilon/\sqrt{m+k}$, $\tau_1=\tau_0-D_2$, zero increments;
(iii) $d=\varepsilon\min(\Delta,k^{-1/2})$, unchanged $\tau$,
$(\zeta_0,\zeta_1)=(\Delta,\Delta-D_2)$ when $\Delta>0$.
In every pair $c=0$. The displayed constraints ensure feasibility and
KL at most $1/32$, hence TV at most $1/8$. The third pair changes the null
variance but not its observational sign probability; learner accuracy holds
uniformly there. The second changes the alternative observational law, but
the overlap bound needs only the null event probability.
Each pair gives risk at least $r_0^2d^2/16$.
Their maximum bounds $r_N/6$ times a fixed constant, using
$n+k\ge k$ and $(m+k)^{-1}\ge\min(m^{-1},k^{-1})/2$.
Thus $R_{G_f}\ge c_r r_N$ uniformly in $f,m,n$.

For the additional floor first suppose $m\ge k$. Fix
$K^2\ge64\max(C_2,A_{\rm ref},1)$ and take $k\ge K^2/R^2$.
Compare $c=0$, $\tau=\tau_0$, $\zeta=0$, $h=\pm r$, $r=K/\sqrt k$.
Supply the assessor, not a retrained learner, with the common magnitude
and nuisance parameters. Under the equal prior, the diagnostic mean $M$
contains all sign information, and
\[
 \operatorname{Var}(\theta\mid X,S)=r^2\operatorname{sech}^2(krM/V).
\]
Conditional on data and seed the learner and reference are fixed, so the Bayes
gain risk is $\E[4r^2(a_f-b)^2\operatorname{sech}^2(krM/V)]$.
This lower bounds maximum risk even in the original experiment.
This step does not assume $G_f\in L_2$: at each transcript, with posterior
sign probability $p$ and $g_B=pG_++(1-p)G_-$,
\[
 p(g-G_+)^2+(1-p)(g-G_-)^2
  =(g-g_B)^2+p(1-p)(G_+-G_-)^2.
\]
Integrating these nonnegative terms is valid even for infinite competing
risks. The attained posterior-variance risk is finite because
$\E(a_f-b)^2<\infty$ under the second-moment constraint. Thus no hidden
fourth-moment assumption is introduced by the Bayes calculation.

Let $W=\sqrt kM$ and $D_0=K+4\sqrt{V_*}$. Under the positive sign,
the events $|a_f-r|>r/2$, $|b|>r/4$ and $|W|>D_0$ have probabilities at
most $1/16$, $1/4$ and $1/16$, respectively. These follow from learning MSE,
reference MSE and Chebyshev; independence of events is unnecessary.
Their complementary intersection has probability at least $1/4$ under the
mixture. There $a_f-b\ge r/4$ and the posterior weight is at least
$w_0=\operatorname{sech}^2(KD_0/L)>0$. Risk is therefore at least
$K^4w_0/(16k^2)$. No small-TV assertion is made for this large-$K$ pair.

When $m<k$, $r_N=1/k$ already implies a lower bound for
$r_N+k^{-2}\le2r_N$. Combining the regions and taking
$k_0=\lceil\max(16,32C_2/r_0^2,K^2/R^2)\rceil$ proves the result.
All lower constants are uniform over $f$, so taking its infimum is valid.
\end{proof}

The direct lower argument is important because the fourth-moment transfer alone does not cover the learner class: a bounded rule that outputs $T$ with independent probability $1/k$
and otherwise the clipped mean has uniform MSE $O(1/k)$ but fourth moment at
least $T^4/k$ at $\theta=0$. The direct lower argument avoids that implication.

\paragraph{Adaptive constrained joint selection.}
Now allow a parameter-independent policy $\pi$ to interleave, randomize and
stop, with pathwise caps of $n$ cheap samples and $k$ diagnostics
$N(x\theta,V)$, $|x|\le1$. A learner $f$ and assessor $g$ use the complete
policy transcript and visible independent randomness. Require
$\sup_q\E_q^\pi(a_f-\theta)^2\le C_2/k$, with $k$ the cap, not the
realized diagnostic count. Let $\mathcal R_{\rm ad}^{Q}(m,n,k)$ be the
infimum of maximum gain-assessment risk over such feasible pipelines, and
$\mathcal R_{\rm fix}^{Q}$ the joint risk in Theorem~\ref{thm:quality}.
The quality constant, reference and parameter class remain identical.

\paragraph{Corollary.}
Under the preceding model and threshold,
\[
 \mathcal R_{\rm ad}^{Q}(m,n,k)\asymp r_N+k^{-2}.
\]
To prove it, the exact simulator in Appendix~\ref{app:simulation} gives
\[
 \mathcal R_{\rm fix}^{Q}(m,n+1,k)
 \le\mathcal R_{\rm ad}^{Q}(m,n,k)
 \le\mathcal R_{\rm fix}^{Q}(m,n,k).
\]
The right inclusion is immediate. For the left, simulate any feasible policy's
transcript from the augmented fixed experiment, retaining $O$ unchanged, and
apply its original $f,g$ to that transcript. Exact equality of laws preserves
both the learning-MSE constraint and the risk for
$(\theta-b(O))^2-(\theta-f(X_{\rm sim}))^2$. The composed learner is an
admissible fixed-source rule, not the clipped mean trained on the source.
The fixed theorem permits all such rules and independent simulator randomness.
Taking infima proves the sandwich. Finally replacing $n$ by $n+1$ changes
$E_N$ and $\min(E_N,1/k)$ by at most a factor of two, yielding the rate.

This is rate equivalence, not equality of same-budget risks or exclusion of
constant-factor adaptive improvements. Exact simulation matters: an approximate
bounded-loss comparison would not automatically apply to this second-moment
learner class. Common Gaussian variance, scalar direction and pathwise caps
remain essential; no expected-budget or heterogeneous-variance claim follows.

%% file: appendix/sharp.tex
\section{Diagnostic abstention and the leading assessment exponent}\label{app:sharp}
We use the unknown-parameter model and
observational reference. It is not an oracle construction. Write
$E_k=\Delta_k^2+(n_k+k)^{-1}+m_k^{-1}$,
$\ell_k=\log(1/(k^2E_k))$ and $z_k=k\eta_k$.
All constants below may depend on fixed model/reference bounds.

\paragraph{Proof of Theorem~\ref{thm:sharp}.}

\paragraph{Implementable construction.}
Fix $\epsilon\in(0,1/2)$, put $t^2=(1-\epsilon)\eta$ and use the same
pooled estimate $Q=\operatorname{clip}(\widehat V-\widehat\tau,0,R^2)$.
For $Q\ge t^2$ define
\[
 d(Q)^2=\frac{8V_*}{k}\log\frac{8Q}{\epsilon\eta},\qquad
 T(Q)=\max\{0,\sqrt Q-d(Q)\}.
\]
Set $I=1_{\{Q\ge t^2,\ |M-b|\ge T(Q)\}}$,
$s=\operatorname{sign}(M-b)$, $a=b+Is\sqrt Q$ and $g=IQ$.
When $Q<t^2$, set $I=0$ without evaluating the logarithm.
The rule uses a known variance bound, not the actual variance or effect.

\begin{proof}
Put $r=|h|$, $e=c-b$, $\delta=Q-r^2$, $\beta=A_{\rm ref}/m$ and
$C_Q=\max(1,4V_*^2)$. Then $\E\delta^2\le C_QE_k$ and
$\E e^2\le\beta$. In the direct Gaussian experiment
$\xi=M-\theta\sim N(0,V/k)$ is independent of $(Q,b)$; $Q$ and $b$
need not be independent.

First verify learning feasibility. Write $D=\log(8(1-\epsilon)/\epsilon)$,
so $d(Q)^2\ge8V_*D/k$ on magnitude eligibility. The inactive magnitude
$H=\E[r^2(1-I)]$ is bounded on $Q<t^2$ by $t^2+\E|\delta|$.
On $Q\ge t^2$, the event
$|\sqrt Q-r|+|e|>d(Q)/2$ contributes at most
$k(C_QE_k+R^2\beta)/(V_*D)$, using
$r^2(\sqrt Q-r)^2\le\delta^2$ and the squared-sum inequality.
The event $|\delta|>Q/2$ contributes at most $3\E|\delta|$.
On the complement, a failed gate implies
$\operatorname{sign}(h)\xi<-d(Q)/2$ and $r^2\le3Q/2$.
Its conditional Gaussian probability is at most $\epsilon\eta/(8Q)$.
Terms weighted by $r^2$ vanish when $r=0$. Thus
\[
 H\le(1-13\epsilon/16)\eta+4\sqrt{C_QE_k}
       +\frac{k(C_QE_k+R^2\beta)}{V_*D}.
\]
For wrong diagnostic sign $W$, the earlier bound
$r^2P(W)\le4\beta+8V_*/k$ still applies. Decomposing
$a-\theta=-e-(1-I)h+Is(\sqrt Q-r)+I(sr-h)$ gives
\[
 \|a-\theta\|_2\le\sqrt H+\sqrt\beta+(C_QE_k)^{1/4}
                   +2\sqrt{4\beta+8V_*/k}.
\]
All remainders are $o(\sqrt\eta)$, uniformly over the parameter class,
and the fixed $\epsilon$ supplies positive slack. Learning MSE is
therefore at most $\eta$ eventually.

Next, the exact gain envelope is
\[
 (g-G)^2\le12\delta^2+48IQr^2 1_W+12R^2e^2.
\]
Fix $\alpha\in(0,1/4)$. Events $|\delta|>\alpha r^2$ and
$|e|>\alpha r$ contribute $O(E_k)$ to the weighted middle term:
respectively $Qr^2\le(1+\alpha)\delta^2/\alpha^2$ and
$Qr^2\le R^2e^2/\alpha^2$. On their complement, activation implies
$r^2\ge v\eta$, $v=(1-\epsilon)/(1+\alpha)$, and
\[
 \frac{d(Q)^2}{r^2}\le\frac{8V_*}{z_k}
   \sup_{x\ge v}\frac{\log(8(1+\alpha)x/\epsilon)}x
 =\frac{8V_*D}{v z_k}=:a_k^2\longrightarrow0.
\]
The equality follows by differentiation, since
$8(1+\alpha)v/\epsilon>8$. Eventually $T(Q)>0$ in this region.
Wrong-sign activation implies
$\operatorname{sign}(h)\xi\le-b_kr$, where
$b_k=1+\sqrt{1-\alpha}-\alpha-a_k$.
Conditional integration therefore bounds the good-region term by
\[
 (1+\alpha)r^4\exp[-B_kkr^2/V]1_{\{r^2\ge v\eta\}},
 \qquad B_k=b_k^2/2\longrightarrow B_\alpha.
\]
Split the exponential into fractions $\rho$ and $1-\rho$, with fixed
$\rho\in(0,1)$. Since $V\le U+r^2+\Delta_k$, monotonicity gives
\[
 (1-\rho)B_kkr^2/V\ge
 \frac{(1-\rho)B_kv z_k}{U+v\eta+\Delta_k}.
\]
The other fraction absorbs $r^4$ at cost $Ck^{-2}$ using $V\le V_*$
and boundedness of $y^2e^{-cy}$. By choosing fixed $\alpha,\rho$ small
enough, for every fixed $a_0<2(1-\epsilon)/U$ this proves
\[
 \mathcal R_k(\eta_k)\le C_1E_k+C_2k^{-2}e^{-a_0z_k}. \tag{J1}
\]

For the sharp lower exponent, fix $\varepsilon>0$ and take the legal
pair $c=0$, $\tau=U$, $\zeta=0$, $h=\pm r_k$ with
$r_k^2=(1+\varepsilon)\eta_k$ and $V_k=U+r_k^2$.
The likelihood ratio under the positive point has log
$2s_k+2\sqrt{s_k}Z$, where $s_k=kr_k^2/V_k$ and $Z\sim N(0,1)$.
Choose fixed $d\in((1+\varepsilon)^{-1/2},1)$ and $v_0=(1-d)/2$.
Markov bounds and a sufficiently large fixed Gaussian cutoff $H_0$
give positive constant probability for
\[
 |a-r_k|\le dr_k,\quad |b|\le v_0r_k,\quad
 \log\Lambda\le2s_k+2\sqrt{s_k}H_0.
\]
Indeed the first failure probability is at most
$1/((1+\varepsilon)d^2)<1$, the second tends to zero since
$\beta/\eta_k\to0$, and the third can be made arbitrarily small.
On this event $a-b\ge v_0r_k$. The pointwise two-sign square identity
bounds every assessor's maximum risk below by
$\E_+[8r_k^2(a-b)^2/(1+\Lambda)]$.
It follows uniformly over feasible learners that
\[
 k^2\mathcal R_k(\eta_k)\ge C_\varepsilon z_k^2
                     e^{-2s_k-2\sqrt{s_k}H_0}.
\]
No second moment of the target is required for this nonnegative
pointwise argument. Divide logarithms by $z_k$, then let the fixed
$\varepsilon$ decrease to zero after taking limits, to obtain
\[
 \limsup\frac{\log(1/(k^2\mathcal R_k(\eta_k)))}{z_k}\le2/U. \tag{J2}
\]
The existing nuisance bound is $\mathcal R_k\ge cE_k$ for vanishing
$\eta_k$. Combining it with (J2) gives the upper log limit in the
theorem. For every $a_0<2/U$, choose fixed $\epsilon$ small enough
in (J1). This gives the lower log limit; send $a_0$ upward to $2/U$
only after taking limits. This works even if $\ell_k/z_k$ oscillates.

For $C>U/2$, choose $1/C<a_0<2/U$ in (J1) to obtain $O(E_k)$ at
$\eta=C\ell_k/k$, then use the nuisance lower bound. For necessity,
the earlier order bound implies $k\eta_k\to\infty$. On any subsequence
with bounded $k\eta_k/\ell_k$, the assumption $\ell_k=o(k)$ implies
$\eta_k\to0$, so (J2) forces its limiting ratio to be at least $U/2$.
Unbounded ratios cause no difficulty.
Finally the exact simulator lower sandwich uses source counts
$(m,n+1,k)$, preserves the quality constraint, and changes $\ell_k$ by
$o(1)$. Fixed direct sampling supplies the adaptive upper bound.
\end{proof}

\paragraph{One fast slack schedule for both main results.}\label{app:fast-unified}
The leading-log construction above can be implemented with the same schedule
as the critical-window upper bound, namely
$\epsilon=\min\{1/4,\sqrt{\log z/z}\}$ for $z>e$ and $1/4$ otherwise.
Here we verify this uniformly throughout the full regime of
Theorem~\ref{thm:sharp}, not only at the critical allowance.
Put $A_k=k^2E_k\to0$ and $z=k\eta\to\infty$.
The finite learning inequalities above give
\[
 H/\eta\le1-13\epsilon/16+
 O\!\left(\frac{\sqrt{A_k}}{z}+\frac{A_k}{zD}\right),\qquad
 D=\log\frac{8(1-\epsilon)}\epsilon.
\]
The remainder is $o(\epsilon)$. The other terms in the root-learning bound,
divided by $\sqrt\eta$, are $O((1+A_k^{1/4})/\sqrt z)=o(\epsilon)$.
Consequently $\|a-\theta\|_2\le\sqrt\eta$ eventually, uniformly over $q$;
the positive slack proves feasibility at the declared allowance itself.

For the assessment bound, the finite inequality~\eqref{eq:joint-tail-upper}
in Appendix~\ref{app:window} has a nuisance constant independent of $\epsilon$:
\[
 \E(g-G)^2\le C E_k+
 144\max_{w\in[t^2,R^2]}w^2
 \exp\!\left[-\frac{B_*kw}{U+\Delta_k+w}\right],
 \quad t^2=(1-\epsilon)\eta,
\]
where $a_*^2=8V_*D/(kt^2)=O(\log z/z)\to0$ and
$B_*=(2-a_*)^2/2\to2$. Its conditions $a_*\le1/4$ and $t^2\le R^2$
hold eventually. Fix any $a_0<2/U$ and choose a fixed $\rho\in(0,1)$
with $a_0<2(1-\rho)/U$. Monotonicity of $w/(U+\Delta_k+w)$ gives
\[
 \frac{(1-\rho)B_*kw}{U+\Delta_k+w}
 \ge\frac{(1-\rho)B_*kt^2}{U+\Delta_k+t^2}\ge a_0z
\]
eventually, since $\eta,\Delta_k,\epsilon\to0$. The remaining exponential
fraction absorbs $w^2$ at cost $C_\rho k^{-2}$, using
$U+\Delta_k+w\le V_*$ and $B_*$ bounded away from zero.
Thus this one rule satisfies $\E(g-G)^2\le CE_k+C_\rho k^{-2}e^{-a_0z}$
for every fixed $a_0<2/U$. Combined with the learner-uniform lower bound,
it attains~\eqref{eq:frontier}, as well as the critical-window upper bound.
The finite-window proof uses only the gate's moment and Gaussian tail
identities, so this deduction does not assume the leading-log conclusion.

\paragraph{Relation to rejection.}
If magnitude and reference are revealed and the available repairs are
$0,\pm r$, a gate with acceptance probability $\phi$ and accepted sign-error
probability $q$ has learning risk $r^2(1-\phi)+4r^2q$ and gain-assessment
risk $16r^4q$. A Lagrangian gives a conditional-error rejection threshold,
consistent with the classical rejection framework \citep{franc}.
The unknown-nuisance proof above quantifies the uniform exponent and the
single implementable schedule. The conditional-integration suite in
Appendix~\ref{app:experiments} separately measures sign-tail suppression
and the total nuisance-error floor.

%% file: appendix/critical_window.tex
\section{Joint tail stability and a quantitative critical window}\label{app:window}
We use the model, reference contract and
same-transcript target.
Write $z=k\eta$, $E=\Delta^2+(n+k)^{-1}+m^{-1}$ and
$\ell=\log(1/(k^2E))$. Constants depend only on fixed model/reference bounds,
unless indicated otherwise.

\paragraph{Proof of Theorem~\ref{thm:window}.} The following finite inequalities establish the necessary and sufficient allowances. Write $C_Q=\max(1,4V_*^2)$ and $\beta=A_{\rm ref}/m$.

\subsection{Finite sign lower bound}
Let $\beta=A_{\rm ref}/m$ and choose $0<\varepsilon\le1$ satisfying
$(1+\varepsilon)\eta\le R^2$ and
$\beta/\eta\le\varepsilon^3/4704$.
Use the legal pair $c=0,\tau=U,\zeta=0,h=\pm r$,
$r^2=(1+\varepsilon)\eta$. Put
\[
 d=\frac{1+(1+\varepsilon)^{-1/2}}2,\quad
 v=\frac{1-d}2\ge\frac{\varepsilon}{14},\quad
 \kappa=1-\frac1{(1+\varepsilon)d^2}\ge\frac{\varepsilon}{6}.
\]
The inequalities follow by rationalizing $1-(1+\varepsilon)^{-1/2}$
and using $(\sqrt{1+\varepsilon}+1)^2<6$.
Under the positive point, $\log\Lambda=2s+2\sqrt{s}Z$,
$s=kr^2/(U+r^2)$. Set $H=\sqrt{2\log(24/\varepsilon)}$.
The failures of $|a-r|\le dr$, $|b|\le vr$ and
$\log\Lambda\le\lambda:=2s+2\sqrt{s}H$ have probabilities at most
$1-\kappa$, $196\beta/(\varepsilon^2\eta)\le\kappa/4$ and
$\exp(-H^2/2)\le\kappa/4$, respectively.
Their intersection has probability at least $\kappa/2$, without independence.
There $a-b\ge vr$. The weighted square identity from Appendix~\ref{app:sharp}
therefore bounds every feasible learner and assessor below by
$2v^2\kappa r^4e^{-\lambda}$. Hence
\begin{equation}\label{eq:finite-sign-window}
 k^2\mathcal R_k(\eta)\ge\frac{\varepsilon^3z^2}{588}
 \exp\{-2s-2\sqrt{s}H\},\qquad
 s=\frac{(1+\varepsilon)z}{U+(1+\varepsilon)z/k}.
\end{equation}
No second moment of the random gain is needed for the pointwise identity.
An empty feasible class has infinite minimax risk.

For $z\ge e$, take $\varepsilon=\sqrt{\log z/z}$.
If $k^2E\le1$ and $\eta\le1$, then
$\beta/\eta\le A_{\rm ref}/z^2$. Thus the reference condition holds when
$4704A_{\rm ref}\le\sqrt z(\log z)^{3/2}$, eventually in the sharp regime.
Define $\psi_k(z)=2z/(U+z/k)$. Its derivative is
$\psi'_k(z)=2U/(U+z/k)^2\le2/U$, so
$2s\le\psi_k(z)+2\varepsilon z/U$.
For the square-root remainder use $s\le(1+\varepsilon)z/U$.
With $C_U=2/U+4\sqrt{(\log24+1/2)/U}$, taking logs gives
\begin{equation}\label{eq:window-log-lower}
 \log\frac1{k^2\mathcal R_k(\eta)}\le
 \psi_k(z)+C_U\sqrt{z\log z}
 +\log588-\tfrac12\log z-\tfrac32\log\log z.
\end{equation}
The previous order necessity ensures $z\to\infty$ whenever
$\mathcal R_k(\eta)\le D_0E$ for fixed $D_0$.
On indices where $z<z_c$, $\eta\to0$ because $z_c=O(\ell)=o(k)$;
all preceding conditions hold eventually. Combining
$\ell-\log D_0\le\log(1/(k^2\mathcal R_k))$ with
\eqref{eq:window-log-lower} and $\psi_k(z_c)=\ell$ gives
$\psi_k(z_c)-\psi_k(z)\le O(\sqrt{\ell\log\ell})$.
On this interval $\psi'_k$ is eventually at least $1/(2U)$;
the mean value theorem proves the necessary bound.
On all other indices that bound is automatic.

\subsection{A finite joint-tail inequality}
Use the same $Q$, margin $d(q)$ and gate as Appendix~\ref{app:sharp}. Write
\[
 t^2=(1-\epsilon)\eta,\quad D=\log\frac{8(1-\epsilon)}\epsilon,
 \quad a_*^2=\frac{8V_*D}{kt^2},\quad B_*=(2-a_*)^2/2.
\]
Assume $0<t^2\le R^2$ and $a_*\le1/4$.
With $C_Q=\max(1,4V_*^2)$ we claim
\begin{equation}\label{eq:joint-tail-upper}
 \sup\E(g-G)^2\le(5196C_Q+2316R^2A_{\rm ref})E
 +144\max_{w\in[t^2,R^2]}w^2
          \exp\left\{-\frac{B_*kw}{U+\Delta+w}\right\}.
\end{equation}
The constant multiplying $E$ does not grow as $\epsilon$ decreases.
If $t^2>R^2$ the rule is everywhere inactive, but this alone does not
certify learning quality.

Here is a complete joint comparison proving the claim. Put
$H(s)=\sqrt{\Phi(-s)}$. For $s\ge0$,
\[
 H(s)\le1,\qquad s|H'(s)|\le3.
\]
Indeed integrating the normal density over $[s,s+1/(s+1)]$ gives
$\Phi(-s)\ge e^{-3/2}\phi(s)/(s+1)$. Thus
$|H'(s)|\le c\sqrt{s+1}e^{-s^2/4}$ with
$c=e^{3/4}/(2(2\pi)^{1/4})<1$.
Use $\sqrt{s+1}\le s+1$ and
$\sup s^2e^{-s^2/4}+\sup se^{-s^2/4}=4/e+\sqrt{2/e}<3$.

Condition on $Q=q,b$, put $x=h^2>0$, $e_0=c-b$ and
$\sigma=\sqrt{V/k}$. For eligible $q\ge t^2$, the conditional weighted
wrong-sign term is exactly
\[
 F=xq\Phi\left(-\frac{\sqrt x+\sqrt q-d(q)+\operatorname{sign}(h)e_0}
 {\sigma}\right).
\]
The margin obeys $d(y)/\sqrt y\le a_*\le1/4$ for all $y\ge t^2$,
by differentiating $\log(Cy)/y$. Thus the gate is positive there.
Set $w=\max\{x,t^2\}$. We show the pointwise inequality
\begin{equation}\label{eq:joint-comparison}
 F\le3w^2\Phi\left(-\frac{2\sqrt w-d(w)}\sigma\right)
       +108(q-x)^2+48R^2e_0^2.
\end{equation}
If $q>2x$ or $q<x/2$, then $F\le xq\le2(q-x)^2$.
If $|e_0|>\sqrt x/4$, then $F\le R^2x\le16R^2e_0^2$.
The case $x=0$ has zero weight. Otherwise $x/2\le q\le2x$ and
$|e_0|\le\sqrt x/4$. Since $q\ge t^2$,
$|x-w|,|q-w|\le|q-x|$ and $w\le R^2$.

Along a coordinate path from $(x,q,e_0)$ to $(w,w,0)$ consider
\[
 J(X,Y,e)=\sqrt{XY}\,
 H\left(\frac{\sqrt X+\sqrt Y-d(Y)+\operatorname{sign}(h)e}{\sigma}\right).
\]
Here $X\in[x,2x]$, $Y\in[x/2,2x]\cap[t^2,R^2]$,
$|e|\le\sqrt x/4$ and $X,Y\le R^2$. Its standardized argument is at
least $3(\sqrt X+\sqrt Y)/(4\sigma)>0$.
Also $0\le(\sqrt Y-d(Y))'\le1/(2\sqrt Y)$, since
$d'(Y)=d(Y)/(2Y\log(8Y/(\epsilon\eta)))$.
The preceding tail derivative bound gives
\[
 |\partial_XJ|\le3,\quad |\partial_YJ|\le3,
 \quad |\partial_eJ|\le4R.
\]
For the $X$ derivative the prefactor term is at most
$\sqrt{Y/X}/2\le1$ and the other term is at most
$2s|H'(s)|/3\le2$; the $Y$ derivative uses $X/Y\le4$.
The reference derivative is at most $4R s|H'(s)|/3$.
Integrating derivatives and squaring proves \eqref{eq:joint-comparison}.
The baseline uses the actual fixed $\sigma$, not a fictitious changed data law.
Moving it to $\max\{x,t^2\}$ avoids differentiating the eligibility indicator.

Finally combine \eqref{eq:joint-comparison} with the existing gain envelope
$12(Q-h^2)^2+48IQh^2 1_W+12R^2(c-b)^2$.
Use $\E(Q-h^2)^2\le C_QE$, $\E(c-b)^2\le A_{\rm ref}E$ and
$V\le U+x+\Delta\le U+w+\Delta$. The Gaussian tail bound proves
\eqref{eq:joint-tail-upper}. In particular it requires only the stated
second-moment reference contract, not additional independence or concentration.

\subsection{Learning feasibility and sufficient allowance}
For the theorem's fast slack, eventually
$\epsilon=\sqrt{\log z/z}$, $D=O(\log z)$ and
$a_*=O(\sqrt{\log z/z})$.
The nonasymptotic learning inequalities in Appendix~\ref{app:sharp} imply
an inactive remainder $O(1/z)$ relative to $\eta$ and other root-risk
terms $O(1/\sqrt z)$ relative to $\sqrt\eta$, once $k^2E\le1$.
Both are $o(\epsilon)$. Hence the same positive-slack argument gives
learning MSE at most $\eta$ eventually.

Put $A=U+\Delta$. The derivative of
$\log(w^2e^{-B_*kw/(A+w)})$ is $2/w-B_*kA/(A+w)^2$.
Eventually $B_*kt^2A\ge2(A+R^2)^2$, so it is nonpositive throughout
$[t^2,R^2]$. The maximum in \eqref{eq:joint-tail-upper} occurs at $t^2$.
Put $v=1-\epsilon$. Keeping the exact denominator, rather than its
first-order expansion, gives the explicit exponent comparison
\[
 \psi_k(z)-\frac{B_*vz}{U+vz/k+\Delta}
 \le\frac{2\epsilon z}{U}+\frac{2a_*z}{U}
                         +\frac{2z\Delta}{U^2}.
\]
To verify it, successively replace $z$ by $vz$ in $\psi_k$, add $\Delta$
to the denominator, and replace the numerator coefficient $2$ by $B_*$;
use $\psi'_k\le2/U$ and $2-B_*\le2a_*$. Thus
\[
 \mathcal R_k(\eta)\le C_0E+C_1k^{-2}z^2
 \exp\{-\psi_k(z)+C_2\sqrt{z\log z}+C_3z\Delta\}.
\]
For the stated sufficient allowance, $z/\ell\to U/2$ and $\eta\to0$.
Moreover $\Delta\le\sqrt E=e^{-\ell/2}/k$,
$z\Delta\to0$ and $\sqrt{z\log z}=O(\sqrt{\ell\log\ell})$.
Since $\psi'_k\ge1/(2U)$ eventually between $z_c$ and $z$,
$\psi_k(z)-\ell\ge K\sqrt{\ell\log\ell}/(2U)$.
A sufficiently large fixed $K$ absorbs the remainder and $2\log z$,
proving $O(E)$. The earlier nuisance lower bound gives the matching order.
The exact source simulator transfers the lower bound to adaptive policies;
fixed direct sampling supplies their upper bound.

\paragraph{When the uncorrected equality fails.}
The centering shift is
$z_c-U\ell/2=U\ell^2/(4k-2\ell)$.
For $z_0=U\ell/2$ the exact exponent deficit is
$\ell-\psi_k(z_0)=\ell^2/(2k+\ell)$.
Equation~\eqref{eq:window-log-lower} consequently gives
\[
 \log\frac{\mathcal R_k(z_0/k)}{E_k}
 \ge\frac{\ell^2}{2k+\ell}-O(\sqrt{\ell\log\ell}).
\]
It diverges if $\ell^3/(k^2\log\ell)\to\infty$, proving the final claim.
This regime is nonempty: $\Delta=0$ and
$m=n=\lceil2k^2\exp(k^a)\rceil$, fixed $2/3<a<1$, give
$\ell=k^a+o(1)$. These enormous auxiliary budgets are an existence example,
not an empirical recommendation. Polynomial auxiliary budgets do not meet
this condition; their leading-order equality is not decided here.

For $\ell=k^a$ with $2/3<a<1$, the center shift divided by
$\sqrt{\ell\log\ell}$ diverges although $\ell/k\to0$.
Figure~\ref{fig:window-correction} illustrates this exact algebraic ratio, not simulated risks.
\begin{figure}[t]
\centering
\includegraphics[width=.72\linewidth]{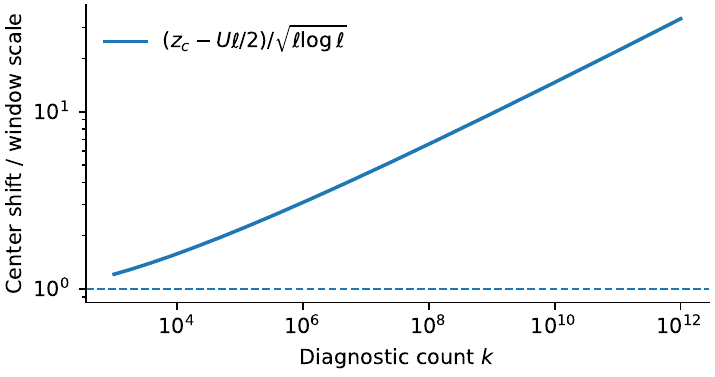}
\caption{Analytic center-shift/window ratio for $\ell=k^{0.8}$ and $U=2.5$.
The condition $\ell=o(k)$ does not make the correction smaller than the allowance window.}
\label{fig:window-correction}
\end{figure}

%% file: appendix/simulation.tex
\section{Exact and approximate simulation of adaptive diagnostics}\label{app:simulation}
This exact experiment comparison transfers the learner-constrained frontier to adaptive policies without changing the assessment target.

\begin{theorem}
In the stated common-variance Gaussian model, the fixed experiment with the same
$m$ observations, $n+1$ cheap samples and $k$ direct diagnostics at $x=1$
can simulate every policy capped at $n$ cheap and $k$ diagnostic requests.
The simulation kernel is independent of all unknown parameters. For $k>0$,
zero additional cheap samples do not suffice for this fixed-direct source,
even allowing the observational sample to be transformed. Hence one is the
minimum integer cheap-sample augmentation with $m,k$ unchanged.
\end{theorem}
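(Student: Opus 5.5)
The plan is a constructive Gaussian rotation argument for the simulation claim, followed by an information-monotonicity argument for minimality.

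\textbf{Reduction of the source.} First reduce the fixed source with counts $(m,n+1,k)$ to its orthogonal coordinates. Apply a fixed orthogonal $k\times k$ matrix whose first row is $k^{-1/2}(1,\dots,1)$ to the diagonal responses. This yields a \emph{carrier} $S_0=\sqrt k\,M\sim N(\sqrt k\,\theta,V)$ and $k-1$ residual coordinates, i.i.d.\ $N(0,V)$. Together with the $n+1$ cheap responses these form a \emph{reservoir} of $n+k$ i.i.d.\ $N(0,V)$ variables. All of them are independent of each other and of $O$, which is left untouched. The only law used is the common-variance Gaussian structure of Section~\ref{sec:model}; the transformation uses no parameter.

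\textbf{Simulating the policy.} Run the policy on simulated outputs, using its own visible seed, and maintain the invariant that the current carrier is $N(\mu_t\theta,V)$ with $\mu_t^2\ge$ (remaining diagnostic requests). Initially $\mu_0^2=k$.
\begin{itemize}
\item A cheap request consumes one reservoir coordinate and outputs it.
\item A diagnostic request at $|x|\le1$ consumes one reservoir coordinate $\xi$ and applies the rotation
\[
 \Bigl(\tfrac{x}{\mu_t}S_t+\sqrt{1-\tfrac{x^2}{\mu_t^2}}\,\xi,\;
 -\sqrt{1-\tfrac{x^2}{\mu_t^2}}\,S_t+\tfrac{x}{\mu_t}\xi\Bigr).
\]
The first coordinate is output and is $N(x\theta,V)$. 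The second, after a sign flip, becomes the new carrier, which is $N(\sqrt{\mu_t^2-x^2}\,\theta,V)$ and independent of the output. Then $\mu_{t+1}^2=\mu_t^2-x^2\ge$ the remaining count, since $x^2\le1$.
\end{itemize}
The pathwise caps of $n$ cheap and $k$ diagnostic requests use at most $n+k$ reservoir coordinates, which is exactly the supply; this is where the extra cheap sample is spent. Coordinates not yet used are, conditionally on the simulated past and the seed, still i.i.d.\ $N(0,V)$ or the stated carrier law, because orthogonal maps of equal-variance independent Gaussians preserve independence. By induction over the policy's adaptive steps, including randomized stopping, the simulated transcript has exactly the policy's law jointly with $O$. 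Since $b=b(O)$ is unchanged, the learner's realized gain and learning error are preserved in law, as used in \eqref{eq:simulation-sandwich}. The kernel depends only on $x$ and the counts.

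\textbf{Minimality of the augmentation.} Restrict to a subfamily in which $c,\tau$ are fixed, so that the law of $O$ is fixed and $O$ is ancillary. Keep $(h,\zeta)$ free, so that $(\theta,V)$ ranges over an open set locally; this needs the parameter to lie in the interior, for instance $0<\zeta<\Delta$, which requires $\Delta>0$. Take the policy that makes $n$ cheap requests and $k$ diagnostics at $x=0$. Its transcript is $n+k$ i.i.d.\ $N(0,V)$ variables, with Fisher information $(n+k)/(2V^2)$ for $V$ and none for $\theta$. The unaugmented source $(m,n,k)$ has sufficient statistic $(M,\text{residual energy with } n+k-1 \text{ degrees of freedom})$, and its information matrix is block diagonal in $(\theta,V)$. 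The $V$-block equals $(n+k)/(2V^2)$, with $M$ contributing $1/(2V^2)$, but that contribution is confounded with the unknown mean.

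\textbf{Main obstacle.} The hardest step is making the confounding rigorous. I would compare the two information matrices with the matrix data-processing inequality: for any Markov kernel, target information $\preceq$ source information in the Loewner order. Under that inequality, a target matrix $\mathrm{diag}(0,(n+k)/(2V^2))$ is dominated by a source matrix $\mathrm{diag}(k/V,(n+k)/(2V^2))$, so naive Fisher information gives no contradiction. I would therefore test whether a dominated statistic can discard the mean information entirely, for example through an efficient-information or Schur-complement bound. If the information comparison cannot close the gap, I would switch to a direct argument. The simulated variables are then $\theta$-free jointly Gaussian-scale data, so by completeness of the Gaussian family their sum of squares would have to be an unbiased estimator of $(n+k)V$ with variance $2(n+k)V^2$. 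I would then try to show that no statistic of $(M,\text{residual energy})$ can achieve this uniformly in $\theta$.

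Either route must handle a kernel that mixes in $O$; this is harmless, because $O$ is ancillary in the subfamily and can be integrated into the kernel's randomization. The boundary case $\Delta=0$, where $\zeta$ is fixed at zero and $(\theta,V)$ are no longer locally free, needs a separate treatment. I expect this minimality part, not the simulator construction, to be the main difficulty.
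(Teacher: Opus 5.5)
Your simulator is correct and matches the paper's construction. It uses the same carrier rotation, consuming one reservoir coordinate per request, and the same invariant that the squared carrier coefficient is at least the number of remaining diagnostics. The count also matches: $k-1$ residual plus $n+1$ cheap coordinates cover exactly $n+k$ requests.

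\textbf{The gap is the minimality half.} You leave it as a list of candidate routes, the first of which cannot work, and the decisive idea is missing. You already computed that at fixed $\theta$ the source and the all-$x=0$ target have the \emph{same} $V$-information $(n+k)/(2V^2)$. A Loewner or Schur-complement comparison cannot exploit this, because both information matrices are diagonal with equal $V$-blocks, so efficient information shows no gap.

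What closes the argument is the equality case of the scalar inequality. Because the kernel is parameter-free, the target $V$-score is $s_G=\E(s_F\mid\text{output})$. Equal second moments then give $\E(s_F-s_G)^2=0$, so $s_F=s_G$ almost surely. By mutual absolute continuity, these identities hold for several values of $\theta$ on one full-probability set. In your subfamily $2V^2s_G=\sum_jB_j^2-(n+k)V$ is free of $\theta$, whereas $2V^2s_F$ contains $k\theta^2-2\theta\sum_\ell D_\ell$, which is a contradiction.

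Your completeness route also closes when $\Delta>0$. Conditioning $\sum_jB_j^2$ on the source gives a parameter-free statistic unbiased for $(n+k)V$ with variance at most $2(n+k)V^2$. Conditioning further on the complete sufficient statistic must return $\frac{n+k}{n+k-1}\mathrm{RSS}$, where $\mathrm{RSS}$ is your residual energy. Its variance is $2(n+k)^2V^2/(n+k-1)$, strictly larger.

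\textbf{The second gap is the case $\Delta=0$.} You defer it, but the statement covers every $\Delta\in[0,1/2]$, and the clean model is central to the paper. With $c,\tau$ fixed and $\zeta=0$, only $h$ moves, along $(\theta,V)=(c+h,\tau+h^2)$. There the source information $k/V+2h^2(n+k)/V^2$ strictly exceeds the target's $2h^2(n+k)/V^2$, and the family is curved. So neither of your routes applies as stated.

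The paper instead restricts to $h=\zeta=0$ with $(c,\tau)=(\theta,V)$ free, which is legal for every $\Delta$. Then $O$ is informative rather than ancillary. Both scores contain $\sum_i(Y_i-\theta Z_i)^2$, and both $V$-informations equal $(m+n+k)/(2V^2)$. The score identity at three values of $\theta$ forces the quadratic coefficients to agree: $\sum_i(Z_i')^2=\sum_iZ_i^2+k$. Taking expectations gives $m=m+k$. This treats a kernel that replaces $O$ by any $O'$ with the prescribed law directly, without your ancillarity reduction. That is exactly the statement's clause about transforming the observational sample.
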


\paragraph{Construction.}
Write $\theta=c+h$. For $k\ge1$, an orthogonal transformation of the direct
diagnostics produces $Z\sim N(\sqrt{k}\theta,V)$ and $k-1$ independent
$N(0,V)$ coordinates. With the $n+1$ cheap samples this gives $n+k$ centered
coordinates $E_1,\ldots,E_{n+k}$. Set $r=\sqrt{k}$ and retain the
observational sample $O$ unchanged.

Given the simulated history and fresh policy randomness, a request has mean
coefficient $d=0$ if cheap, or $d=x\in[-1,1]$ if diagnostic. Using the next
centered coordinate $E$, when $r>0$ set
\[
 \alpha=d/r,\quad\beta=\sqrt{1-\alpha^2},\qquad
 Y=\alpha Z+\beta E,\quad Z'=\beta Z-\alpha E,\quad r'=\beta r.
\]
Condition on $O$, the public simulated history and the current request, but
not on retained or future coordinates. Inductively $(Z,E)$ has law
$N((r\theta,0),VI_2)$. The displayed orthogonal map therefore has conditional
joint density $\phi_V(y-d\theta)\phi_V(z'-r'\theta)$. Revealing $Y$
preserves the required conditional law for $Z'$, independently of unused
noise. No unconditional Gaussianity after adaptive rotations is assumed.

After $j$ diagnostics, $r^2=k-\sum_{i\le j}x_i^2\ge k-j$. Thus before
any further diagnostic $r\ge1$, making the rotation feasible. At $r=0$
only $d=0$ can be requested; return $E$ and retain $Z$. At most $n+k$
coordinates are consumed. Copying the original policy kernels and discarding
unused coordinates on stopping matches the entire transcript law. For $k=0$
simply supply cheap outcomes from the bank.

\paragraph{Necessity for this source.}
Restrict to the legal open submodel $h=\zeta=0$, $c=\theta\in(-C_0,C_0)$,
$\tau=V\in(L,U)$ and put $N=n+k$. The same-budget fixed source has
observations $(Z_i,Y_i)$, cheap outcomes $C_j$, and direct outcomes $D_\ell$.
The legal target policy chooses $x=0$ for every diagnostic: it has observations
$(Z'_i,Y'_i)$ with the same law and $N$ centered outcomes $B_j$.

Suppose a parameter-independent kernel mapped source to target. For fixed
$\theta$, their $V$-scores are respectively
\begin{align*}
 s_F&=\frac{\sum_i(Y_i-\theta Z_i)^2+\sum_j C_j^2+
                    \sum_\ell(D_\ell-\theta)^2-(m+N)V}{2V^2},\\
 s_G&=\frac{\sum_i(Y'_i-\theta Z'_i)^2+\sum_j B_j^2-(m+N)V}{2V^2}.
\end{align*}
Both have second moment $(m+N)/(2V^2)$. Differentiation through a common
kernel gives $s_G=\E(s_F\mid\text{output})$; Gaussian square-integrable
scores justify this identity even for kernels without Lebesgue densities.
Equality of the second moments then forces $s_F=s_G$ almost surely.

Fix an interior $V_0$ and three distinct interior values of $\theta$.
Input laws are mutually absolutely continuous; appending the same kernel
preserves this property. Hence all three score identities hold on a common
full-probability set. After cancellation, they form a quadratic polynomial
identity in $\theta$. Its quadratic coefficients require
\[
       \sum_i(Z'_i)^2=\sum_i Z_i^2+k.
\]
Both covariate sums have expectation $m$, giving the contradiction $m=m+k$.
The argument even permits changing $O$ while preserving its prescribed joint
law. It proves the claimed zero-augmentation impossibility for $k>0$.

\paragraph{Interpretation and boundaries.}
For targets unchanged on shared $O$, simulation and inclusion of fixed policies give
$\mathcal R_{\rm fix}(n+1,k)\le\mathcal R_{\rm ad}(n,k)
\le\mathcal R_{\rm fix}(n,k)$.
Nonexistence of a same-budget kernel does not imply strict minimax-risk
improvement for any particular target. Nor is minimality claimed over other
changes to the source design or information resources.

If the total variance $V$ is supplied exactly, no augmentation is needed:
each requested diagnostic can be generated from an unused direct sample $D$
as $xD+\sqrt{1-x^2}\sqrt V\,\xi$, with fresh $\xi\sim N(0,1)$.
Cheap outcomes use the original bank. Knowing $\tau$ alone is not knowing
$V=\tau+h^2+\zeta$. Common Gaussian variance and bounded scalar assignments
are essential to the unknown-variance construction. Neither heterogeneous
variance nor general multivariate sensing is covered here.

\subsection{Same-budget approximate simulation}
Exact non-simulability does not imply a substantial approximate information gap.
The following result combines the reservoir construction with the
sufficient-statistic amplification approach of \citet{axelrod}.
The proof specializes this approach to a one-coordinate Gaussian scale
extension and derives the displayed constant.

\begin{theorem}
For $k\ge1$ and $d=n+k-1\ge1$, each admissible adaptive policy $\pi$ admits
a parameter-independent kernel from the same-budget fixed experiment such that
\[
 \sup_q \operatorname{TV}(K_\pi F_q(n,k),P_q^\pi(n,k))\le \frac1{2d}.
\]
The kernel preserves $O$. For a target $T(q,O)$ taking values in a fixed known
interval of length $B$, the minimax squared-error risks consequently satisfy
\[
 0\le\mathcal R_{\rm fix}(n,k)-\mathcal R_{\rm ad}(n,k)
       \le \frac{B^2}{2(n+k-1)}.
\]
\end{theorem}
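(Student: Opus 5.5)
The plan is to compose the exact reservoir simulator of the preceding theorem with a parameter-independent \emph{amplification} step. That step manufactures the one missing centered coordinate from the same-budget source. Concretely, apply the orthogonal transformation of the direct diagnostics to obtain $Z\sim N(\sqrt k\theta,V)$ together with $d=n+k-1$ centered $N(0,V)$ coordinates (the $k-1$ diagnostic residual directions and the $n$ cheap samples). Retain $O$ unchanged. By sufficiency and rotation invariance, the centered block is equivalent to its energy $S\sim V\chi^2_d$ together with an independent uniform direction on $S^{d-1}$. The exact simulator needs $d+1$ centered coordinates, equivalently an energy $S'\sim V\chi^2_{d+1}$ and a fresh uniform direction on $S^{d}$.

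The kernel I would use first draws a direction $u$ uniform on $S^{d}$ from visible randomness. It then sets the fake centered vector to $\sqrt{\lambda S}\,u$, where $\lambda=\lambda_d$ is a fixed known scale. It finally runs the reservoir construction of the previous theorem, with the same policy kernels, on $(O,Z,\sqrt{\lambda S}\,u)$. The kernel uses no unknown parameter: $V$ enters only through the law of $S$, and the map $S\mapsto\lambda S$ is scale-equivariant.

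Conditional on $(O,Z)$, the only discrepancy between $K_\pi F_q(n,k)$ and $P_q^\pi(n,k)$ is the law of the energy. On one side it is $\lambda V\chi^2_d=\mathrm{Gamma}(d/2,2\lambda V)$; on the other it is $V\chi^2_{d+1}=\mathrm{Gamma}((d+1)/2,2V)$. The direction is exactly uniform and independent on both sides. Since the reservoir map is a common Markov kernel, data processing gives
\[
 \operatorname{TV}(K_\pi F_q,P_q^\pi)\le\operatorname{TV}\bigl(\mathrm{Gamma}(d/2,2\lambda),\mathrm{Gamma}((d+1)/2,2)\bigr).
\]
The right-hand side is free of $V$ by scaling, and hence uniform in $q$.

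The main obstacle is bounding this Gamma distance by $1/(2d)$. I would compute $\mathrm{KL}(\mathrm{Gamma}(a,2\lambda)\,\|\,\mathrm{Gamma}(a+\tfrac12,2))$ with $a=d/2$ in closed form. It equals
\[
 \log\Gamma(a+\tfrac12)-\log\Gamma(a)+a\lambda-a-\tfrac12\log\lambda-\tfrac12\psi(a).
\]
Choosing $\lambda=1+1/(2a)$ matches means, after which I would bound the digamma and $\log\Gamma$-ratio terms using $\psi(a)\ge\log a-1/a$ and Gautschi/Wendel-type bounds. I expect the result to be $O(1/a^2)=O(1/d^2)$, and Pinsker would then give the $1/(2d)$ bound with room to spare. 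If the explicit constant proves awkward, I would instead optimize over $\lambda$ in the Axelrod-style one-coordinate Gaussian scale extension and bound Hellinger distance directly through the log-density ratio.

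For the risk statement, the inclusion of fixed direct sampling as an admissible policy gives $\mathcal R_{\rm ad}\le\mathcal R_{\rm fix}$. For the other side, take any adaptive policy and assessor $g$. Clip $g$ to the known interval of length $B$ containing $T(q,O)$; this can only reduce the loss. Because the kernel preserves $O$, the target is unchanged, so the loss $(g-T)^2\in[0,B^2]$ is a bounded function of the simulated transcript and $O$. The expectation of such a function changes by at most $B^2\operatorname{TV}\le B^2/(2d)$. Therefore the composed fixed-source rule has maximum risk at most the adaptive maximum risk plus $B^2/(2(n+k-1))$. Taking infima proves the displayed sandwich.
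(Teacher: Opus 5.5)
Your architecture is the paper's: the same kernel (a uniform direction on the $d$-sphere with squared radius $\lambda S$, $\lambda=(d+1)/d$), the same reduction by data processing to comparing $\lambda\chi^2_d$ with $\chi^2_{d+1}$, and the same bounded-loss transfer for the risk sandwich. The gap is in the one step that carries the quantitative content, the bound $\operatorname{KL}\le 1/(2d^2)$. That is exactly the level at which Pinsker returns $1/(2d)$, so an unspecified $O(1/d^2)$ would not give the stated constant for all $d\ge1$.

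First, your closed form is wrong. The correct divergence is
\[
 \log\Gamma(a+\tfrac12)-\log\Gamma(a)-\tfrac12\psi(a)+a(\lambda-1)-(a+\tfrac12)\log\lambda ,
\]
so the coefficient of $\log\lambda$ is $-(a+\tfrac12)$, not $-\tfrac12$. At $\lambda=1+1/(2a)$ your expression is about $\tfrac12$, not small.

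Second, even with the corrected formula, the tools you name cannot reach order $d^{-2}$. After the $\tfrac12\log a$ terms cancel, the three pieces contribute $-\tfrac1{8a}$, $+\tfrac1{4a}$ and $-\tfrac1{8a}$ at first order. Only their exact cancellation makes the divergence $O(a^{-2})$. The bound $\psi(a)\ge\log a-1/a$ overstates the digamma piece by $\tfrac1{4a}$. Wendel's $\Gamma(a+\tfrac12)/\Gamma(a)\le\sqrt a$ drops the $-\tfrac1{8a}$. Together they leave $\operatorname{KL}\lesssim 3/(4d)$, hence only $\operatorname{TV}=O(d^{-1/2})$.

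The missing idea is how the paper obtains the cancellation for free. Treat the divergence as a function of the target shape with means kept matched:
\[
 F(b)=\operatorname{KL}\bigl(\mathrm{Gamma}(a,2b/a)\,\|\,\mathrm{Gamma}(b,2)\bigr).
\]
Then $F(a)=0$ and $F'(b)=\psi(b)-\psi(a)+\log(a/b)$, so $F'(a)=0$. Also $F''(x)=\psi_1(x)-1/x\le x^{-2}$, by comparing $\sum_{j\ge0}(x+j)^{-2}$ with $x^{-2}+\int_0^\infty(x+t)^{-2}\,dt$. Taylor's theorem on $[a,a+\tfrac12]$ gives $F(a+\tfrac12)\le 1/(8a^2)=1/(2d^2)$ for every $d\ge1$, and Pinsker gives exactly $1/(2d)$.

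Your fallback of optimizing $\lambda$ does not help, because the mean-matching $\lambda$ already minimizes this divergence. The Hellinger alternative is not specified enough to supply a constant. The rest of your argument, including the clipped bounded-loss risk transfer with $O$ preserved, is fine as written.
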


\begin{proof}
Compress the fixed direct samples as before, retaining $Z$ and now only $d$
independent centered coordinates $E_i$. Set $S=\sum_i E_i^2$ and independently
draw $U$ uniformly on the unit sphere in $\R^{d+1}$. Replace the centered
reservoir by $\widetilde E=\sqrt{(d+1)S/d}\,U$, retaining $O,Z$.
This regenerates the reservoir rather than preserving the original samples.
Relative to $d+1$ genuine centered Gaussian coordinates, only the squared
radius differs: after division by $V$, the two laws are
$((d+1)/d)\chi_d^2$ and $\chi_{d+1}^2$.

Put $a=d/2$, $b=(d+1)/2$. These gamma laws have the same mean, and their
forward KL divergence is
\[
 F(b)=\log\Gamma(b)-\log\Gamma(a)-(b-a)\psi(a)
          +b\log(a/b)+(b-a).
\]
Here $F(a)=F'(a)=0$, while $F''(x)=\psi_1(x)-1/x\le1/x^2$:
the series $\psi_1(x)=\sum_{j\ge0}(x+j)^{-2}$ is at most
$x^{-2}+\int_0^\infty(x+t)^{-2}dt$.
Thus $F(b)\le(b-a)^2/(2a^2)=1/(2d^2)$ and Pinsker gives TV at most
$1/(2d)$. The common independent direction, $O$ and $Z$ preserve this bound.
Apply the exact reservoir kernel to obtain the transcript bound by contraction.

For the risk bound, clip decisions to the target interval. Squared loss is
then in $[0,B^2]$. Simulating an arbitrarily near-optimal adaptive policy and
decision changes its risk by at most $B^2/(2d)$ uniformly in the parameter.
Taking the minimax infimum proves the upper bound; inclusion of fixed policies
proves the lower bound.
\end{proof}

This implies relative risk equivalence along sequences with
$d\mathcal R_{\rm ad}\to\infty$, but not from this bound alone at risks of
order $1/d$ or smaller. No matching lower bound on approximation error is
claimed. Targets depending on intervention transcripts need a separate
target-preservation argument. When $k=0$ simulation is already exact.

%% file: appendix/strength.tex
\section{Supporting strength-assessment results}\label{app:strength}
For fixed $1<p<2$, define
\[
F(e,j)=\begin{cases}
e^{p/2},&j\le e^{-1/2},\\
e^{p-1}j^{p-2},&e^{-1/2}\le j\le e^{-1},\\
j^{-1},&j\ge e^{-1}.
\end{cases}
\]
The branches agree at their boundaries; at $j=0$ use the first branch.
\begin{theorem}\label{thm:strength}
For $m\ge32$, $n,k\ge0$, and $n+k\ge16$, minimax MSEs for fixed direct and adaptive sampling satisfy
\begin{align*}
 \mathcal R(|h|^p)&\asymp F(E_N,K)=\max\{F(E_N,k),m^{-1}\},\quad K=\min(k,m),\\
 \mathcal R(|c+h|^p)&\asymp\min\{1,k^{-1}\}.
\end{align*}
Constants depend on fixed model bounds and $p$.
\end{theorem}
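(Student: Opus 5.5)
The plan is to prove each line of Theorem~\ref{thm:strength} as a matching pair: an explicit plug-in upper bound in the fixed direct experiment, and a lower bound obtained from two-point or fuzzy-hypothesis constructions. Both bounds are then transferred to adaptive sampling through the sandwich~\eqref{eq:simulation-sandwich}. That transfer is legitimate because the targets $|h|^p$ and $|c+h|^p$ depend on $q$ only, not on the transcript. Replacing $n$ by $n+1$ changes $E_N$ by at most a factor of two, which leaves $F$ unchanged up to constants.

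\textbf{Second line: $|c+h|^p=|\theta|^p$.} This case is easy and I would do it first.

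\emph{Upper bound.} Use $\clip(|M|^p,0,(C_0+R)^p)$. Since $p>1$, the map $x\mapsto|x|^p$ is Lipschitz on the bounded range, so the risk is $O(\E(M-\theta)^2)=O(V_*/k)$. The trivial constant estimator gives risk $O(1)$, and this covers $k=0$.

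\emph{Lower bound.} Take $c=0$ and $\tau$ fixed, with $\theta_0=R/2$ and $\theta_1=\theta_0+k^{-1/2}\varepsilon$. To keep every auxiliary law fixed, move only $c$ while holding $h$ and $\tau$ fixed: then the cheap law $N(0,V)$ is unchanged. The observational law does change, so I would alternatively pair $(c,h)$ with $c+h$ shifted and $V$ fixed, choosing whichever pair keeps the KL divergence bounded by a constant. The target gap is of order $k^{-1/2}$, which gives the rate $k^{-1}$, and this matches the upper bound.

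\textbf{First line: $|h|^p$, upper bound.} Two statistics carry information about $h$:
\begin{itemize}
\item the magnitude statistic $\widehat V-\widehat\tau$, which estimates $h^2$ with MSE $O(E_N)$ by Appendix~\ref{app:pooled};
\item the signed difference $M-b$, which estimates $h$ with MSE $O(1/K)$, where $K=\min(k,m)$; here one uses the raw OLS fit for $b$.
\end{itemize}
I would build a thresholded combined estimator.
\begin{enumerate}
\item If $|M-b|$ exceeds a multiple of $K^{-1/2}\sqrt{\log}$, so that $|h|$ is resolved by the diagonal channel, report $|M-b|^p$. This has risk $O(1/K)$, since the derivative $p|h|^{p-1}$ is bounded.
\item Otherwise report $\clip(\widehat V-\widehat\tau,0,\cdot)^{p/2}$. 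Linearizing in the region $|h|\gtrsim E^{1/4}$ gives risk $h^{2p-4}E$, and below that scale the risk is at most $E^{p/2}$.
\end{enumerate}
Optimizing over the regime in which $|h|$ falls produces the three branches of $F(E_N,K)$, with the switching points at $K=E^{-1/2}$ and $K=E^{-1}$. The middle branch $E^{p-1}K^{p-2}$ should arise from $|h|\asymp K^{-1/2}$, where the magnitude channel has linearized error $h^{2p-4}E$. I would check this arithmetic carefully, and if necessary replace the second statistic by a debiased $(M-b)^2-\widehat V/k$ pooled with $Q$ by inverse-variance weighting. The additional $m^{-1}$ term in the maximum is needed because both channels depend on the observational sample, through $\widehat\tau$ and through $b$; it is matched by a lower-bound pair that perturbs $\tau$ or $c$.

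\textbf{First line, lower bound.} This is the main obstacle. The branch $K^{-1}$ follows from a two-point perturbation of $h$ at order $K^{-1/2}$ near $|h|=R/2$, compensating $\tau$ so that $V$ is unchanged. The branch $E^{p/2}$, and the middle branch, require the nonsmoothness of $|h|^p$ at zero. For these I would use a fuzzy-hypothesis, moment-matching construction in the spirit of the modulus calculation of \citet{donoholu} and \citet{cailow}. Put a symmetric sign prior on $h$ at scale $\rho$, which makes the diagonal channel nearly uninformative when $\rho\lesssim K^{-1/2}$. Pair it with a point mass $h=0$ whose variance deficit $\rho^2$ is absorbed by $\tau$ or $\zeta$, in the same way as the three pairs of Appendix~\ref{app:quality}. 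The magnitude channel is then indistinguishable when $\rho^2\lesssim\sqrt E$, while the targets differ by $\rho^p$.

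Choosing $\rho=E^{1/4}$ gives $E^{p/2}$ when $K\le E^{-1/2}$. In the middle range, take $\rho=K^{-1/2}$ with the magnitude separation limited by $E$; this calls for a modulus optimization, $\sup\{(\rho_1^p-\rho_0^p)^2:\rho_1^2-\rho_0^2\lesssim\sqrt E,\ \rho_1\lesssim K^{-1/2}\}$, which should return $E^{p-1}K^{p-2}$. Verifying that this modulus matches the upper bound exactly, with the KL divergence of the Gaussian sign mixture controlled by $K\rho^2$, is the step I expect to require the most care.
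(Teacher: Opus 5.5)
Your transfer to adaptive sampling through the simulation sandwich is legitimate for these parameter-only targets; the paper instead bounds the KL divergence directly for every capped policy. Your upper bound for $|c+h|^p$ is the same as the paper's. The remaining steps have genuine gaps.

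\textbf{The upper-bound rule uses the two channels the wrong way round.} At $|h|\asymp R/2$ your rule reports $|M-b|^p$ with probability tending to one. The independent diagnostic noise then makes its MSE of order $1/k$. Whenever $KE_N<1$ one has $K=k$ (since $E_N\ge1/m$) and $F(E_N,K)\le(KE_N)^{p-1}/K$. So the rule misses the claimed rate by the factor $(KE_N)^{1-p}$, which is unbounded as $KE_N\to0$; for clean $m=n=k^3$ it pays $1/k$ against $E_N^{p/2}\asymp k^{-3p/2}$.

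The mean channel carries information of order $K$ about $|h|$, and the variance channel information of order $h^2/E_N$. The mean should therefore dominate only when $|h|\lesssim\sqrt{KE_N}$, the opposite of your rule. For the same reason the middle branch arises at $|h|\asymp\sqrt{KE_N}$, not at $|h|\asymp K^{-1/2}$.

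The paper needs neither a threshold nor estimated weights. It takes $\widehat h\in\arg\min_{|t|\le R}\{(Q_{\rm raw}-t^2)^2/E_N+K(D-t)^2\}$ with $D=M-\widehat c$. The statistic $W=(Q_{\rm raw}-h^2)^2/E_N+K(D-h)^2$ has $\E W\le C$, and the triangle inequality gives $(\widehat h^2-h^2)^2/E_N+K(\widehat h-h)^2\le4W$. Hence $d^2(K+s^2/E_N)\le4W$ for $d=||\widehat h|-|h||$ and $s=|\widehat h|+|h|$, and maximizing $s^{2p-2}\min\{s^2,(K+s^2/E_N)^{-1}\}$ produces the three branches. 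Your fallback of pooling $(M-b)^2-\widehat V/k$ with $Q$ points the same way. However, its inverse-variance weight involves the unknown $h$ through $\operatorname{Var}((M-b)^2)\approx4h^2V/k$, so it would need its own analysis.

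\textbf{Two lower-bound constructions fail.}

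\emph{Middle branch.} Your modulus imposes $\rho_1\lesssim K^{-1/2}$, so it is at most $K^{-p}$. But $K^{-p}/(E_N^{p-1}K^{p-2})=(K^2E_N)^{1-p}\to0$ whenever $K^2E_N\to\infty$, so the bound is too weak. This branch is not a nonsmoothness-at-zero effect and needs no sign prior. The paper takes two positive points $h_0=\epsilon\sqrt{k/s}$ and $h_1=h_0+\epsilon/\sqrt k$. The diagnostic KL is then $O(\epsilon^2)$, and $h_1^2-h_0^2\le3\epsilon^2s^{-1/2}$ keeps the variance-channel KL $O(\epsilon^4)$. The squared gap $(ph_0^{p-1}d)^2$ has order $s^{1-p}k^{p-2}$. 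Here $s=n+k$, $m$ or $\Delta^{-2}$ according as the variance change is left uncompensated, absorbed by $\tau$, or absorbed by $\zeta$. The three bounds are combined through the monotonicity and bounded log-slope of $F(\cdot,k)$.

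\emph{Other branches of $|h|^p$.} Your first-branch construction works once the uncompensated option is included, though a plain two-point pair $h_0=0$, $h_1=\epsilon s^{-1/4}$ already suffices. Your $K^{-1}$ branch needs the same three-way choice. Compensation by $\tau$ alone costs observational KL of order $m/k$, and a shift of size $m^{-1/2}$ costs diagnostic KL of order $k/m$. The $1/m$ term comes from a separate pair that moves $c$ and $\tau$ together: $c_1=-d$, $h_1=r_0+d$, $\tau_1=\tau_0-2r_0d-d^2$, with $d\asymp m^{-1/2}$. This keeps $\theta$ and $V$, hence all experimental laws, unchanged.

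\emph{Lower bound for $|c+h|^p$.} None of your pairs is uniform in $(m,n)$. Moving $c$ costs observational KL $m\varepsilon^2/(2\tau k)$, moving $h$ with $\tau$ compensation costs order $m/k$, and moving $h$ without compensation costs order $(n+k)/k$. The missing idea is a sign flip at fixed nonzero $c$. Take $c=c_*>0$ and $h=\pm r$ with $r\le c_*/2$ of order $\min(1,k^{-1/2})$. Then the observational and cheap laws coincide exactly, and only the diagnostic mean moves, with KL at most $2kr^2/L$. Meanwhile $|c_*+r|^p-|c_*-r|^p\ge2p(c_*/2)^{p-1}r$. At $k=0$ the same pair with constant $r$ gives constant risk.
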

\paragraph{Estimator.}
Fit raw OLS and let $\widehat c=\clip(c_{\rm raw},-C_0,C_0)$ and $D=M-\widehat c$. With $Q_{\rm raw}=\widehat V-\widehat\tau$, choose a measurable global minimizer
\[
 \widehat h\in\mathop{\rm argmin}_{|t|\le R}
 \left\{\frac{(Q_{\rm raw}-t^2)^2}{E_N}+K(D-t)^2\right\}
\]
and report $|\widehat h|^p$. Omit the mean term at $k=0$. The variance statistic is pooled as in Appendix~\ref{app:pooled}. For $K>0$, a minimizer has the sign of $D$. On $u=|t|\ge0$, the derivative equation is
\[
 2u^3+(K E_N-2Q_{\rm raw})u-K E_N|D|=0.
\]
For $D\ne0$ this has a unique positive root; clip it to $[0,R]$. The zero-$D$ case is $u=\sqrt{\clip(Q_{\rm raw}-K E_N/2,0,R^2)}$. The code uses bracketed bisection with deterministic tie handling.

\paragraph{Modulus behind the three regimes.}
The moment bounds yield $\E W\le C$ for
$W=(Q_{\rm raw}-h^2)^2/E_N+K(D-h)^2$. Comparison with the feasible true parameter gives
\[
 (\widehat h^2-h^2)^2/E_N+K(\widehat h-h)^2\le4W.
\]
Writing $s=|t|+|h|$ and $d=||t|-|h||$ bounds the squared functional gap by $p^2s^{2p-2}d^2$, while distance controls $d^2(K+s^2/E_N)$. Maximizing
$\sup_{0<s\le2R}s^{2p-2}\min\{s^2,(K+s^2/E_N)^{-1}\}$ yields the three branches. This model-specific minimum-distance construction uses the modulus perspective of \citet{donoholu}. Appendix~\ref{app:proofs} supplies feasible pairs and all lower bounds.

%% file: appendix/proofs.tex
\section{Functional and oracle-target proofs}\label{app:proofs}
These proofs use the fixed model bounds in the main text.
Write $V_*=U+R^2+1/2$, $E=E_N$, $N=n+k$, and $\theta=c+h$.

\subsection{Information and testing tools}
For $u,w\ge L$ put $A(u,w)=[\log(w/u)+u/w-1]/2$. Taylor expansion in $u$
gives $A(u,w)\le(u-w)^2/(4L^2)$. Direct Gaussian integration and the chain
rule give, for every permitted policy $\pi$,
\begin{align*}
\operatorname{KL}(P_0^\pi,P_1^\pi)
={}&m\left[A(\tau_0,\tau_1)+\frac{(c_1-c_0)^2}{2\tau_1}\right]
 +\E_0N_{\rm exp} A(V_0,V_1)\\
&+\frac{(\theta_1-\theta_0)^2}{2V_1}
 \E_0\sum_{\rm diagnostics}x_t^2.
\end{align*}
At a shared history, policy and stopping kernels coincide and contribute zero
conditional KL. Pad stopped experiments with parameter-independent null symbols.
Fresh Gaussian responses supply each remaining term. All $m$ observational pairs
are included; omitting recorded information can only decrease KL. Pathwise caps yield
\[
\operatorname{KL}\le
\frac{m(\tau_1-\tau_0)^2+(n+k)(V_1-V_0)^2}{4L^2}
+\frac{m(c_1-c_0)^2+k(\theta_1-\theta_0)^2}{2L}.
\]
If KL is at most $1/8$, Pinsker gives total variation at most $1/4$. For two
fixed targets with separation $g$, midpoint testing then gives maximum MSE at
least $g^2(1-\mathrm{TV})/8\ge3g^2/32$.

\subsection{Residual upper bound}
In this subsection $Q=Q_{\rm raw}=\widehat V-\widehat\tau$. For raw OLS, $\sum Z_i^2\sim\chi_m^2$ and
$(m-1)\hat\tau/\tau\sim\chi_{m-1}^2$. Therefore
\[
\E(c_{\rm raw}-c)^2=\frac{\tau}{m-2},\qquad
\E\hat\tau=\tau,\qquad \operatorname{Var}(\hat\tau)=\frac{2\tau^2}{m-1}.
\]
Clipping decreases coefficient error. Independence of fresh datasets gives
\[
\E(Q-h^2)^2=\zeta^2+\frac{2V^2}{\nu}+\frac{2\tau^2}{m-1}\le CE,
\quad \E(D-h)^2\le\frac Vk+\frac{\tau}{m-2}.
\]
Here $\nu$ is the pooled residual degree of freedom from Appendix~\ref{app:pooled}. The latter cross term vanishes because diagnostic noise is centered, even when
clipping biases $\hat c$. Thus the $W$ in Appendix~\ref{app:strength} has bounded expectation without
requiring independence of $Q,D$. A measurable global minimizer exists on the
compact interval; use a deterministic tie rule. Weighted Euclidean distance,
comparison to $h$, and the triangle inequality give the stated distance bound.

For $s=|t|+|h|$ and $d=||t|-|h||$, the target gap is at most $ps^{p-1}d$,
$d\le s$, and squared distance is at least $d^2(K+s^2/E)$. At distance at
most $z$, squared target gap is bounded by $C\max(1,z)H$, where
\[
H=\sup_{0<s\le2R}s^{2p-2}\min\{s^2,(K+s^2/E)^{-1}\}.
\]
Put $N=1/E$. If $K\le\sqrt N$, split $s^2$ at $N^{-1/2}$ to obtain
$H\le N^{-p/2}$. If $\sqrt N\le K\le N$, substitute $s^2=(K/N)v$ and use
$\sup_{v>0}v^{p-1}/(1+v)<\infty$ to get $H\le C N^{1-p}K^{p-2}$.
If $K\ge N$, $H\le(2R)^{2p-2}/K$. Taking expectations at $z=4W$ proves
the upper bound, including $k=0$ with its omitted mean term.

\subsection{Residual lower bounds}
Fix $c=0$, $\tau_0=(L+U)/2$, and write $d=h_1-h_0\ge0$,
$D_2=h_1^2-h_0^2$. For $s\ge4$ use:
\begin{center}
\begin{tabular}{llll}
\toprule
Range & $h_0$ & $d$ & Squared target gap order\\
\midrule
$k\le\sqrt s$ & $0$ & $\epsilon s^{-1/4}$ & $s^{-p/2}$\\
$\sqrt s\le k\le s$ & $\epsilon\sqrt{k/s}$ & $\epsilon/\sqrt k$ & $s^{1-p}k^{p-2}$\\
$k\ge s$ & $\epsilon$ & $\epsilon/\sqrt k$ & $1/k$\\
\bottomrule
\end{tabular}
\end{center}
Choose
\[
0<\epsilon\le\min\{R/2,\sqrt{(U-L)/3},1/\sqrt3,\sqrt{L/16}\}.
\]
Then $h_1\le2\epsilon$, $D_2\le3\epsilon^2/\sqrt s$, and
$kd^2\le\epsilon^2$. The displayed target gaps follow from $d^p$ in the
first row and the lower mean-value bound $ph_0^{p-1}d$ otherwise.

For $s=N=n+k$, keep $\tau$ fixed and $\zeta=0$. The KL is at most
$9\epsilon^4/(4L^2)+\epsilon^2/(2L)\le25/512<1/8$, since $k\le N$.
For $s=m$, set $\tau_1=\tau_0-D_2$ and both contaminations zero. Experimental
variances coincide; the KL is no larger. Baseline feasibility follows from
$D_2\le3\epsilon^2/2\le(U-L)/2$. For $s=\Delta^{-2}$ when $\Delta>0$,
keep $\tau$ fixed and set $\zeta_0=\Delta$, $\zeta_1=\Delta-D_2$.
Here $D_2\le3\epsilon^2\Delta\le\Delta$, and only diagnostic means
contribute KL. Omit this case for $\Delta=0$. The three lower bounds are
$F(1/N,k)$, $F(1/m,k)$ and $F(\Delta^2,k)$.

For fixed $k$, $F(e,k)$ is continuous, nondecreasing, and has branchwise
logarithmic slope between zero and $p/2$. Therefore their maximum is at least
$3^{-p/2}F(E,k)$, since $E\le3\max(1/N,1/m,\Delta^2)$.

For the extra floor take $r_0=R/4$, $h_0=r_0$, $h_1=r_0+d$,
$c_0=0$, $c_1=-d$, $d=b/\sqrt m$, and
$\tau_1=\tau_0-2r_0d-d^2$, with zero contamination. Let
\[
H_0=\frac1{2L}+\frac{(2r_0+1)^2}{4L^2},\quad
0<b\le\min\{1,R/4,C_0,(U-L)/(2(2r_0+1)),(8H_0)^{-1/2}\}.
\]
All parameters are feasible. Every experimental conditional law coincides,
and observational KL is at most $b^2H_0\le1/8$. The target gap is at least
$pr_0^{p-1}d$, giving risk of order $1/m$. If $k\le m$ then
$F(E,k)\ge F(1/m,k)\ge1/m$; if $k>m$, both $m,k$ lie in the last branch
because $E\ge1/m$. Thus the combined lower bound is $F(E,\min(k,m))$.

\subsection{Total strength}
Clip the diagnostic mean to $[-C_0-R,C_0+R]$ and apply $|t|^p$.
Lipschitz continuity gives risk $C/k$; the zero estimator handles $k=0$.
For a lower bound choose fixed $0<c_*\le C_0$, fixed interior $\tau$,
zero contamination, and $h_\pm=\pm r$ with
$r=\min(R/2,c_*/2,b/\sqrt k)$ for $k\ge1$, or
$r=\min(R/2,c_*/2,b)$ at zero. Cheap and observational laws coincide.
Diagnostic KL is at most $2kr^2/L\le1/8$ for small fixed $b$.
The target gap is at least $2p(c_*/2)^{p-1}r$. Since $r^2$ is bounded below
by a fixed positive multiple of $1/k$ for integer $k\ge1$, testing proves the result.

\subsection{Squared-loss gains}
For population energy, projected $Q$ has risk $CE$, and the squared clipped
corrected diagnostic mean has risk $C/K$. Evaluate the residual hard-pair table
at power two to obtain lower components $\min(1/N,1/k)$,
$\min(1/m,1/k)$ and $\min(\Delta^2,1/k)$. The nonzero compensated pair
also gives $1/m$. Their maximum is $\min(E,1/K)$ up to constants.

For the fitted target, write $e=c-\bar c(O)$. Boundedness and OLS give
$\E e^2\le C/m$ and $|A_O-h^2|\le2(R+C_0)|e|$. Thus projected $Q$
has MSE $CE$ for $A_O$, without independence. The diagnostic estimate has
MSE $C/k$ conditional on $O$ by the Lipschitz square map. Count selection
gives the minimum.

For its lower bound set $r_0=R/4$, $c_0=c_1=0$, $h_0=\sigma r_0$,
$h_1=\sigma(r_0+d)$ and $\tau_0=(L+U)/2$, with the fixed sign selected
so that $P_{c=0,\tau_0}(\sigma b\le0)\ge1/2$. The shared-transcript overlap inequality below gives
maximum risk at least $r_0^2d^2/4$ whenever KL is at most $1/8$.
For completeness this overlap argument does not need a common training law.
On the full transcript space take $\mu=P_0+P_1$ and $p_i=dP_i/d\mu$.
For the same transcript and the two target functions $t_i(O)$,
$(a-t_0)^2+(a-t_1)^2\ge(t_0-t_1)^2/2$.
Thus a gap at least $g$ on $B$ gives
$\max_i R_i\ge(g^2/4)\int_B\min(p_0,p_1)d\mu
\ge(g^2/4)[P_0(B)-\operatorname{TV}(P_0,P_1)]$.
This also covers stopped transcripts and randomized assessments.
Use, respectively: (i) $d=b/\sqrt{n+k}$ with fixed $\tau$ and zero
contamination; (ii) $d=b/\sqrt{m+k}$ with $\tau_1=\tau_0-D_2$ and zero
contamination; (iii) $d=b\min(\Delta,k^{-1/2})$ with fixed $\tau$ and
$(\zeta_0,\zeta_1)=(\Delta,\Delta-D_2)$ when $\Delta>0$.
Take $k^{-1/2}=\infty$ at zero and choose
\[
0<b\le\min\{1,R/4,(U-L)/(2(2r_0+1)),(2r_0+1)^{-1},(8H_0)^{-1/2}\}.
\]
Then $D_2\le(2r_0+1)d$, all compensations are feasible, and the KL bounds
are at most $b^2H_0$, $b^2H_0$, and $b^2/(2L)$ respectively. The lower
scales are $(n+k)^{-1}$, $(m+k)^{-1}$ and $\min(\Delta^2,1/k)$.
Their maximum is comparable to $\min(E,1/k)$ since $k\le N$.
This proves the OLS companion.

For the training-rule extension, let $e=c-b_m(O)$. Then
$|e|\le C_0+B$, $\E e^2\le A_{\rm ref}/m$, and
\[
 |(h+e)^2-h^2|\le(2R+C_0+B)|e|.
\]
The same $Q$, still using the observational OLS residual variance regardless
of the reference training rule, therefore has risk $CE$ after projection onto
$[0,(C_0+R+B)^2]$. The diagnostic square map is
$2(C_0+R+B)$-Lipschitz, giving risk $C/k$.
For the lower bound choose $\sigma$ with
$P_{c=0,\tau_0}(\sigma b_m(O)\le0)\ge1/2$; the two half-line
probabilities sum to at least one, including any atom at zero. The sign is
chosen before observing data. The observational law does not depend on $h$,
and at a shared $o$ the target gap is
$d[2r_0+d-2\sigma b_m(o)]\ge2r_0d$ on this event.
All three pairs above have unchanged squared mean and variance differences,
so their KL and overlap bounds apply. Only the event probability under
$P_0$ is needed even when $\tau_1\ne\tau_0$.
For example, clipped $c_{\rm raw}+\lambda/\sqrt m$ with fixed
$\lambda\ne0$ need not be symmetric but satisfies the assumptions:
its coefficient MSE is at most $(32U/15+2\lambda^2)/m$ for $m\ge32$.
This extension does not cover references trained on the diagnostic outcomes
or assert an upper bound without the stated accuracy and boundedness.

These are sufficient, not necessary, assumptions. For any bounded reference,
the diagnostic upper bound and the preceding lower bound match at $1/k$
when $k\ge E^{-1}$, without coefficient accuracy. But boundedness alone
does not give the rate for all budgets: the zero reference targets $(c+h)^2$
and has risk $\asymp\min(1,1/k)$. To see its lower bound, fix
$c_*=C_0/2$, interior $\tau$, zero contamination and compare $h=\pm r$,
where $r=b/\sqrt{k}$ for $k\ge1$ and $r=b$ at zero, with fixed
$0<b\le\min(R/2,c_*/2,\sqrt L/4)$. Observational and cheap laws coincide,
adaptive KL is at most $2kr^2/L\le1/8$, and the target gap is $4c_*r$.
Testing proves the claim; a squared clipped diagnostic mean gives its upper
bound. Thus at $k=0$, $\Delta=0$, $n=m\to\infty$, a bounded inaccurate
reference can have constant risk while an accurate one has order $1/m$.

%% file: appendix/learned.tex
\section{Conditional gain of a learned repair}\label{app:learned}
Fix the direct-design experiment and the bounded, accurate observational reference
$b=b_m(O)$. Let $M$ be the diagnostic mean and train
$a=\operatorname{clip}(M,[-T,T])$, $T=C_0+R$, with $k\ge1$.
For a fresh evaluation assignment uniform on $\{-1,1\}$ and response with
conditional mean $\theta Z_*$ and variance $V$, the conditional population gain is
\[
 G_q(O,D)=(\theta-b)^2-(\theta-a)^2=A_q(O)-(\theta-a)^2.
\]
This is the gain of this particular trained repair, averaged over fresh evaluation
data only. It can be negative. It is neither an individual evaluation loss nor
a training-averaged algorithm risk. The learner is fixed before optimizing its
assessor; no optimization over adaptive repair-training policies is asserted.

\begin{theorem}
Let $R_A^{\rm fix},R_G^{\rm fix}$ denote minimax squared assessment risks in
the same fixed direct-design experiment. With $V_*=U+R^2+1/2$,
\[
 \left|\sqrt{R_G^{\rm fix}}-\sqrt{R_A^{\rm fix}}\right|
       \le \frac{\sqrt3 V_*}{k}.
\]
Consequently, when $k^2r\to\infty$, where $r=\min(E_N,1/k)$, $R_G^{\rm fix}/R_A^{\rm fix}\to1$.
Moreover, for $m\ge32$, $n+k\ge16$ with $k\ge1$,
\[
                 R_G^{\rm fix}\asymp r+k^{-2}.
\]
In particular, if $kE\ge1$ and $k\to\infty$, the learned-gain assessment
rate is $1/k$. Along $\Delta=0,n=k=m^2$, it is $m^{-2}$, compared with
$m^{-1}$ for the population-reference target $h^2$.
\end{theorem}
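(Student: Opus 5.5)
The plan has three parts: the transfer inequality, the equivalence corollary, and the rate, taken in that order.

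\textbf{Transfer inequality.} Compare assessors of $G$ and $A$ at the same transcript. We have $A-G=(\theta-a)^2$, and $a$ is a clipped mean with $|a-\theta|\le|M-\theta|$. Since $M-\theta\sim N(0,V/k)$ with $V\le V_*$, we get $\E(\theta-a)^4\le 3V^2/k^2\le 3V_*^2/k^2$, uniformly in $q$. Hence for any assessor $g$, Minkowski's inequality in $L_2(P_q)$ gives
\[
\|g-G\|_2\le\|g-A\|_2+\sqrt3V_*/k,
\]
and the same bound holds with $A$ and $G$ swapped. Take the supremum over $q$, then the infimum over $g$. Both classes are the same set of transcript functions, so the two root minimax risks differ by at most $\sqrt3V_*/k$. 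When $k^2r\to\infty$, Theorem~\ref{thm:targets} gives $R_A^{\rm fix}\asymp r$. Therefore $\sqrt{R_A^{\rm fix}}\,k\to\infty$, so the additive error is $o(\sqrt{R_A^{\rm fix}})$ and the ratio tends to one.

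\textbf{Upper bound of the rate.} Combine the transfer inequality with the upper part of Theorem~\ref{thm:targets}, namely $R_A^{\rm fix}\lesssim r$. The squared triangle inequality then gives $R_G^{\rm fix}\le 2Cr+6V_*^2/k^2$.

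\textbf{Lower bound $R_G^{\rm fix}\gtrsim r$.} I would reuse the three compensated pairs of Appendix~\ref{app:proofs} and the shared-transcript overlap inequality. At a common transcript, the gain changes with $\theta$ by $2(a-b)\,d\theta-\ldots$, more precisely $G_1-G_0=(\theta_1-\theta_0)(\theta_1+\theta_0-2a)+\ldots$. In the pairs, $\theta$ moves from $\sigma r_0$ to $\sigma(r_0+d)$ with $c=0$. The gap is therefore $d\,[2\sigma(a-b)+O(d)]$ after cancellation. On the event $\{\sigma b\le0,\ |a-\sigma r_0|\le r_0/2\}$, its $P_0$-probability is at least $3/8$ for large $k$, by Chebyshev on the clipped mean. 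This is exactly the argument already written in the proof of Theorem~\ref{thm:quality}, specialized to this fixed learner. It yields $R_G\ge c\,r$. For small $k$, constants absorb the bound.

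\textbf{Lower bound $R_G^{\rm fix}\gtrsim k^{-2}$ when $m\ge k$.} Apply the sign-pair Bayes bound from Appendix~\ref{app:quality}. The clipped mean is one feasible learner in $\mathcal F_2$ with $C_2=V_*$, so the bound $K^4w_0/(16k^2)$ applies directly. When $m<k$, the $1/k$ part of $r$ dominates $k^{-2}$.

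\textbf{Consequences and main obstacle.} The stated special cases are arithmetic. If $kE\ge1$, then $r=1/k\gg k^{-2}$. If $\Delta=0$ and $n=k=m^2$, then $E\asymp m^{-1}$, $r=m^{-2}$, and $k^{-2}=m^{-4}$. The population-reference comparison follows from Theorem~\ref{thm:targets} with $K=m$. The main obstacle is the small-$k$ edge: for $k\ge1$ but below the threshold $k_0$, one must check that the lower constants remain valid. I would handle this by noting that $r+k^{-2}\asymp1$ there and that a fixed nontrivial pair (for example pair (i) with $d$ of constant order) gives constant risk.
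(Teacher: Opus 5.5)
Your transfer inequality, the ratio corollary, the upper bound $R_G^{\rm fix}\le2Cr+6V_*^2/k^2$, and the arithmetic consequences match the paper's proof. For $k\ge k_0$ your lower bound is also sound: the clipped mean has learning MSE at most $V_*/k$, so it lies in $\mathcal F_2$ with $C_2=V_*$, and the learner-uniform bounds of Theorem~\ref{thm:quality} apply to it. (Incidentally, the shared-transcript gap is exactly $2(\theta_1-\theta_0)(a-b)$, with no $O(d)$ remainder.)

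The gap is the range $1\le k<k_0$, which the statement covers explicitly for arbitrary $n$ and $m\ge32$. Both arguments you borrow need large $k$. The three-pair step uses Chebyshev, requiring $4V_*/(kr_0^2)\le1/8$. The sign pair of Appendix~\ref{app:quality} needs $K/\sqrt k\le R$ and $m\ge k$.

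Your proposed patch fails. Pair (i) with constant-order $d$ changes $h^2$, and hence the common experimental variance $V$, by $D_2\asymp d$. The $n+k$ pooled responses then give KL of order $(n+k)D_2^2$, which is unbounded as $n\to\infty$ with $k$ fixed. With its admissible $d=\varepsilon/\sqrt{n+k}$, it yields only order $1/(n+k)$, not a constant. Pairs (ii) and (iii) fail in the same way: their KL grows with $m$, or the pair is unavailable when $\Delta=0$. No compensated pair gives a constant lower bound uniformly in the auxiliary budgets.

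The missing idea is a variance-preserving sign pair combined with anti-concentration of this fixed learner, in place of Chebyshev on its learning error. The paper takes $c=0$, $\tau=(L+U)/2$, $\zeta=0$ and $h_\pm=\pm a_0/\sqrt k$. The cheap and observational laws and $V$ then coincide, so KL $=2a_0^2/V\le1/32$ for every $n,m$.

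Conditional on $O$, the clipped mean has interior density at most $\sqrt{k/(2\pi V)}$, and each clipping atom has mass at most $1/2$. Hence $P_+(|a-b|<t_0/\sqrt k\mid O)<5/8$ uniformly in $b$, with no need for $|b|$ to be small and hence no $m\ge k$ condition. The shared-transcript gap $4a_0|a-b|/\sqrt k$ and the overlap inequality then give $R_G^{\rm fix}\ge a_0^2t_0^2/k^2$ for \emph{every} $k\ge1$.

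Adding this floor on $k<k_0$, where $r+k^{-2}\asymp1$, would complete your proof. The paper instead uses the floor for all $k$ and obtains the $r$ part by root-risk transfer from $R_A^{\rm fix}\ge cr$. With $\delta=\sqrt3V_*/k$: if $cr\ge4\delta^2$, then $R_G^{\rm fix}\ge cr/4$; otherwise $r\lesssim k^{-2}$ and the floor suffices. So the paper never needs the three-pair step or any large-$k$ threshold.
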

\begin{proof}
Projection cannot increase distance to $\theta\in[-T,T]$, so
\[
 \sup_q\E(A-G)^2\le\sup_q\E(M-\theta)^4
       \le 3V_*^2/k^2.
\]
For any assessor, the $L_2$ triangle inequality, followed by the parameter
supremum and assessor infimum, bounds the difference of root minimax risks by
this target distance. Reverse the two targets for the other direction.
The oracle-potential lower theorem applies to every policy, hence to this fixed
design, while its upper estimator already uses this design. Thus
$R_A^{\rm fix}\asymp r$. The asserted ratio follows when the target distance
is $o(\sqrt r)$. The squared triangle inequality also gives the all-budget
upper bound $R_G^{\rm fix}\le 2Cr+6V_*^2/k^2$.

For the additional lower bound choose fixed positive
$a_0\le\min(R/4,T/4,\sqrt L/8)$ and
$t_0\le\min(T/4,\sqrt L/16)$. Compare
$c=0,\tau=(L+U)/2,\zeta=0,h_\pm=\pm a_0/\sqrt k$.
The cheap and observational laws coincide and $V$ agrees, while
diagnostic KL is $2a_0^2/V\le1/32$, hence TV is at most $1/8$.
At a shared transcript, $G_+-G_-=4a_0(a-b)/\sqrt k$.

Conditional on $O$ under $P_+$, the interior density of $a$ is bounded by
$\sqrt{k/(2\pi V)}$. Each clipping atom has mass at most $1/2$, since
the mean is inside the clipping interval. An interval of radius $t_0/\sqrt k$
cannot contain both atoms. Uniformly in $b$,
\[
 P_+(|a-b|<t_0/\sqrt k\mid O)
 \le\frac12+\frac{2t_0}{\sqrt{2\pi L}}<\frac58.
\]
Thus on an event of probability at least $3/8$ the absolute target gap is
at least $g=4a_0t_0/k$. The shared-transcript overlap inequality applies also
to this diagnostic-dependent event, giving risk at least
$(g^2/4)(3/8-1/8)=a_0^2t_0^2/k^2$.

Finally let $R_A^{\rm fix}\ge cr$ and $\delta=\sqrt3 V_*/k$.
If $cr\ge4\delta^2$, root-risk transfer gives $R_G^{\rm fix}\ge cr/4$;
combine with the new floor. Otherwise $r<12V_*^2/(ck^2)$, so the
floor alone bounds $r+k^{-2}$ up to constants. This proves the full rate.
\end{proof}
The transfer alone is elementary and cannot establish the additional floor;
the sign-pair argument is essential to the all-budget statement. The floor is
minimax, not pointwise. Along $\Delta=0,n=m=k^3$, oracle-potential assessment
has rate $k^{-3}$ whereas this learned-gain assessment has rate $k^{-2}$.
Theorem~\ref{thm:quality} addresses joint learner selection under a quality constraint.

\paragraph{Two assessors and classical precedent.}
The plug-in $\widehat G=(a-b)^2$ satisfies
$\widehat G-G=2(a-b)(a-\theta)$, giving MSE at most
$4(T+B)^2V_*/k$ when $|b|\le B$. Its nonnegativity does not certify
that the true gain is positive. Data-dependent loss assessment and the
distinction from average risk have established theory \citep{fourdrinier}.
A classical Stein baseline, using the independent pooled variance from
Appendix~\ref{app:pooled} with $\nu=n+k-1>0$, is
\[
 U_G=2(a-b)M+b^2-a^2-\frac{2\widehat V}{k}{\bf1}\{|M|<T\}.
\]
The weak derivative of clipping is the displayed indicator, apart from
probability-zero endpoints. Gaussian integration by parts gives
$\E[(a-b)(M-\theta)\mid O]=(V/k)\E[{\bf1}\{|M|<T\}\mid O]$.
Independence and $\E\widehat V=V$ imply $\E[U_G-G\mid O]=0$.
This expectation still averages over repair training data, not just fresh
evaluation data; it is no conditional certificate for the observed repair.
Without clipping, the correction is $2\widehat V/k$, twice the
oracle-potential correction. Negative gain estimates are retained.

\paragraph{Numerical implementation.}
The implementation of the clipped-mean learner and the Stein assessor is
\path{experiments/core.py}. The target-scale and clipping-boundary suites in
Appendix~\ref{app:experiments} recompute the realized target on every draw;
negative gain estimates are retained.

%% file: appendix/cost.tex
\section{Acquisition cost for the three assessment targets}\label{app:cost}
Fix $\Delta=0$ and the unrestricted-budget direct experiment of Appendix~\ref{app:pooled},
with $m\ge32$, $N=n+k\ge16$. Fix a reference-rule family satisfying the
same uniform boundedness and accuracy assumptions for every $m$.
Define $C_J(\epsilon)$ as the infimum of $wm+n+\lambda k$ over admissible
integer budgets whose minimax squared assessment risk for target $J$ is at
most $\epsilon$. For $G$, keep the clipped-mean repair fixed and require
$k\ge1$. All constants below depend only on model/reference bounds, not on
positive prices $w,\lambda$, which may vary with $\epsilon$.

\paragraph{Corollary and proof.}
For sufficiently small $\epsilon$, the target-specific costs are
\begin{align*}
 C_H(\epsilon)&\asymp \frac{w+\min\{1,\lambda\}}{\epsilon},\\
 C_A(\epsilon)&\asymp \min\left\{\frac{w+1}{\epsilon},\;w+\frac{\lambda}{\epsilon}\right\},\\
 C_G(\epsilon)&\asymp \min\left\{\frac{w+1}{\epsilon}+\frac{\lambda}{\sqrt\epsilon},\;w+\frac{\lambda}{\epsilon}\right\}.
\end{align*}
Write $E_N=N^{-1}+m^{-1}$ and use $1/0=\infty$.
The population rate $\min(E_N,1/\min(k,m))$ forces
$m\gtrsim\epsilon^{-1}$ and $N\gtrsim\epsilon^{-1}$ on either route.
Thus its cost is at least a constant times
$(w+\min(1,\lambda))/\epsilon$. Buying the required observational samples
and then only the cheaper experimental type attains this order.

For fitted oracle potential, $\min(E_N,1/k)\lesssim\epsilon$ forces
either $m,N\gtrsim\epsilon^{-1}$ or $k\gtrsim\epsilon^{-1}$.
The second route costs at least order $w+\lambda/\epsilon$.
If $\lambda\ge1$, the first costs at least order $(w+1)/\epsilon$.
If $\lambda<1$, it instead costs at least order
$(w+\lambda)/\epsilon\ge w+\lambda/\epsilon$ for $\epsilon\le1$.
The minimum of the two stated costs is therefore a lower bound in order.
It is attained by either $m,n\asymp\epsilon^{-1},k=0$, or fixed admissible
$m$, $n=0$, $k\asymp\epsilon^{-1}$.

For learned gain, the extra $k^{-2}$ term additionally requires
$k\gtrsim\epsilon^{-1/2}$. The diagnostic route is unchanged.
On the variance route when $\lambda\ge1$,
$n+\lambda k=N+(\lambda-1)k$ is at least both $N$ and $\lambda k$,
and hence at least half their sum. This gives the lower cost
$(w+1)/\epsilon+\lambda/\sqrt\epsilon$ in order.
For $\lambda<1$, the fitted-potential lower argument already bounds the
diagnostic route's cost. Upper constructions use either
$m,n\asymp\epsilon^{-1}, k\asymp\epsilon^{-1/2}$ or the same diagnostic
route. Fixed minimum counts and rounding change only uniform constants.

This corollary optimizes budgets for assessing specified targets, not the repair
learner itself. Choosing never to change the reference would make its gain
identically zero and create a different, trivial assessment problem.
The clean-model restriction is essential; an unknown nonzero variance offset
can obstruct the cheap route at small tolerances. The formulas are order
statements rather than exact switching prices.

%% file: appendix/relaxation.tex
\section{A logarithmic allowance for easier gain assessment}\label{app:relaxation}
This appendix keeps the unknown-parameter model and fitted reference of
Appendix~\ref{app:quality}, but uses fixed direct sampling and a variable
learning-MSE tolerance $\eta$. Let $m\ge32$, $n\ge0$, $k\ge16$,
$N=n+k$, $E_N=\Delta^2+N^{-1}+m^{-1}$ and
\[
 \mathcal R(\eta)=\inf_{f:\sup_q\E_q(a_f-\theta)^2\le\eta}
                 \inf_g\sup_q\E_q(g-G_f)^2.
\]
All learners may use the full training transcript and parameter-independent
randomization. No oracle magnitude is provided to the construction.

\begin{theorem}\label{thm:logquality}
Fix $\beta>2$, $\Delta=0$, and $m=n=\lceil k^\beta\rceil$.
If $\mathcal R(\eta_k)=O(k^{-\beta})$, then
$\eta_k=\Omega(\log(k)/k)$. Conversely there is an explicit
$\eta_k\asymp\log(k)/k$ with
$\mathcal R(\eta_k)\asymp k^{-\beta}$.
Constants may depend on $\beta$ and fixed model/reference bounds.
At every fixed $C_2\ge V_*$,
Theorem~\ref{thm:quality} instead gives
$\mathcal R(C_2/k)\asymp k^{-2}$ on this sequence.
\end{theorem}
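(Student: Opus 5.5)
The plan is to read Theorem~\ref{thm:logquality} as a special case of the machinery already developed for Theorem~\ref{thm:sharp}, along the polynomial sequence $\Delta=0$, $m=n=\lceil k^\beta\rceil$. There $E_k=(n+k)^{-1}+m^{-1}\asymp k^{-\beta}$, so $k^2E_k\asymp k^{2-\beta}\to0$ because $\beta>2$. Hence
\[
 \ell_k=\log(1/(k^2E_k))=(\beta-2)\log k+O(1),
\]
which is $o(k)$. The fixed-direct risk $\mathcal R(\eta)$ of this appendix is $\mathcal R_{\rm fix}(\eta)$, so both directions reduce to the finite inequalities (J1), (J2) and \eqref{eq:finite-sign-window}, together with the nuisance lower bound $\mathcal R\gtrsim E_k$.

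\textbf{Necessity.} Suppose $\mathcal R(\eta_k)\le D_0k^{-\beta}\asymp D_0E_k$.
\begin{enumerate}
\item First I show $z_k=k\eta_k\to\infty$. If $z_k$ stayed bounded along a subsequence, the sign-pair floor would apply. In the spirit of Theorem~\ref{thm:quality} and Proposition~\ref{prop:sign-scope}(1), with $r=K/\sqrt k$ and $K^2$ a large multiple of $\max(\sup z_k,A_{\rm ref},1)$, it gives $\mathcal R\gtrsim k^{-2}$. This contradicts $\mathcal R=O(k^{-\beta})$ since $\beta>2$.
\item If $\eta_k\not\to0$ along some subsequence, then $\eta_k\ge c>\log k/k$ there, and the claim already holds on it. So I may assume $\eta_k\to0$ and $z_k\to\infty$.
\item The reference condition $\beta_{\rm ref}/\eta_k=A_{\rm ref}/(m\eta_k)\to0$ then holds trivially, since $m\eta_k\ge k^{\beta-1}z_k$.
\item Bound (J2) gives $\log(1/(k^2\mathcal R))\le(2/U+o(1))z_k$. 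The left side is at least $\ell_k-\log D_0=(\beta-2)\log k-O(1)$.
\end{enumerate}
Hence $z_k\ge(U(\beta-2)/2-o(1))\log k$, that is, $\eta_k=\Omega(\log k/k)$.

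\textbf{Sufficiency.} I would take $\eta_k=C\ell_k/k$ with a fixed $C>U/2$ and use Algorithm~\ref{alg:gate} with its fast slack. Then $\eta_k\asymp\log k/k\to0$ and $z_k\to\infty$, so the learning-feasibility argument of Appendix~\ref{app:sharp} (fast-slack paragraph) gives $\sup_q\E(a-\theta)^2\le\eta_k$ eventually. Choosing $1/C<a_0<2/U$ in (J1) gives
\[
 \mathcal R\le C_1E_k+C_2k^{-2}e^{-a_0C\ell_k}=C_1E_k+C_2k^{-2}(k^2E_k)^{a_0C}.
\]
Since $a_0C>1$ and $k^2E_k\to0$, the second term is $o(E_k)$, so $\mathcal R=O(k^{-\beta})$. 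The matching lower bound $\mathcal R\gtrsim E_k\asymp k^{-\beta}$ is the nuisance bound of Theorem~\ref{thm:quality}, whose three pairs are valid for every feasible learner. Finally, the comparison statement at $\eta=C_2/k$ is Theorem~\ref{thm:quality} itself: $r_N=\min(E_N,1/k)\asymp k^{-\beta}$ is dominated by $k^{-2}$.

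\textbf{Main obstacle.} The delicate step is necessity, where the bounded and oscillating regimes of $z_k$ must be excluded uniformly over arbitrary feasible learners, with no fourth-moment assumption. I would handle it by the subsequence argument above: the pointwise posterior-square identity supplies the $k^{-2}$ floor when $z_k$ is bounded, and \eqref{eq:finite-sign-window} with $\varepsilon=\sqrt{\log z/z}$ gives a quantitative version when $z_k\to\infty$. The remaining conditions, $\ell_k=o(k)$ and $A_{\rm ref}/(m\eta_k)\to0$, are immediate for polynomial budgets.
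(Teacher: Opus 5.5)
Your proof is correct, but it takes a genuinely different route from the paper. You derive the theorem as a corollary of the leading-log machinery for Theorem~\ref{thm:sharp}: (J1), (J2), the fast-slack diagnostic gate, and the $k^{-2}$ floor of Theorem~\ref{thm:quality} to force $k\eta_k\to\infty$.

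The paper argues elementarily, with explicit finite-sample inequalities.
\begin{itemize}
\item \emph{Necessity.} It uses the single bound \eqref{eq:qualityexponential} with $x=k\eta+kA_{\rm ref}/m+1$. The ``$+1$'' keeps the sign pair nondegenerate even when $k\eta_k$ is bounded, so one inequality covers every subsequence. No case split and no appeal to the $k^{-2}$ floor are needed.
\item \emph{Sufficiency.} It uses the magnitude-only selective rule with $t_k^2=24V_*(\beta-2)\log k/k$, taking $\eta_k$ to be the explicit right side of \eqref{eq:selectivequality} and reading the exponent off \eqref{eq:selectivegain}.
\end{itemize}
That route is self-contained and logically prior to Appendix~\ref{app:sharp}, which borrows its order bound and sign-error lemma. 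It yields only the order, with loose constants ($L/512$ versus $48V_*$).

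Your route yields the sharp constant: $\liminf_k k\eta_k/\log k\ge U(\beta-2)/2$, and sufficiency at $C\ell_k/k$ for every fixed $C>U/2$. The cost is the joint-tail inequality, the fast-slack feasibility argument, and a bounded-versus-divergent subsequence split. You avoid circularity correctly. Theorem~\ref{thm:sharp}'s own necessity step obtains $k\eta_k\to\infty$ from this appendix's order bound, whereas you obtain it independently from Theorem~\ref{thm:quality}.

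One step needs a small repair. At $\eta_k=C\ell_k/k$ you cannot quote the matching lower bound $\mathcal R(\eta_k)\gtrsim E_k$ from the \emph{statement} of Theorem~\ref{thm:quality}. The risk $\mathcal R$ is nonincreasing in the allowance, and $\eta_k\gg C_2/k$, so a lower bound at $C_2/k$ does not transfer upward. You must rerun the three-pair argument of Appendix~\ref{app:quality} at the new allowance. Its null event still has probability at least $1/2-4\eta/(R/4)^2\ge 3/8$ whenever $\eta\le R^2/512$, which holds eventually because $\eta_k\to0$; this is exactly what the paper does. The same remark applies wherever you invoke ``the nuisance lower bound'' for vanishing $\eta_k$.
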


\subsection{Selective upper construction}
Use the raw OLS residual variance $\widehat\tau$ and pooled experimental
variance $\widehat V$ of Appendix~\ref{app:pooled}. For deterministic $t^2>0$ put
\[
 Q=\operatorname{clip}(\widehat V-\widehat\tau,[0,R^2]),\quad
 u=\sqrt Q,\quad I=\mathbf1\{Q\ge t^2\}.
\]
\[
 s=\begin{cases}1&M\ge b,\\-1&M<b,\end{cases}
 \qquad a=b+Isu,\quad g=IQ.
\]
Here $M$ is the direct diagnostic mean; the report assesses this learner's
actual gain, not oracle potential. With $C_Q=\max(1,4V_*^2)$,
independence of the two variance estimators and projection give
\[
 \E(Q-h^2)^2\le\Delta^2+4V_*^2/N+4U^2/m\le C_QE_N.
\]
The following bounds are uniform over the full parameter class:
\begin{align}
 \E(a-\theta)^2&\le
 2t^2+6\sqrt{C_QE_N}+66A_{\rm ref}/m+128V_*/k,
 \label{eq:selectivequality}\\
 \E(g-G_f)^2&\le300C_QE_N+204R^2A_{\rm ref}/m
 +\frac{73728V_*^2}{k^2}
           \exp\{-kt^2/(24V_*)\}.\label{eq:selectivegain}
\end{align}

For completeness write $e=c-b$, $\xi=M-\theta$ and $\delta=Q-h^2$.
Let $W$ denote an incorrect sign choice; terms multiplied by $h^2$ vanish
at $h=0$. Then
\[
 W\subset\{|e|\ge |h|/2\}\cup
           \{\operatorname{sign}(h)\xi\le-|h|/2\},\qquad
 h^2P(W)\le4A_{\rm ref}/m+8V_*/k.
\]
The last inequality uses Markov and a one-sided Gaussian tail; it does not
factor either event from the data-dependent magnitude estimate.
Since $(u-|h|)^2\le|\delta|$ and, on $I=0$,
$h^2\le t^2+|\delta|$, the pointwise inequality
\[
 (a-\theta)^2\le2e^2+2h^2(1-I)
             +4I(u-|h|)^2+16Ih^2\mathbf1_W
\]
proves~\eqref{eq:selectivequality}.

On $I=1$, $g-G_f=2u[u-s(h+e)]$. Factoring
$\delta=(u-|h|)(u+|h|)$ gives
\[
 (g-G_f)^2\le12\delta^2+48IQh^2\mathbf1_W+12R^2e^2.
\]
The reference-error part of $W$ contributes at most
$192R^2\E e^2$. On $Q>3h^2/2$, $Qh^2\le6\delta^2$;
this contributes at most $288\E\delta^2$. On its complement,
$Qh^2\le3h^4/2$, and activation implies $h^2\ge2t^2/3$.
The remaining contribution is at most
\[
 72h^4e^{-kh^2/(8V_*)}\mathbf1\{h^2\ge2t^2/3\}
 \le\frac{73728V_*^2}{k^2}e^{-kt^2/(24V_*)},
\]
using $x^2e^{-x/2}\le16$. This proves~\eqref{eq:selectivegain}
without assuming independence of $Q$ and $b$.

\subsection{A lower bound retaining dependence on the allowance}
Set $x=k\eta+kA_{\rm ref}/m+1$ and $r^2=64x/k$.
Whenever $r^2\le R^2$, a bound uniform over all eligible learners is
\begin{equation}
 \mathcal R(\eta)\ge\frac{256x^2}{k^2}
 \exp\{-128x/L-64\sqrt{V_*x}/L\}.\label{eq:qualityexponential}
\end{equation}
To prove it take the legal pair $c=0$, $\tau=(L+U)/2$, $\zeta=0$,
$h=\pm r$. Its common variance is $V=\tau+r^2$ and only $M$ informs
the sign. The equal-prior Bayes gain risk for any chosen learner is
$\E[4r^2(a-b)^2\operatorname{sech}^2(krM/V)]$.
The pointwise conditional square identity in Appendix~\ref{app:quality} justifies this
even without a globally square-integrable gain target.

Under the positive sign the event
\[
 B=\{|a-r|\le r/2,\ |b|\le r/4,
              \ |M-r|\le4\sqrt{V_*/k}\}
\]
has probability at least $5/8$: the complementary probabilities are at
most $4\eta/r^2\le1/16$, $16A_{\rm ref}/(mr^2)\le1/4$ and $1/16$.
No independence of events is used. Its mixture probability is at least
$1/4$. On $B$, $a-b\ge r/4$ and
$|krM/V|\le(kr^2+4r\sqrt{kV_*})/L$.
Using $\operatorname{sech}^2(z)\ge e^{-2|z|}$ yields
\[
 \mathcal R(\eta)\ge\frac{r^4}{16}
       e^{-2kr^2/L-8r\sqrt{kV_*}/L},
\]
which is~\eqref{eq:qualityexponential}. The pair depends on declared
$\eta$ and counts, not observed data; no small-TV claim is needed.

\subsection{Proof of Theorem~\ref{thm:logquality}}
Suppose $\mathcal R(\eta_k)\le Dk^{-\beta}$ eventually. If
$\eta_k\le L(\beta-2)\log(k)/(512k)$ on an unbounded subsequence,
then $x\le L(\beta-2)\log(k)/512+1+o(1)$. The pair above is feasible
eventually, its linear exponent is at most
$(\beta-2)\log(k)/4+O(1)$, and its square-root exponent is $o(\log k)$.
Since $x\ge1$,~\eqref{eq:qualityexponential} is eventually at least
$256k^{-2-(\beta-2)/2}$ on this subsequence, contradicting the fixed $D$.
This proves necessity.

For sufficiency take $t_k^2=24V_*(\beta-2)\log(k)/k$ and define
$\eta_k$ as the right side of~\eqref{eq:selectivequality}.
It is $\asymp\log(k)/k$ and~\eqref{eq:selectivegain} is $O(k^{-\beta})$.
For a matching lower bound, the first three-pair argument in Appendix~\ref{app:quality}
works uniformly for any $\eta\le R^2/512$: its null event probability is
at least $1/2-4\eta/(R/4)^2\ge3/8$. It therefore gives
$\mathcal R(\eta)\ge c\min(E_N,1/k)$ independently of $\eta$.
Our $\eta_k$ eventually satisfies this condition, completing the proof.

This proves an order threshold for each fixed $\beta>2$, not a sharp constant,
a complete frontier, or beneficial deployment. The next result extends the
budget sequences and sampling scope explicitly.
The exponential comparison uses constrained-risk reasoning \citep{duchiruan}; Appendix~\ref{app:sharp} identifies the leading constant.

\subsection{Information-ratio threshold and adaptive extension}
For varying known bounds $\Delta_k$ and budgets define
\[
 E_k=\Delta_k^2+(n_k+k)^{-1}+m_k^{-1},\qquad
 \ell_k=\log\{1/(k^2E_k)\}.
\]
\begin{theorem}\label{thm:informationquality}
Suppose $k^2E_k\to0$ and $\ell_k=o(k)$. Assessment risk $O(E_k)$
requires a uniform learning-MSE tolerance $\eta_k=\Omega(\ell_k/k)$.
Conversely the selective construction attains joint minimax assessment
risk of order $E_k$ at an explicit tolerance
$\eta_k\sim48V_*\ell_k/k$. Both conclusions hold also for adaptive
sampling with pathwise caps and the model of Appendix~\ref{app:quality}.
\end{theorem}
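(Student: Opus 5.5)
The plan is to reuse the two finite inequalities already established for the selective construction, \eqref{eq:selectivequality}--\eqref{eq:selectivegain}, together with the allowance-dependent lower bound \eqref{eq:qualityexponential}. The only change from Theorem~\ref{thm:logquality} is that $\log k$ is replaced by $\ell_k$. The adaptive claims then follow from the exact simulator sandwich \eqref{eq:simulation-sandwich}.

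\emph{Necessity.} Suppose $\mathcal R_k(\eta_k)\le D E_k$ eventually, and suppose that on some subsequence $k\eta_k\le \gamma\ell_k$ for a small fixed $\gamma$ to be chosen. Set $x=k\eta_k+kA_{\rm ref}/m_k+1$. Since $k^2E_k\to0$, we have $k/m_k\le kE_k=o(1/k)$, so $x\le\gamma\ell_k+1+o(1)$. The pair radius $r^2=64x/k=O(\ell_k/k)\to0$ because $\ell_k=o(k)$, so the pair is eventually legal ($r^2\le R^2$).

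Substituting into \eqref{eq:qualityexponential} gives
\[
 k^2\mathcal R_k\ge 256x^2\exp\{-128\gamma\ell_k/L-O(\sqrt{\ell_k})\}.
\]
Choose $\gamma<L/256$, so the linear exponent is at most $\ell_k/2$. The square-root term is $o(\ell_k)$ since $\ell_k\to\infty$. Hence $k^2\mathcal R_k\ge e^{-\ell_k/2-o(\ell_k)}=(k^2E_k)^{1/2+o(1)}$. This is much larger than $Dk^2E_k$, because $k^2E_k\to0$, which is a contradiction. Therefore $\eta_k=\Omega(\ell_k/k)$.

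For the adaptive class, apply this fixed-design bound with source counts $(m,n+1,k)$. This changes $E_k$ by at most a factor of two and $\ell_k$ by $O(1)$. The simulator preserves the learning constraint, so the lower bound transfers.

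\emph{Sufficiency.} Take $t_k^2=24V_*\ell_k/k$, so that the exponential term in \eqref{eq:selectivegain} becomes $73728V_*^2k^{-2}e^{-\ell_k}=73728V_*^2E_k$. The remaining terms are $O(E_k)$, using $A_{\rm ref}/m\le A_{\rm ref}E_k$. Hence assessment risk is $O(E_k)$. The tolerance is the right side of \eqref{eq:selectivequality}:
\[
 \eta_k=2t_k^2+6\sqrt{C_QE_k}+66A_{\rm ref}/m_k+128V_*/k .
\]
Here $\sqrt{E_k}=o(1/k)$ and $1/m_k=o(1/k^2)$, while $128V_*/k=o(\ell_k/k)$ because $\ell_k\to\infty$. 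Therefore $\eta_k\sim 48V_*\ell_k/k$.

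The matching lower bound $\mathcal R\ge c\min(E_k,1/k)=cE_k$ comes from the three-pair argument of Appendix~\ref{app:quality}. That argument is valid for any $\eta\le R^2/512$, and this condition holds eventually since $\eta_k\to0$. The adaptive upper bound is immediate because fixed direct sampling is an admissible policy.

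\emph{Main obstacle.} The delicate step is the necessity bookkeeping. The linear coefficient $128/L$ must be beaten by a small enough $\gamma$, while the $\sqrt{V_*x}$ term and the reference contribution $kA_{\rm ref}/m$ must both be shown to be $o(\ell_k)$ uniformly. This needs $\ell_k\to\infty$ together with $k^2/m_k\to0$, both of which follow from $k^2E_k\to0$. The pair must also remain feasible, which is exactly where $\ell_k=o(k)$ is used.
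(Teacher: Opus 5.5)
Your proposal is correct and follows the paper's proof essentially step for step: necessity via the allowance-dependent sign-pair bound \eqref{eq:qualityexponential} (your cutoff $\gamma<L/256$ versus the paper's $L/512$ is immaterial), sufficiency via $t_k^2=24V_*\ell_k/k$ in \eqref{eq:selectivequality}--\eqref{eq:selectivegain} plus the uniform three-pair lower bound $cE_k$, and transfer to adaptive sampling through the one-extra-sample simulator sandwich. The one point you leave implicit is the matching adaptive lower bound $cE_k$ in the sufficiency part; it follows from the same sandwich (or directly from the policy-uniform KL bounds), as your opening remark indicates.
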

\begin{proof}
For fixed sampling, $kA_{\rm ref}/m_k\le kA_{\rm ref}E_k=o(1)$.
If $\eta_k\le L\ell_k/(512k)$ along an unbounded subsequence, then
the $x$ in~\eqref{eq:qualityexponential} is at most
$L\ell_k/512+1+o(1)$. Its sign pair is feasible eventually by
$\ell_k=o(k)$, and its exponent is at most $\ell_k/2$ eventually.
The lower bound divided by $E_k=k^{-2}e^{-\ell_k}$ is at least
$256e^{\ell_k/2}$, ruling out assessment $O(E_k)$.

For sufficiency use $t_k^2=24V_*\ell_k/k$ and the tolerance in
\eqref{eq:selectivequality}. Since $\ell_k\to\infty$ and
$\sqrt{E_k}=o(1/k)$, that tolerance is asymptotic to
$48V_*\ell_k/k$. Equation~\eqref{eq:selectivegain} is $O(E_k)$.
The tolerance tends to zero, so the preceding uniform three-pair argument
gives the matching lower bound $cE_k$.

For adaptive sampling use the exact simulator of Appendix~\ref{app:simulation}, retaining
the original learner and assessor on the simulated transcript and the
unchanged observational reference. Equality of laws preserves both the
arbitrary $\eta$ constraint and the random gain target, giving
\[
 \mathcal R_{\rm fix}(m,n+1,k;\eta)
 \le\mathcal R_{\rm ad}(m,n,k;\eta)
 \le\mathcal R_{\rm fix}(m,n,k;\eta).
\]
For $N=n+k$, the source precision $E'_k$ satisfies
$NE_k/(N+1)\le E'_k\le E_k$, and
$0\le\ell'_k-\ell_k\le\log(1+1/N)$. Thus the fixed-source necessity
applies, while fixed selective sampling supplies the adaptive upper bound.
The source three-pair bound also supplies the matching adaptive lower bound.
No bounded-loss approximation or fourth-moment assumption on all learners
is introduced.
\end{proof}

The construction's actual worst-case learning MSE has the same order as
its allowance. At a legal local point $c=0$, $h=t_k/2$, interior $\tau$
and $\zeta=0$, Markov bounds yield
\[
 \sup_q\E(a-\theta)^2\ge\frac{t_k^2}{16}
 \left[1-\frac{16C_QE_k}{9t_k^4}
          -\frac{16A_{\rm ref}}{m_kt_k^2}\right]_+.
\]
Indeed activation requires $Q-h^2\ge3t_k^2/4$; on inactivity and
$|c-b|\le t_k/4$, the learning error is at least $t_k/4$.
The bracket tends to one. This is a worst-case local sequence, not poor
learning at every fixed parameter.

For example, clean auxiliary counts $m=n\asymp k^2\log k$ give required
allowance $\log\log(k)/k$, whereas $m=n\asymp k^2e^{\sqrt k}$ give
allowance $k^{-1/2}$. Each targets its corresponding $E_k$; the assessment
requirements differ. The theorem characterizes risk orders; Theorem~\ref{thm:sharp} resolves the leading exponential constant.

\paragraph{Intermediate risks on a logarithmic scale.}
More generally, if $k^2E_k\to0$, $\eta_k\to0$ and $z_k=k\eta_k\to\infty$,
then for either sampling class
\[
 \log\{1/[k^2\mathcal R(\eta_k)]\}
       \asymp\min(\ell_k,z_k).
\]
For an upper bound choose $t_k^2=\eta_k/4$ in the selective rule.
Its learning bound is $\eta_k/2+o(\eta_k)$, hence feasible at the
requested tolerance, and its assessment bound is
$CE_k+Ck^{-2}e^{-z_k/(96V_*)}$.
For a lower bound retain $\mathcal R(\eta_k)\ge cE_k$ and use
\eqref{eq:qualityexponential} with $x_k=z_k+1+o(1)\le2z_k$ eventually.
The pair is legal since $x_k/k\to0$, and that bound is at least
$k^{-2}e^{-Az_k}$ for a fixed $A>0$.
Combining the two lower bounds and taking logarithms proves the claim;
the adaptive source changes $E_k$ by at most a factor two.
This does not require $\ell_k=o(k)$. It matches logarithms only:
different constants inside the exponent leave a potentially unbounded
multiplicative risk gap. This particular magnitude-only argument does not
identify the leading constant. Appendix~\ref{app:sharp} strengthens the
leading-log statement using a different, diagnostic-dependent rule; even
that refinement does not establish multiplicative risk equivalence.

\paragraph{Boundary for consistent learning.}
Under $k^2E_k\to0$, a vanishing tolerance $\eta_k\to0$ can yield
assessment $O(E_k)$ if and only if $\ell_k=o(k)$, for either sampling class
above. Sufficiency is Theorem~\ref{thm:informationquality}. For necessity,
$x_k=k\eta_k+kA_{\rm ref}/m_k+1$ obeys $x_k\ge1$ and $x_k/k\to0$,
so its sign pair is eventually feasible. Equation~\eqref{eq:qualityexponential}
gives
\[
 \log\frac{\mathcal R(\eta_k)}{E_k}
 \ge\log256+2\log x_k+\ell_k
       -128x_k/L-64\sqrt{V_*x_k}/L.
\]
The negative terms are $o(k)$. If $\ell_k\not=o(k)$, a subsequence
has $\ell_k\ge ak$ for some $a>0$, contradicting bounded risk ratio.
The same source sandwich proves adaptive necessity. Thus, for example,
clean $m=n\asymp e^{ak}$ precludes simultaneous uniform learning
consistency and assessment at full auxiliary precision. Slower assessment
and nonvanishing learning allowances are not ruled out; their full frontier,
and expected-only sampling budgets, remain outside these results.

\subsection{Relation to assessing the learned loss}
Fix the direct-design experiment and a nonempty common learner class
$\mathcal F_\eta$. Write $A=(\theta-b)^2$, $L_f=(\theta-a_f)^2$ and
$G_f=A-L_f$. Let $\mathcal R_G$ and $\mathcal R_L$ be the respective
joint minimax squared assessment risks, with the same quality constraint.
The pooled fitted-potential estimator obeys
$\sup_q\E(\widehat A-A)^2\le\epsilon_A^2
=C\min(E_N,1/k)$ independently of the choice of learner.
For any gain assessor $g$ use the loss assessor $l=\widehat A-g$.
Pointwise, $l-L_f=(\widehat A-A)-(g-G_f)$. Minkowski's inequality,
then the supremum over parameters and infima over assessors and learners,
give $\sqrt{\mathcal R_L}\le\epsilon_A+\sqrt{\mathcal R_G}$.
Near-minimizers suffice; attainment is not assumed. Reversing the
transformation gives, when the infima are finite,
\[
 |\sqrt{\mathcal R_G}-\sqrt{\mathcal R_L}|\le\epsilon_A.
\]
Only assessment errors need second moments; neither target is assumed
square-integrable for every feasible learner. A finite-risk choice on
one side and the conversion ensure finiteness on the other. No independence
between $\widehat A$, the learner and the assessor is needed.

If $\mathcal R_G/E_N\to\infty$, this inequality yields
$\mathcal R_L/\mathcal R_G\to1$. In particular this holds in the
log-scale regime above when $\ell_k/(k\eta_k)\to\infty$, since the
gain lower bound divided by $E_k$ diverges. Thus this portion of the frontier also characterizes assessment of the learned loss. At $\mathcal R_G=O(E_N)$, the transfer gives only
$\mathcal R_L=O(E_N)$, not a matching loss lower bound.
Nor does it grant the requisite $\widehat A$ to an arbitrary adaptive
transcript; this target-transfer statement is fixed-design only.

\subsection{Fixed contamination can make more data harmful for this rule}
Choose $0<\gamma<\min(R^2,1/2)$ and the legal point
$c=h=\theta=0$, $\tau=(L+U)/2$, $\zeta=\gamma$.
If $m,n+k\to\infty$, $k\ge16$ and $t_k^2\to0$, the selective rule satisfies
\[
 \E a^2\to\gamma,\qquad \E G\to-\gamma,\qquad
 \E(g-G)^2\to4\gamma^2,\qquad \Pr(G<0)\to1.
\]
Indeed the pooled variance and observational residual estimates converge
in $L^2$ to $\tau+\gamma$ and $\tau$. Thus bounded $Q$ tends to $\gamma$
in every finite $L^p$, activation $I$ tends to one in probability, and
$b\to0$ in $L^2$. For either diagnostic sign $s$,
\[
 a^2=b^2+2bIs\sqrt Q+IQ,\quad
 G=-2bIs\sqrt Q-IQ,\quad g-G=2IQ+2bIs\sqrt Q.
\]
The cross term vanishes in $L^2$ since $Q\le R^2$; these identities imply
the asserted limits without independence or sign consistency.
This is failure of this construction, not impossibility of robust learning:
the direct mean has learning MSE $(\tau+\gamma)/k$.
It lies outside the rich-information theorem because $E_k\ge\gamma^2$.

The paired robustness suite in Appendix~\ref{app:experiments} includes zero-effect contamination cases and a diagnostic fallback.

%% file: appendix/robust.tex
\section{Robust learning and projected assessment}\label{app:robust}
This extension retains fixed direct sampling and all model/reference assumptions.
It does not extend to arbitrary adaptive transcripts. Constants below depend
only on the fixed model/reference bounds. Write $V_*=U+R^2+1/2$, $T=C_0+R$,
and $E_N=\Delta^2+(n+k)^{-1}+m^{-1}$, with $m\ge32$, $k\ge16$, $n\ge0$.

For a declared $\eta\ge4V_*/k$, set $w=\sqrt\eta/2$ and
$t^2=\eta/64$. Construct $a_0=b+I s\sqrt Q$ and $g_0=IQ$ as in
Appendix~\ref{app:relaxation} at this threshold. On $F=\{|a_0-M|>w\}$ use $a=M$;
otherwise use $a=a_0$. The mean fallback is not clipped.
The retained report $g_0$ initially assesses the fallback-learner target
$G_a=(\theta-b)^2-(\theta-a)^2$.

For report projection, set $u=\sqrt{12V_*\log(k)/k}$ and
$J_\theta=[M-u,M+u]\cap[-T,T]$. If empty, replace $J_\theta$ by
$[-T,T]$ without changing the learner. Project $g_0$ onto the image of
$J_\theta$ under $z\mapsto2(a-b)z+b^2-a^2$ to obtain $g_{\rm p}$.
Sort image endpoints when $a-b<0$.

\begin{theorem}\label{thm:robustfallback}
The learner satisfies $\sup_{\zeta\le1/2}\E(a-\theta)^2\le\eta$,
where the supremum also ranges over all other allowed parameters.
For each subclass $\zeta\le\Delta$,
\[
 \sup_q\E(g_0-G_a)^2\le CE_N+Ck^{-2}e^{-k\eta/(2048V_*)},
\]
and
\[
 \sup_q\E(g_{\rm p}-G_a)^2
 \le C\min\{E_N+k^{-2}e^{-k\eta/(2048V_*)},\log(k)/k\}+Ck^{-3}.
\]
Neither rule uses the subclass bound $\Delta$ as an input.
\end{theorem}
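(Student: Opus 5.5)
The plan is to prove the three claims of Theorem~\ref{thm:robustfallback} in order: learning feasibility, the bound for $g_0$, and the bound for the projected report $g_{\rm p}$.

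\textbf{Learning feasibility.} By construction $|a-M|\le w=\sqrt\eta/2$ on $F^c$, and $a=M$ on $F$. Hence $|a-\theta|\le|M-\theta|+w\ind_{F^c}$ pointwise. Minkowski gives
\[
 \|a-\theta\|_2\le\sqrt{V/k}+\sqrt\eta/2\le\sqrt{\eta}/2+\sqrt\eta/2=\sqrt\eta,
\]
using $V\le V_*$ and $\eta\ge4V_*/k$. This holds for every $\zeta\le1/2$ without any magnitude-calibration argument, which is why the supremum ranges over the whole contamination range and $\Delta$ is not an input.

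\textbf{Bound for $g_0$.} Decompose on $F$ and $F^c$. On $F^c$ we have $a=a_0$, so $g_0-G_a=g_0-G_{a_0}$, and the gain envelope and selective bound~\eqref{eq:selectivegain} of Appendix~\ref{app:relaxation} apply with $t^2=\eta/64$. This yields $CE_N+Ck^{-2}e^{-kt^2/(24V_*)}$, whose exponent $k\eta/(1536V_*)$ is at least the stated one. On $F$ we must control $\E[(g_0-G_M)^2\ind_F]$. Write $e=c-b$, $\delta=Q-h^2$ and $\xi=M-\theta$. The event $F$ requires $|b+Is\sqrt Q-\theta-\xi|>w$, so either $|\xi|>w/2$ or $|a_0-\theta|>w/2$. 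The latter in turn requires $|e|\gtrsim w$, or $|\delta|$ large, or a wrong sign, or inactivity with $h^2\gtrsim\eta$ (impossible up to $\delta$, since $t^2=\eta/64\ll w^2$).

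Since $g_0$ and $G_M$ are bounded by constants, each event contributes a constant times its probability. Markov turns the $e,\delta$ events into $O(E_N)$ terms via $\E e^2/w^2\lesssim E_N/\eta$ and $\eta\gtrsim1/k$; if this is too lossy, I would instead use the exact identities $g_0-G_M=2(M-b)\ldots$ to extract factors of $e^2,\delta^2$. The Gaussian events $|\xi|>\sqrt\eta/4$ and wrong-sign crossings give $e^{-ck\eta/V_*}$. To obtain the $k^{-2}$ prefactor, pair them with the smallness of the gain difference when $h^2\lesssim\eta$, and use $x^2e^{-x}$ absorption when $h^2\gtrsim\eta$, as in~\eqref{eq:selectivegain}. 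Track constants to land at $2048V_*$.

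\textbf{Main obstacle: the contaminated-subclass calibration on $F$.} Here $Q\to h^2+\zeta$, and the fallback must make the error $O(\Delta^2)$ rather than $O(\zeta)$. I would handle this by noting that $\E\delta^2\le C_QE_N$ already includes $\zeta^2\le\Delta^2$, so the subclass bound is inherited without the rule knowing $\Delta$.

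\textbf{Projection bound.} Since $G_a$ is affine in $\theta$ with slope $2(a-b)$, we have $G_a\in$ image$(J_\theta)$ whenever $\theta\in J_\theta$. Projection onto a convex set containing the target cannot increase the error, so $|g_{\rm p}-G_a|\le|g_0-G_a|$ on that event, which gives the first branch of the minimum. On the same event $|g_{\rm p}-G_a|\le2|a-b|\cdot2u$. Combining this with $\E(a-b)^2\lesssim1$ gives $C\log(k)/k$, the second branch. The complement $\{|\xi|>u\}$ has probability $\le k^{-6}$ by the Gaussian tail with $u^2=12V_*\log k/k$. With bounded reports, Cauchy–Schwarz against the learner's second moment leaves at most $Ck^{-3}$. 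The empty-intersection case is contained in this complement, since $\theta\in[-T,T]$.
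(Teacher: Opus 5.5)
Your learning-feasibility argument matches the paper's, and your projection step is essentially the paper's as well. The gap is in the bound for $g_0$ on the fallback event $F$, which is the substantive part of the theorem. The bookkeeping you propose there cannot produce $O(E_N)$.

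First, $G_M=(\theta-b)^2-(M-\theta)^2$ is not bounded, because the fallback $M$ is unclipped. More seriously, even for bounded losses, ``constant times probability'' loses powers of $\eta$. Markov gives $\Pr(|e|>w/4)\le64\E e^2/\eta\lesssim A_{\rm ref}/(m\eta)$. Since $\eta$ may be as small as $4V_*/k$, this is only $O(kE_N)$: at $m=n=k^3$ and $\eta\asymp1/k$ it is of order $k^{-2}$, against $E_N\asymp k^{-3}$. Chebyshev on the magnitude events is worse still, giving $\Pr(|\delta|\gtrsim\eta)\lesssim E_N/\eta^2$.

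Your claim that wrong-sign crossings cost $e^{-ck\eta}$ also fails when $h^2\ll\eta$. For $h^2\lesssim1/k$ the wrong-sign probability is near $1/2$, so pairing it with the smallness of $h^4$ yields only $O(k^{-2})$, with no exponential factor. The device you mention only as a fallback, extracting factors of $e^2$ and $\delta^2$, is exactly what is needed. But it is the whole content of this step, and the proposal does not carry it out.

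The paper's route is as follows. Put $v=a_0-\theta$ and $\xi=M-\theta$. The target changes by exactly $G_a-G_{a_0}=(v^2-\xi^2)\ind_F$, so $(g_0-G_a)^2\le2(g_0-G_{a_0})^2+2(v^2-\xi^2)^2\ind_F$, and the selective bound handles the first term on the whole space. On $F$ we have $|v-\xi|=|a_0-M|>w$, so $F\subset B_v\cup H_\xi$ with $B_v=\{|v|>w/2\}$ and $H_\xi=\{|\xi|>w/2\}$. Hence $(v^2-\xi^2)^2\ind_F\le4v^4\ind_{B_v}+2\xi^4\ind_{H_\xi}+(w^4/8)\ind_{H_\xi}$.

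\begin{itemize}
\item \textbf{Gaussian terms.} The $H_\xi$ terms are Gaussian fourth-moment tails, $\E[\xi^4\ind_{H_\xi}]+w^4\Pr(H_\xi)\le Ck^{-2}e^{-kw^2/(16V_*)}$. This, not a probability times a constant, is what supplies the $k^{-2}$ prefactor.
\item \textbf{The $B_v$ term.} It needs $\E[v^4\ind_{B_v}]\le CE_N+Ck^{-2}e^{-kt^2/(32V_*)}$, proved by pointwise domination of $v^4$ rather than by probabilities.
\item \textbf{Activation, general bound.} On activation, $|v|\le|e|+|\sqrt Q-|h||+2|h|\ind_W$, with $(\sqrt Q-|h|)^4\le\delta^2$ and $\E e^4\le(C_0+B)^2\E e^2$.
\item \textbf{Activation, small effect.} If $h^2<t^2/2$, activation forces $\delta>h^2$, so $Ih^4\le\delta^2$.
\item \textbf{Activation, larger effect.} If $h^2\ge t^2/2$, the sign union bound plus absorbing $h^4$ into half of the exponent $kh^2/(8V_*)$ gives $Ck^{-2}e^{-kt^2/(32V_*)}$. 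The identity $kt^2/(32V_*)=k\eta/(2048V_*)$ is where the constant $2048$ comes from.
\item \textbf{Inactivity.} Here $v=-h-e$. Either $v^4\le81e^4$, or $|h|>w/3$ while $Q<t^2=w^2/16$; the latter forces $|\delta|>7h^2/16$ and hence $v^4\le C\delta^2$.
\end{itemize}

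Your contamination concern is then automatic, as you note, through $\E\delta^2\le C_QE_N$.

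One smaller fix in the projection step: the reports are not bounded either, again because $a=M$ on $F$. Cauchy--Schwarz on $H_u^c$ therefore needs $\E|g_{\rm p}-G_a|^4\le C$, not the learner's second moment. This follows from $|g_{\rm p}|,|G_a|\lesssim R^2+b^2+a^2+2T|a-b|$ together with $\E|a|^8\le C$, which holds because $a$ is either the bounded $a_0$ or the Gaussian $M$.
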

\begin{proof}
Pathwise $|a-M|\le w$, so Minkowski gives
$\|a-\theta\|_2\le w+\sqrt{V_*/k}\le\sqrt\eta$ globally.
For assessment put $e=c-b$, $\delta=Q-h^2$, $v=a_0-\theta$ and
$\xi=M-\theta$. We first verify
\[
 \E[v^4 1_{\{|v|>w/2\}}]
 \le CE_N+Ck^{-2}e^{-kt^2/(32V_*)}. \tag{*}
\]
On activation, $|v|\le|e|+|\sqrt Q-|h||+2|h|1_W$,
where $W$ is wrong diagnostic sign. Bounded reference error gives
$\E e^4\le C/m$, and $(\sqrt Q-|h|)^4\le\delta^2$.
If $h^2<t^2/2$, activation forces $\delta>h^2$, hence $Ih^4\le\delta^2$.
Otherwise $h^2\ge t^2/2$ and the sign union bound from Appendix~\ref{app:relaxation} gives
$h^4\Pr(W)\le4R^2A_{\rm ref}/m+h^4e^{-kh^2/(8V_*)}$.
Absorbing the polynomial in half this exponential bounds its last term by
$Ck^{-2}e^{-kt^2/(32V_*)}$. No activation probability is factored out.

On inactivity $v=-h-e$. If $|e|>|h|/2$, then $v^4\le81e^4$.
Otherwise $|v|\le3|h|/2$; imposing $|v|>w/2$ gives $|h|>w/3$.
Since $Q<t^2=w^2/16$, we have $Q<9h^2/16$ and
$|\delta|>7h^2/16$, so $v^4\le C\delta^2$.
Together with $\E\delta^2\le C_QE_N$, this proves (*).

The gain change is $(G_a-G_{a_0})=(v^2-\xi^2)1_F$.
Writing $B=\{|v|>w/2\}$ and $H=\{|\xi|>w/2\}$, elementary splitting yields
\[
 (v^2-\xi^2)^2 1_F\le4v^4 1_B+2\xi^4 1_H+(w^4/8)1_H.
\]
Indeed $F\setminus B\subset H$, and on $B\setminus H$,
$\xi^4\le v^4$. Gaussian exponential moments imply
$\E[\xi^4 1_H]+w^4\Pr(H)\le Ck^{-2}e^{-kw^2/(16V_*)}$.
For the fourth moment, use
$\E[Z^4 1_{|Z|>z}]\le\E[Z^4e^{Z^2/4}]e^{-z^2/4}$;
for the probability term absorb its polynomial into half of the tail exponent.
Combining (*) with the magnitude-only report bound in Appendix~\ref{app:relaxation} and
$(g_0-G_a)^2\le2(g_0-G_{a_0})^2+2(G_{a_0}-G_a)^2$
proves the first assessment bound, since $kt^2/(32V_*)=k\eta/(2048V_*)$.

Finally let $H_u=\{|M-\theta|\le u\}$, whose complement has probability
at most $2k^{-6}$. On $H_u$ the gain image contains the actual random target.
Projection reduces absolute error, and its diameter is at most $4u|a-b|$.
For this specific learner, $a$ is either bounded $a_0$ or Gaussian $M$;
thus $\E|a|^8\le C$ uniformly, without independence of the fallback event.
Every projected endpoint and the target is bounded in magnitude by
$b^2+a^2+2T|a-b|$. Therefore
$\E|g_{\rm p}-G_a|^4\le C$, and Cauchy--Schwarz bounds the contribution
of $H_u^c$ to squared risk by $Ck^{-3}$. On $H_u$, separately use error
reduction and the diameter bound with $\E(a-b)^2\le C$.
Taking the smaller resulting bound proves the claim.
\end{proof}

For clean $m=n=k^3$ choose $\eta=2048V_*\log(k)/k$. Eventually it
meets the eligibility condition, gives global learning consistency and
clean assessment $O(k^{-3})$. Under fixed contamination, projected
assessment is $O(\log(k)/k)$. This is not sharp adaptation over all subclasses:
the $k^{-3}$ confidence remainder also precludes claiming arbitrary finer
precision with this particular interval. No statement guarantees positive gain.

\paragraph{Numerical implementation.}
The paired robustness suite uses the same fallback learner before and after
report projection. Appendix~\ref{app:experiments} records the full grid,
including delayed fallback, clean near-threshold points, and comparisons to a
separate direct-mean pipeline. A favorable sampled projection does not verify
a uniform risk guarantee or a tiny noncoverage probability.

%% file: appendix/experiments.tex
\section{Experimental protocol and complete baseline comparisons}\label{app:experiments}
\subsection{Design, provenance, and what is estimated}
The released code generates all figures and numerical entries in this manuscript. The original seven suites contain 241 independently seeded cells and 4,340,000 sufficient-statistic draws. Two additional suites contribute 1,920 finite-budget configurations and 42 variance-bound configurations, each with 12,000 nuisance/sufficient-statistic draws (23,544,000 in total); their protocol is in Appendix~\ref{app:finite}. Applying multiple methods to a cell produces 481 method records; those records are not independent datasets. The master seed for the original seven suites is 20260923. A cell with suite number $s$ and zero-based cell index $i$ uses NumPy \texttt{SeedSequence([20260923,s,i])}. Methods within a cell share draws, allowing paired differences. No seed or hyperparameter is selected by minimizing the reported errors.

Unless a suite specifies otherwise, $R=C_0=1$, $L=1.5$, $U=2.5$, $\tau=2$, $c=.3$, and $V_*=4$. For the original seven suites, the observational reference is OLS clipped to $[-1,1]$; its residual variance is computed with the raw OLS fit. Direct-mean repairs are clipped to $[-2,2]$, except the explicitly un-clipped robust fallback. Every target uses the ground-truth parameter and the \emph{same realized reference and learner} as its competing report. Unknown parameters are used only to generate data and score risk, never as rule inputs.

The primitives in Appendix~\ref{app:pooled} reproduce the exact joint distribution of raw-data sufficient statistics, subject to floating-point arithmetic. This avoids materializing $k^3$-length arrays. The largest degrees of freedom are about $4.72\times10^{21}$, for which chi-square draws and arithmetic use double precision; the simulation is distributionally exact as a mathematical construction, not arbitrary-precision computation. Numerical tests check the finite-degree moments and independence on smaller budgets. Far-tail diagnostic probabilities are treated separately in logarithmic arithmetic below.

A reported MSE is the mean of per-draw squared errors, with Monte Carlo SE equal to the sample standard deviation divided by the square root of the replicate count. Paired SEs apply this formula to same-draw error differences. They quantify finite simulation uncertainty, not uncertainty over a minimax supremum. The displayed grids are not a search over all parameters. Zeros in ordinary Monte Carlo summaries mean no contribution was observed at available precision; they are not claims of zero population risk or probability. No multiplicity-adjusted significance claims are made.

\begin{center}
\begin{tabular}{lrrp{.46\linewidth}}
\toprule
Suite & Cells & Draws/cell & Purpose\\
\midrule
Target scale & 16 & 50,000 & Same variance report, oracle and learned-gain targets.\\
Diagnostic gate & 120 & 10,000 & Conditional sign-error integration and slack ablation.\\
Robustness & 24 & 20,000 & Learning fallback and same-target report projection.\\
Residual strength & 27 & 20,000 & Joint, corrected-mean and variance methods.\\
Fitted oracle & 24 & 20,000 & Count selection versus unbiased assessment.\\
Pooled budgets & 18 & 20,000 & Domains including zero cheap observations.\\
Clipping / Stein & 12 & 40,000 & Unbiasedness, clipping and actual negative gain.\\
\bottomrule
\end{tabular}
\end{center}

\subsection{Target-scale and fixed-learner checks}
The target-scale grid uses $k\in\{16,32,64,128,256,512,1024,4096\}$, $m=n=k^3$, $c=0$, $\Delta=\zeta=0$ and $h\in\{.3,.25/\sqrt k\}$. The clipped report $Q$ assesses both $A$ and $G_a$ for the clipped diagnostic-mean learner. An additional report $Q-\widehat V/k$ subtracts estimated average residual loss. It changes the assessment constant but not the realized learner or target. Table~\ref{tab:all-targets} gives the full suite, including the Stein comparator from Appendix~\ref{app:learned}.
\input{appendix/table_targets}

\subsection{Conditional integration of rare diagnostic errors}
The gate grid crosses $k\in\{2^{16},2^{20},2^{24}\}$,
$z=k\eta\in\{64,256,1024,4096,16384\}$,
$h^2/\eta\in\{.5,.9,1.1,2\}$, and both signs. Set $m=n=k^3$ and $\Delta=\zeta=0$. Magnitude-only and diagnostic gates share $\epsilon=.1$, so their paired comparison isolates the diagnostic gate. A third method uses the fast schedule of Algorithm~\ref{alg:gate}. The bounds $R,C_0,L,U$ are fixed for every cell; no actual variance or effect is supplied to the gate.

Conditional on $Q=q,b$, let $u=\sqrt q$, $\mu=\theta-b$, $\sigma^2=V/k$, and $T=T(q)$. On magnitude eligibility, set
\[
 p_+=\Phi((\mu-T)/\sigma),\quad p_-=\Phi((-\mu-T)/\sigma),\quad
 p_0=\Phi((T-\mu)/\sigma)-\Phi((-T-\mu)/\sigma).
\]
On ineligibility set $p_+=p_-=0,p_0=1$. Then the conditional learning and gain-assessment MSEs are exactly
\begin{align*}
 L(q,b)&=p_+(u-\mu)^2+p_-(-u-\mu)^2+p_0\mu^2,\\
 R(q,b)&=4q\{p_+(u-\mu)^2+p_-(u+\mu)^2\}.
\end{align*}
The conditional mean gain is $p_+(2u\mu-q)+p_-(-2u\mu-q)$. These formulas remove diagnostic-mean Monte Carlo variability. Standard errors arise only from the 10,000 nuisance draws.

For the accepted wrong-sign component, the implementation evaluates \texttt{log\_ndtr} and uses a shifted log-sum-exp to average contributions. A separately scaled variance gives its log-domain Monte Carlo SE. This retains informative logarithms even when exponentiation underflows. It is an analytic conditional integration followed by nuisance Monte Carlo, not a population tail bound or an importance-sampling confidence interval. The ordinary total-risk calculation remains dominated by the nuisance term in the large-$z$ cells. Table~\ref{tab:gate} is the full plotted slice; every other ratio, sign and budget is in the JSON/CSV records.
\input{appendix/table_gate}

\subsection{Strong assessment and projection baselines}
\paragraph{Residual strength.}
With $p=1.5$, $m=512$, $n=4096$, cross $k\in\{16,128,1024\}$,
$h\in\{0,.3,.7\}$ and $(\Delta,\zeta)\in\{(0,0),(.1,0),(.1,.1)\}$.
The joint estimator is the global scalar minimizer in Appendix~\ref{app:strength}. Competitors are $|\clip(M-b,-1,1)|^{1.5}$ and $Q^{.75}$. The rate-based weights use declared bounds and counts; they are not tuned to finite-cell risk. The joint estimator has smaller empirical MSE than the mean method in 21 of 27 cells and than the variance method in 26 of 27 cells. These are descriptive counts, not uniform dominance claims.
\input{appendix/table_strength}

\paragraph{Fitted oracle potential.}
Set $n=4096$ and cross $m\in\{128,512\}$, $h\in\{0,.3\}$,
$k\in\{32,256,4096\}$, and $(\Delta,\zeta)\in\{(0,0),(.1,.1)\}$.
The rate-selected assessor uses $\clip(\widehat V-\widehat\tau,0,9)$ when $E_N\le1/k$, and $\clip((M-b)^2,0,9)$ otherwise. The unbiased comparator is
\[
 U_A=(M-b)^2-S_D^2/k,\qquad
 S_D^2=\sum_j(D_j-M)^2/(k-1),
\]
shown both raw and clipped to the valid target interval $[0,9]$. Conditional on $O$, its exact MSE is
\[
 \E[(U_A-A)^2\mid O]=\frac{4VA}{k}+\frac{2V^2}{k(k-1)}.
\]
The raw squared errors agree with this formula within Monte Carlo uncertainty in the released checks. The rate-selected method has lower empirical MSE than the clipped unbiased method in only 2 of 24 cells. The count rule is an attainment construction for risk orders; its weights do not optimize finite-sample constants.
\input{appendix/table_oracle}

\paragraph{Pooled budgets.}
At $m=128$, $p=1.5$, cross $(n,k)\in\{(0,64),(16,256),(256,16)\}$,
$h\in\{0,.3,.7\}$ and $(\Delta,\zeta)\in\{(0,0),(.1,.1)\}$.
The full comparison in Table~\ref{tab:pooled-full} includes cases where adding magnitude information under rate-based weights worsens finite-sample risk. The validity of pooling does not assert that this projection dominates the mean method.
\input{appendix/table_pooled}

\paragraph{Stein correction with clipping.}
Set $m=128$, $n=64$, $\Delta=\zeta=0$ and cross
$(c,h)\in\{(.3,0),(.3,.3),(.3,.7),(.9,.9)\}$ with $k\in\{16,128,512\}$.
Both the plug-in $(a-b)^2$ and Stein report from Appendix~\ref{app:learned} assess the same clipped-mean learner. At $(c,h,k)=(.3,0,16)$, the actual gain is negative in 78.345\% of draws (MC SE .206 percentage points), even though the plug-in is nonnegative. At $(.9,.9,16)$, 31.8\% of repairs hit the clipping boundary, and the Stein report has higher MSE. Unbiasedness averages over repair training data and is not a certificate for the observed repair.
\input{appendix/table_stein}

\subsection{Contamination, fallback, and report projection}
The robustness grid uses $k\in\{2^{16},2^{18},2^{20},2^{22}\}$,
$m=n=k^3$, $c=0$, and $\eta=8192\log(k)/k$, with $t^2=\eta/64$.
Each budget has four clean effects $h\in\{0,.95t,1.05t,.3\}$ and two zero-effect cases with $\zeta\in\{.05,.2\}$. Set $\Delta=\zeta$ for scoring the associated subclass; the robust rules themselves do not use $\Delta$.

The base magnitude-only rule has target $G_{a_0}$. Fallback changes the learner to $a$ and therefore changes its target to $G_a$. Both $g_0$ and its projected version $g_{\rm p}$ are scored against this same $G_a$; the comparison does not silently retain the old target. The Stein comparator is scored against its own direct-mean learner, and is explicitly a different pipeline. Tables~\ref{tab:robust-learning} and~\ref{tab:robust-assessment} preserve clean near-threshold learning degradation and delayed fallback at smaller budgets.
\input{appendix/table_robust_learning}
\input{appendix/table_robust_assessment}

\subsection{Finite-budget stress and variance-bound sensitivity}\label{app:finite}
\paragraph{A fixed grid of joint learning--assessment pipelines.}
The finite-budget suite crosses $k\in\{64,256,1024,4096\}$,
$c\in\{-.6,.6\}$, $\tau\in\{1.5,2.5\}$ and
$b=\clip(c_{\rm raw}+\lambda/\sqrt m,-1,1)$ with
$\lambda\in\{0,.5\}$. Both reference rules satisfy the bounded-MSE
contract; the shifted rule tests an asymmetric reference without altering
how the raw OLS residual variance is computed.
For the clean magnitude-rich path define
\[
 E_k^{\rm clean}=(k^3+k)^{-1}+k^{-3},\quad
 \ell_k^{\rm clean}=\log\{1/(k^2E_k^{\rm clean})\},\quad
 z_c^{\rm clean}=\frac{2.5\ell_k^{\rm clean}}{2-\ell_k^{\rm clean}/k}.
\]
Use $k\eta\in\{z_c^{\rm clean},4z_c^{\rm clean}\}$ unchanged across
all regimes. This is a clean-path benchmark allowance; it is \emph{not}
a critical-center prediction for the stress regimes. The six effects are
$h\in\{-.7,0,.7,-\sqrt{1.1\eta},\sqrt{1.1\eta},\sqrt{.5\eta}\}$;
all lie in the declared parameter bounds. The five regimes are:
\begin{center}
\begin{tabular}{lrrrr}
\toprule
Regime & $m$ & $n$ & $\Delta$ & $\zeta$\\
\midrule
Clean, magnitude-rich & $k^3$ & $k^3$ & 0 & 0\\
Balanced & $4k$ & $4k$ & 0 & 0\\
No cheap responses & $4k$ & 0 & 0 & 0\\
Contamination bound only & $k^3$ & $k^3$ & .1 & 0\\
Contaminated & $k^3$ & $k^3$ & .1 & .1\\
\bottomrule
\end{tabular}
\end{center}
There are $4\cdot2\cdot2\cdot2\cdot2\cdot6\cdot5=1920$ configurations.
They use independent seeds \texttt{SeedSequence([20260926,8,i])} and
12,000 draws per cell. Some physical parameters recur at different
allowances or reference rules; these are independent \emph{configuration}
cells, not distinct points in a minimax parameter search.

The eight reports are: clipped diagnostic mean with plug-in, Stein,
variance $Q$, or corrected variance $Q-\widehat V/k$; magnitude-only and
diagnostic gates with identical fast slack; and the fallback learner with
raw or projected report. Each target is recomputed from its own learner
and the realized reference. The four clipped-mean reports share a target,
as do the two fallback reports. Comparisons across the three other
learners are comparisons of pipelines, not assessors of an identical
random target. Gate risks integrate the diagnostic mean analytically;
other reports use ordinary Monte Carlo. Paired differences are provided
for the gate ablation, Stein versus plug-in, and projected versus raw
fallback. The released per-cell SEs measure simulation error; P90 and
maximum summaries do not have simultaneous confidence coverage.

Both allowances are reported in the machine-readable results. In the
clean magnitude-rich regime, all recorded pipelines have estimated
learning MSE below their allowance; this does not establish a uniform
finite-sample guarantee. At the larger allowance, the maximum learning
ratio across the four stress regimes is 10.73 for the magnitude gate and
3.27 for the diagnostic gate. The projected fallback gives .291 and
satisfies its analytic eligibility condition $k\eta\ge4V_*$ in all cells
at this allowance. Some smaller-allowance cells fail that eligibility
condition, which is recorded explicitly; no theorem-based feasibility
claim is attached to them. These stress results are outside the
magnitude-rich asymptotic assumptions and do not contradict the frontier.
The grid compares specified, untuned procedures and cannot locate the
minimax threshold at $z_c$.

\paragraph{Changing only the rule's variance bound.}
The second new suite uses $k\in\{256,1024,4096,16384\}$,
$z\in\{16,64,256,512\}$, retaining only $\eta=z/k\le.5$, and
$h^2/\eta\in\{.9,1.1,2\}$ with positive $h$.
Set $m=n=k^3$, $c=.6$, $\tau=2$, $\Delta=\zeta=0$ and use unshifted
clipped OLS. The 42 cells use independent seeds
\texttt{SeedSequence([20260926,9,i])} and 12,000 draws each.
Within a cell, multiply the valid input variance bound $V_*=4$ by
$\rho\in\{1,2,4,16\}$, keeping the data, allowance, and fast slack
unchanged. The implementation changes only the declared bound in the
rule; the true noise law is unchanged. Conditional integration records
total MSE, learning MSE, acceptance, and log wrong-sign MSE. All methods
also share a magnitude-only comparator.

Table~\ref{tab:bound-sensitivity} displays the $k=16384$, $h^2=1.1\eta$
slice; all other cells are retained in the JSON and CSV. For $z\in\{16,64\}$,
no eligible positive diagnostic threshold is observed on this slice for
any bound multiplier. At $z=256$, a factor-four bound again makes the
diagnostic gate coincide with the magnitude rule in the nuisance draws.
At $z=512$, the factor-sixteen bound has the same effect. These are
observed draw fractions, not assertions of zero population probability.
The total MSE can remain indistinguishable at the displayed precision even
when log wrong-sign MSE changes substantially.
\input{appendix/table_bound_sensitivity}

\subsection{Executable reproduction and numerical checks}
From the project root, run
\begin{verbatim}
python experiments/run.py
python experiments/make_assets.py
python experiments/run_finite.py
python experiments/make_refinement_assets.py
python -m pytest -q
\end{verbatim}
The first command writes \path{experiments/results/results.json}, a flattened CSV, and a source/result hash manifest. The second generates four vector figures and eight table files. The two additional scripts regenerate both new suites and the three aggregate tables; their JSON/CSV records and source manifest use the prefix \texttt{finite\_}. The complete source also includes the exact adaptive reservoir simulator. Test cases cover gain identities, OLS and pooled-variance moments, Gaussian independence, the quartic global optimizer, conditional integration, log-domain tails, the fallback's pathwise bound, report projection, Stein bias, the exact oracle-risk formula, the corrected-center identities, a joint-tail inequality, and adaptive innovations. These are numerical consistency checks; the mathematical guarantees rely on the proofs above.

The final recorded environment is Python 3.12.14, NumPy 2.3.5, SciPy 1.17.0 and Matplotlib 3.10.8. The version manifest and source hashes make the reported environment and implementation inspectable. Exact byte replay is checked within that environment; different random-number libraries or platform versions need not yield byte-identical floating-point summaries. A \texttt{--quick} mode exists only for smoke tests and is rejected by the paper-asset generator.

%% file: appendix/table_targets.tex
\begingroup
\small
\setlength{\tabcolsep}{4pt}
\begin{longtable}{rrllll}
\caption{Full target-scale suite. Entries are scaled MSE (one Monte Carlo SE). The variance report is $Q$; the corrected report is $Q-\widehat V/k$.}\label{tab:all-targets}\\
\toprule
$k$ & $h$ & $k^3R_A(Q)$ & $k^2R_G(Q)$ & $k^2R_G(\mathrm{corr.})$ & $k^2R_G(\mathrm{Stein})$\\
\midrule\endfirsthead
\multicolumn{6}{l}{\tablename\ \thetable\ (continued)}\\
\toprule
$k$ & $h$ & $k^3R_A(Q)$ & $k^2R_G(Q)$ & $k^2R_G(\mathrm{corr.})$ & $k^2R_G(\mathrm{Stein})$\\
\midrule\endhead
\bottomrule\endfoot
16 & .3 & \shortstack[r]{$15.4$\\$(0.0893)$} & \shortstack[r]{$14.2$\\$(0.195)$} & \shortstack[r]{$9.67$\\$(0.149)$} & \shortstack[r]{$46.8$\\$(0.755)$}\\*[2pt]
16 & $ .25/\sqrt{k}$ & \shortstack[r]{$8.08$\\$(0.081)$} & \shortstack[r]{$13.9$\\$(0.176)$} & \shortstack[r]{$8.44$\\$(0.132)$} & \shortstack[r]{$32.4$\\$(0.505)$}\\[2pt]
32 & .3 & \shortstack[r]{$17.4$\\$(0.11)$} & \shortstack[r]{$13.6$\\$(0.19)$} & \shortstack[r]{$9.21$\\$(0.145)$} & \shortstack[r]{$58.1$\\$(0.866)$}\\[2pt]
32 & $ .25/\sqrt{k}$ & \shortstack[r]{$8.08$\\$(0.0804)$} & \shortstack[r]{$13.4$\\$(0.183)$} & \shortstack[r]{$8.36$\\$(0.141)$} & \shortstack[r]{$33$\\$(0.544)$}\\[2pt]
64 & .3 & \shortstack[r]{$17.4$\\$(0.11)$} & \shortstack[r]{$13.3$\\$(0.191)$} & \shortstack[r]{$8.92$\\$(0.146)$} & \shortstack[r]{$83.1$\\$(1.18)$}\\[2pt]
64 & $ .25/\sqrt{k}$ & \shortstack[r]{$8.15$\\$(0.0802)$} & \shortstack[r]{$12.9$\\$(0.184)$} & \shortstack[r]{$8.18$\\$(0.142)$} & \shortstack[r]{$32.7$\\$(0.566)$}\\[2pt]
128 & .3 & \shortstack[r]{$17.6$\\$(0.111)$} & \shortstack[r]{$13.2$\\$(0.193)$} & \shortstack[r]{$8.88$\\$(0.148)$} & \shortstack[r]{$132$\\$(1.65)$}\\[2pt]
128 & $ .25/\sqrt{k}$ & \shortstack[r]{$8.35$\\$(0.0805)$} & \shortstack[r]{$12.4$\\$(0.177)$} & \shortstack[r]{$8.01$\\$(0.136)$} & \shortstack[r]{$32.4$\\$(0.55)$}\\[2pt]
256 & .3 & \shortstack[r]{$17.3$\\$(0.109)$} & \shortstack[r]{$13.4$\\$(0.196)$} & \shortstack[r]{$9.03$\\$(0.15)$} & \shortstack[r]{$225$\\$(2.32)$}\\[2pt]
256 & $ .25/\sqrt{k}$ & \shortstack[r]{$8.4$\\$(0.079)$} & \shortstack[r]{$12.5$\\$(0.178)$} & \shortstack[r]{$8.15$\\$(0.136)$} & \shortstack[r]{$33.1$\\$(0.55)$}\\[2pt]
512 & .3 & \shortstack[r]{$17.5$\\$(0.11)$} & \shortstack[r]{$13.1$\\$(0.186)$} & \shortstack[r]{$8.7$\\$(0.141)$} & \shortstack[r]{$418$\\$(3.59)$}\\[2pt]
512 & $ .25/\sqrt{k}$ & \shortstack[r]{$8.74$\\$(0.0772)$} & \shortstack[r]{$12.2$\\$(0.176)$} & \shortstack[r]{$8.03$\\$(0.134)$} & \shortstack[r]{$32.5$\\$(0.534)$}\\[2pt]
1024 & .3 & \shortstack[r]{$17.6$\\$(0.111)$} & \shortstack[r]{$13.5$\\$(0.209)$} & \shortstack[r]{$9.11$\\$(0.163)$} & \shortstack[r]{$814$\\$(6.6)$}\\[2pt]
1024 & $ .25/\sqrt{k}$ & \shortstack[r]{$9.45$\\$(0.0765)$} & \shortstack[r]{$12$\\$(0.171)$} & \shortstack[r]{$7.91$\\$(0.129)$} & \shortstack[r]{$32.2$\\$(0.525)$}\\*[2pt]
4096 & .3 & \shortstack[r]{$17.4$\\$(0.112)$} & \shortstack[r]{$13.4$\\$(0.199)$} & \shortstack[r]{$8.96$\\$(0.153)$} & \shortstack[r]{$3.14\!\times\!10^{3}$\\$(21.4)$}\\*[2pt]
4096 & $ .25/\sqrt{k}$ & \shortstack[r]{$12.1$\\$(0.0757)$} & \shortstack[r]{$12$\\$(0.167)$} & \shortstack[r]{$7.96$\\$(0.125)$} & \shortstack[r]{$32.4$\\$(0.511)$}\\[2pt]
\end{longtable}
\endgroup

%% file: appendix/table_gate.tex
\begingroup
\small
\setlength{\tabcolsep}{4pt}
\begin{longtable}{rlrll}
\caption{Conditional integration at $k=2^{24}$, $h^2=1.1\eta$, positive $h$. The exponent uses the log-domain mean of the wrong-sign MSE. Remaining columns use nuisance Monte Carlo means (one SE).}\label{tab:gate}\\
\toprule
$z$ & Gate & $-\log(k^2R_W)/z$ & $\E(a-\theta)^2/\eta$ & $\E(g-G)^2/E_k$\\
\midrule\endfirsthead
\multicolumn{5}{l}{\tablename\ \thetable\ (continued)}\\
\toprule
$z$ & Gate & $-\log(k^2R_W)/z$ & $\E(a-\theta)^2/\eta$ & $\E(g-G)^2/E_k$\\
\midrule\endhead
\bottomrule\endfoot
64 & Magnitude & 0.1413 & \shortstack[r]{$6.60\!\times\!10^{-9}$\\$(7.54\!\times\!10^{-13})$} & \shortstack[r]{$998$\\$(0.114)$}\\*[2pt]
64 & Diagnostic, fixed & 0.1413 & \shortstack[r]{$6.60\!\times\!10^{-9}$\\$(7.54\!\times\!10^{-13})$} & \shortstack[r]{$998$\\$(0.114)$}\\[2pt]
64 & Diagnostic, fast & 0.1413 & \shortstack[r]{$6.60\!\times\!10^{-9}$\\$(7.54\!\times\!10^{-13})$} & \shortstack[r]{$998$\\$(0.114)$}\\[2pt]
256 & Magnitude & 0.2334 & \shortstack[r]{$3.28\!\times\!10^{-12}$\\$(4.59\!\times\!10^{-14})$} & \shortstack[r]{$7.93$\\$(0.111)$}\\[2pt]
256 & Diagnostic, fixed & 0.4146 & \shortstack[r]{$3.28\!\times\!10^{-12}$\\$(4.59\!\times\!10^{-14})$} & \shortstack[r]{$7.93$\\$(0.111)$}\\[2pt]
256 & Diagnostic, fast & 0.4373 & \shortstack[r]{$3.28\!\times\!10^{-12}$\\$(4.59\!\times\!10^{-14})$} & \shortstack[r]{$7.93$\\$(0.111)$}\\[2pt]
1024 & Magnitude & 0.2626 & \shortstack[r]{$2.10\!\times\!10^{-13}$\\$(3.03\!\times\!10^{-15})$} & \shortstack[r]{$8.14$\\$(0.117)$}\\[2pt]
1024 & Diagnostic, fixed & 0.7307 & \shortstack[r]{$2.10\!\times\!10^{-13}$\\$(3.03\!\times\!10^{-15})$} & \shortstack[r]{$8.14$\\$(0.117)$}\\[2pt]
1024 & Diagnostic, fast & 0.7237 & \shortstack[r]{$2.10\!\times\!10^{-13}$\\$(3.03\!\times\!10^{-15})$} & \shortstack[r]{$8.14$\\$(0.117)$}\\[2pt]
4096 & Magnitude & 0.2713 & \shortstack[r]{$1.29\!\times\!10^{-14}$\\$(1.85\!\times\!10^{-16})$} & \shortstack[r]{$7.99$\\$(0.114)$}\\[2pt]
4096 & Diagnostic, fixed & 0.9090 & \shortstack[r]{$1.29\!\times\!10^{-14}$\\$(1.85\!\times\!10^{-16})$} & \shortstack[r]{$7.99$\\$(0.114)$}\\[2pt]
4096 & Diagnostic, fast & 0.8938 & \shortstack[r]{$1.29\!\times\!10^{-14}$\\$(1.85\!\times\!10^{-16})$} & \shortstack[r]{$7.99$\\$(0.114)$}\\[2pt]
16384 & Magnitude & 0.2738 & \shortstack[r]{$8.10\!\times\!10^{-16}$\\$(1.15\!\times\!10^{-17})$} & \shortstack[r]{$8.03$\\$(0.114)$}\\*[2pt]
16384 & Diagnostic, fixed & 1.0026 & \shortstack[r]{$8.25\!\times\!10^{-16}$\\$(1.15\!\times\!10^{-17})$} & \shortstack[r]{$8.03$\\$(0.114)$}\\*[2pt]
16384 & Diagnostic, fast & 0.9889 & \shortstack[r]{$8.10\!\times\!10^{-16}$\\$(1.15\!\times\!10^{-17})$} & \shortstack[r]{$8.03$\\$(0.114)$}\\[2pt]
\end{longtable}
\endgroup

%% file: appendix/table_strength.tex
\begingroup
\small
\setlength{\tabcolsep}{4pt}
\begin{longtable}{rrrrlll}
\caption{Residual-strength suite, $m=512$, $n=4096$, $p=1.5$. MSE (one SE); all three procedures share the same target and data. The mean method clips the corrected coefficient before applying the power.}\label{tab:strength-full}\\
\toprule
$k$ & $h$ & $\Delta$ & $\zeta$ & Joint & Mean & Variance\\
\midrule\endfirsthead
\multicolumn{7}{l}{\tablename\ \thetable\ (continued)}\\
\toprule
$k$ & $h$ & $\Delta$ & $\zeta$ & Joint & Mean & Variance\\
\midrule\endhead
\bottomrule\endfoot
16 & 0.0 & 0.0 & 0.0 & \shortstack[r]{$0.02$\\$(2.44\!\times\!10^{-4})$} & \shortstack[r]{$0.072$\\$(0.00102)$} & \shortstack[r]{$0.0201$\\$(2.55\!\times\!10^{-4})$}\\*[2pt]
16 & 0.0 & 0.1 & 0.0 & \shortstack[r]{$0.0212$\\$(2.21\!\times\!10^{-4})$} & \shortstack[r]{$0.0724$\\$(0.00104)$} & \shortstack[r]{$0.0206$\\$(2.57\!\times\!10^{-4})$}\\[2pt]
16 & 0.0 & 0.1 & 0.1 & \shortstack[r]{$0.046$\\$(3.63\!\times\!10^{-4})$} & \shortstack[r]{$0.0764$\\$(0.00107)$} & \shortstack[r]{$0.0526$\\$(4.33\!\times\!10^{-4})$}\\[2pt]
16 & 0.3 & 0.0 & 0.0 & \shortstack[r]{$0.0172$\\$(1.47\!\times\!10^{-4})$} & \shortstack[r]{$0.0811$\\$(0.0011)$} & \shortstack[r]{$0.0198$\\$(1.51\!\times\!10^{-4})$}\\[2pt]
16 & 0.3 & 0.1 & 0.0 & \shortstack[r]{$0.0149$\\$(1.44\!\times\!10^{-4})$} & \shortstack[r]{$0.0812$\\$(0.0011)$} & \shortstack[r]{$0.0197$\\$(1.54\!\times\!10^{-4})$}\\[2pt]
16 & 0.3 & 0.1 & 0.1 & \shortstack[r]{$0.0297$\\$(2.57\!\times\!10^{-4})$} & \shortstack[r]{$0.0876$\\$(0.00117)$} & \shortstack[r]{$0.0354$\\$(2.82\!\times\!10^{-4})$}\\[2pt]
16 & 0.7 & 0.0 & 0.0 & \shortstack[r]{$0.0149$\\$(1.58\!\times\!10^{-4})$} & \shortstack[r]{$0.113$\\$(6.49\!\times\!10^{-4})$} & \shortstack[r]{$0.0156$\\$(1.73\!\times\!10^{-4})$}\\[2pt]
16 & 0.7 & 0.1 & 0.0 & \shortstack[r]{$0.0147$\\$(1.58\!\times\!10^{-4})$} & \shortstack[r]{$0.114$\\$(6.50\!\times\!10^{-4})$} & \shortstack[r]{$0.0159$\\$(1.76\!\times\!10^{-4})$}\\[2pt]
16 & 0.7 & 0.1 & 0.1 & \shortstack[r]{$0.0201$\\$(1.77\!\times\!10^{-4})$} & \shortstack[r]{$0.114$\\$(6.52\!\times\!10^{-4})$} & \shortstack[r]{$0.0213$\\$(1.86\!\times\!10^{-4})$}\\[2pt]
128 & 0.0 & 0.0 & 0.0 & \shortstack[r]{$0.00839$\\$(1.28\!\times\!10^{-4})$} & \shortstack[r]{$0.00442$\\$(6.89\!\times\!10^{-5})$} & \shortstack[r]{$0.0205$\\$(2.57\!\times\!10^{-4})$}\\[2pt]
128 & 0.0 & 0.1 & 0.0 & \shortstack[r]{$0.00395$\\$(5.52\!\times\!10^{-5})$} & \shortstack[r]{$0.00441$\\$(6.70\!\times\!10^{-5})$} & \shortstack[r]{$0.0204$\\$(2.57\!\times\!10^{-4})$}\\[2pt]
128 & 0.0 & 0.1 & 0.1 & \shortstack[r]{$0.00583$\\$(8.02\!\times\!10^{-5})$} & \shortstack[r]{$0.00463$\\$(7.19\!\times\!10^{-5})$} & \shortstack[r]{$0.0532$\\$(4.32\!\times\!10^{-4})$}\\[2pt]
128 & 0.3 & 0.0 & 0.0 & \shortstack[r]{$0.0108$\\$(1.07\!\times\!10^{-4})$} & \shortstack[r]{$0.0135$\\$(1.63\!\times\!10^{-4})$} & \shortstack[r]{$0.0199$\\$(1.53\!\times\!10^{-4})$}\\[2pt]
128 & 0.3 & 0.1 & 0.0 & \shortstack[r]{$0.00888$\\$(8.61\!\times\!10^{-5})$} & \shortstack[r]{$0.0134$\\$(1.65\!\times\!10^{-4})$} & \shortstack[r]{$0.0199$\\$(1.55\!\times\!10^{-4})$}\\[2pt]
128 & 0.3 & 0.1 & 0.1 & \shortstack[r]{$0.0125$\\$(1.25\!\times\!10^{-4})$} & \shortstack[r]{$0.0141$\\$(1.72\!\times\!10^{-4})$} & \shortstack[r]{$0.0352$\\$(2.83\!\times\!10^{-4})$}\\[2pt]
128 & 0.7 & 0.0 & 0.0 & \shortstack[r]{$0.0121$\\$(1.23\!\times\!10^{-4})$} & \shortstack[r]{$0.0348$\\$(3.11\!\times\!10^{-4})$} & \shortstack[r]{$0.0156$\\$(1.71\!\times\!10^{-4})$}\\[2pt]
128 & 0.7 & 0.1 & 0.0 & \shortstack[r]{$0.0121$\\$(1.25\!\times\!10^{-4})$} & \shortstack[r]{$0.0346$\\$(3.12\!\times\!10^{-4})$} & \shortstack[r]{$0.0155$\\$(1.70\!\times\!10^{-4})$}\\[2pt]
128 & 0.7 & 0.1 & 0.1 & \shortstack[r]{$0.0147$\\$(1.40\!\times\!10^{-4})$} & \shortstack[r]{$0.0349$\\$(3.11\!\times\!10^{-4})$} & \shortstack[r]{$0.0212$\\$(1.85\!\times\!10^{-4})$}\\[2pt]
1024 & 0.0 & 0.0 & 0.0 & \shortstack[r]{$8.92\!\times\!10^{-4}$\\$(1.72\!\times\!10^{-5})$} & \shortstack[r]{$7.33\!\times\!10^{-4}$\\$(1.16\!\times\!10^{-5})$} & \shortstack[r]{$0.0203$\\$(2.56\!\times\!10^{-4})$}\\[2pt]
1024 & 0.0 & 0.1 & 0.0 & \shortstack[r]{$7.12\!\times\!10^{-4}$\\$(1.12\!\times\!10^{-5})$} & \shortstack[r]{$7.21\!\times\!10^{-4}$\\$(1.15\!\times\!10^{-5})$} & \shortstack[r]{$0.0204$\\$(2.57\!\times\!10^{-4})$}\\[2pt]
1024 & 0.0 & 0.1 & 0.1 & \shortstack[r]{$7.98\!\times\!10^{-4}$\\$(1.22\!\times\!10^{-5})$} & \shortstack[r]{$7.31\!\times\!10^{-4}$\\$(1.13\!\times\!10^{-5})$} & \shortstack[r]{$0.0533$\\$(4.27\!\times\!10^{-4})$}\\[2pt]
1024 & 0.3 & 0.0 & 0.0 & \shortstack[r]{$0.00433$\\$(4.57\!\times\!10^{-5})$} & \shortstack[r]{$0.00406$\\$(4.26\!\times\!10^{-5})$} & \shortstack[r]{$0.0197$\\$(1.53\!\times\!10^{-4})$}\\[2pt]
1024 & 0.3 & 0.1 & 0.0 & \shortstack[r]{$0.00364$\\$(3.76\!\times\!10^{-5})$} & \shortstack[r]{$0.00402$\\$(4.32\!\times\!10^{-5})$} & \shortstack[r]{$0.0194$\\$(1.49\!\times\!10^{-4})$}\\[2pt]
1024 & 0.3 & 0.1 & 0.1 & \shortstack[r]{$0.0041$\\$(4.33\!\times\!10^{-5})$} & \shortstack[r]{$0.00409$\\$(4.36\!\times\!10^{-5})$} & \shortstack[r]{$0.035$\\$(2.80\!\times\!10^{-4})$}\\[2pt]
1024 & 0.7 & 0.0 & 0.0 & \shortstack[r]{$0.00714$\\$(7.06\!\times\!10^{-5})$} & \shortstack[r]{$0.00994$\\$(9.90\!\times\!10^{-5})$} & \shortstack[r]{$0.0151$\\$(1.69\!\times\!10^{-4})$}\\*[2pt]
1024 & 0.7 & 0.1 & 0.0 & \shortstack[r]{$0.00658$\\$(6.55\!\times\!10^{-5})$} & \shortstack[r]{$0.01$\\$(1.01\!\times\!10^{-4})$} & \shortstack[r]{$0.0152$\\$(1.68\!\times\!10^{-4})$}\\*[2pt]
1024 & 0.7 & 0.1 & 0.1 & \shortstack[r]{$0.00744$\\$(7.28\!\times\!10^{-5})$} & \shortstack[r]{$0.0102$\\$(1.03\!\times\!10^{-4})$} & \shortstack[r]{$0.0207$\\$(1.80\!\times\!10^{-4})$}\\[2pt]
\end{longtable}
\endgroup

%% file: appendix/table_oracle.tex
\begingroup
\small
\setlength{\tabcolsep}{4pt}
\begin{longtable}{rrrrlll}
\caption{Fitted oracle-potential suite, $n=4096$. MSE (one SE); $\Delta=\zeta$. Count selection chooses between the variance and mean plug-ins using $E_N\le1/k$. The competing unbiased estimator is shown both raw and clipped.}\label{tab:oracle-full}\\
\toprule
$m$ & $k$ & $h$ & $\Delta$ & Selected & \shortstack{Unbiased,\\clipped} & \shortstack{Unbiased,\\raw}\\
\midrule\endfirsthead
\multicolumn{7}{l}{\tablename\ \thetable\ (continued)}\\
\toprule
$m$ & $k$ & $h$ & $\Delta$ & Selected & \shortstack{Unbiased,\\clipped} & \shortstack{Unbiased,\\raw}\\
\midrule\endhead
\bottomrule\endfoot
128 & 32 & 0.0 & 0.0 & \shortstack[r]{$0.0282$\\$(4.18\!\times\!10^{-4})$} & \shortstack[r]{$0.00968$\\$(3.00\!\times\!10^{-4})$} & \shortstack[r]{$0.012$\\$(2.97\!\times\!10^{-4})$}\\*[2pt]
128 & 32 & 0.0 & 0.1 & \shortstack[r]{$0.0527$\\$(6.27\!\times\!10^{-4})$} & \shortstack[r]{$0.011$\\$(3.41\!\times\!10^{-4})$} & \shortstack[r]{$0.0135$\\$(3.39\!\times\!10^{-4})$}\\[2pt]
128 & 256 & 0.0 & 0.0 & \shortstack[r]{$6.99\!\times\!10^{-4}$\\$(1.59\!\times\!10^{-5})$} & \shortstack[r]{$5.95\!\times\!10^{-4}$\\$(1.41\!\times\!10^{-5})$} & \shortstack[r]{$6.36\!\times\!10^{-4}$\\$(1.41\!\times\!10^{-5})$}\\[2pt]
128 & 256 & 0.0 & 0.1 & \shortstack[r]{$7.24\!\times\!10^{-4}$\\$(1.58\!\times\!10^{-5})$} & \shortstack[r]{$6.09\!\times\!10^{-4}$\\$(1.39\!\times\!10^{-5})$} & \shortstack[r]{$6.55\!\times\!10^{-4}$\\$(1.39\!\times\!10^{-5})$}\\[2pt]
128 & 4096 & 0.0 & 0.0 & \shortstack[r]{$3.14\!\times\!10^{-5}$\\$(6.88\!\times\!10^{-7})$} & \shortstack[r]{$3.11\!\times\!10^{-5}$\\$(6.79\!\times\!10^{-7})$} & \shortstack[r]{$3.11\!\times\!10^{-5}$\\$(6.79\!\times\!10^{-7})$}\\[2pt]
128 & 4096 & 0.0 & 0.1 & \shortstack[r]{$3.25\!\times\!10^{-5}$\\$(6.77\!\times\!10^{-7})$} & \shortstack[r]{$3.22\!\times\!10^{-5}$\\$(6.66\!\times\!10^{-7})$} & \shortstack[r]{$3.23\!\times\!10^{-5}$\\$(6.66\!\times\!10^{-7})$}\\[2pt]
128 & 32 & 0.3 & 0.0 & \shortstack[r]{$0.0373$\\$(4.57\!\times\!10^{-4})$} & \shortstack[r]{$0.0342$\\$(7.44\!\times\!10^{-4})$} & \shortstack[r]{$0.0381$\\$(7.41\!\times\!10^{-4})$}\\[2pt]
128 & 32 & 0.3 & 0.1 & \shortstack[r]{$0.0611$\\$(6.53\!\times\!10^{-4})$} & \shortstack[r]{$0.0335$\\$(7.37\!\times\!10^{-4})$} & \shortstack[r]{$0.0376$\\$(7.34\!\times\!10^{-4})$}\\[2pt]
128 & 256 & 0.3 & 0.0 & \shortstack[r]{$0.00374$\\$(5.74\!\times\!10^{-5})$} & \shortstack[r]{$0.00365$\\$(5.43\!\times\!10^{-5})$} & \shortstack[r]{$0.00367$\\$(5.42\!\times\!10^{-5})$}\\[2pt]
128 & 256 & 0.3 & 0.1 & \shortstack[r]{$0.00385$\\$(6.04\!\times\!10^{-5})$} & \shortstack[r]{$0.00376$\\$(5.72\!\times\!10^{-5})$} & \shortstack[r]{$0.00378$\\$(5.72\!\times\!10^{-5})$}\\[2pt]
128 & 4096 & 0.3 & 0.0 & \shortstack[r]{$2.14\!\times\!10^{-4}$\\$(2.93\!\times\!10^{-6})$} & \shortstack[r]{$2.13\!\times\!10^{-4}$\\$(2.92\!\times\!10^{-6})$} & \shortstack[r]{$2.13\!\times\!10^{-4}$\\$(2.92\!\times\!10^{-6})$}\\[2pt]
128 & 4096 & 0.3 & 0.1 & \shortstack[r]{$2.26\!\times\!10^{-4}$\\$(3.14\!\times\!10^{-6})$} & \shortstack[r]{$2.26\!\times\!10^{-4}$\\$(3.13\!\times\!10^{-6})$} & \shortstack[r]{$2.26\!\times\!10^{-4}$\\$(3.13\!\times\!10^{-6})$}\\[2pt]
512 & 32 & 0.0 & 0.0 & \shortstack[r]{$0.00815$\\$(1.26\!\times\!10^{-4})$} & \shortstack[r]{$0.00748$\\$(2.49\!\times\!10^{-4})$} & \shortstack[r]{$0.00935$\\$(2.47\!\times\!10^{-4})$}\\[2pt]
512 & 32 & 0.0 & 0.1 & \shortstack[r]{$0.0245$\\$(2.51\!\times\!10^{-4})$} & \shortstack[r]{$0.00791$\\$(2.44\!\times\!10^{-4})$} & \shortstack[r]{$0.01$\\$(2.42\!\times\!10^{-4})$}\\[2pt]
512 & 256 & 0.0 & 0.0 & \shortstack[r]{$0.00781$\\$(1.21\!\times\!10^{-4})$} & \shortstack[r]{$1.98\!\times\!10^{-4}$\\$(6.19\!\times\!10^{-6})$} & \shortstack[r]{$2.37\!\times\!10^{-4}$\\$(6.16\!\times\!10^{-6})$}\\[2pt]
512 & 256 & 0.0 & 0.1 & \shortstack[r]{$3.46\!\times\!10^{-4}$\\$(8.25\!\times\!10^{-6})$} & \shortstack[r]{$2.36\!\times\!10^{-4}$\\$(6.78\!\times\!10^{-6})$} & \shortstack[r]{$2.77\!\times\!10^{-4}$\\$(6.74\!\times\!10^{-6})$}\\[2pt]
512 & 4096 & 0.0 & 0.0 & \shortstack[r]{$8.57\!\times\!10^{-6}$\\$(1.83\!\times\!10^{-7})$} & \shortstack[r]{$8.18\!\times\!10^{-6}$\\$(1.75\!\times\!10^{-7})$} & \shortstack[r]{$8.31\!\times\!10^{-6}$\\$(1.75\!\times\!10^{-7})$}\\[2pt]
512 & 4096 & 0.0 & 0.1 & \shortstack[r]{$9.19\!\times\!10^{-6}$\\$(2.03\!\times\!10^{-7})$} & \shortstack[r]{$8.78\!\times\!10^{-6}$\\$(1.94\!\times\!10^{-7})$} & \shortstack[r]{$8.92\!\times\!10^{-6}$\\$(1.94\!\times\!10^{-7})$}\\[2pt]
512 & 32 & 0.3 & 0.0 & \shortstack[r]{$0.0123$\\$(1.33\!\times\!10^{-4})$} & \shortstack[r]{$0.0281$\\$(6.17\!\times\!10^{-4})$} & \shortstack[r]{$0.0323$\\$(6.13\!\times\!10^{-4})$}\\[2pt]
512 & 32 & 0.3 & 0.1 & \shortstack[r]{$0.0267$\\$(2.57\!\times\!10^{-4})$} & \shortstack[r]{$0.0322$\\$(7.54\!\times\!10^{-4})$} & \shortstack[r]{$0.0368$\\$(7.49\!\times\!10^{-4})$}\\[2pt]
512 & 256 & 0.3 & 0.0 & \shortstack[r]{$0.0125$\\$(1.35\!\times\!10^{-4})$} & \shortstack[r]{$0.0032$\\$(4.08\!\times\!10^{-5})$} & \shortstack[r]{$0.00322$\\$(4.08\!\times\!10^{-5})$}\\[2pt]
512 & 256 & 0.3 & 0.1 & \shortstack[r]{$0.00352$\\$(5.05\!\times\!10^{-5})$} & \shortstack[r]{$0.00341$\\$(4.66\!\times\!10^{-5})$} & \shortstack[r]{$0.00343$\\$(4.66\!\times\!10^{-5})$}\\*[2pt]
512 & 4096 & 0.3 & 0.0 & \shortstack[r]{$1.92\!\times\!10^{-4}$\\$(2.18\!\times\!10^{-6})$} & \shortstack[r]{$1.91\!\times\!10^{-4}$\\$(2.16\!\times\!10^{-6})$} & \shortstack[r]{$1.91\!\times\!10^{-4}$\\$(2.16\!\times\!10^{-6})$}\\*[2pt]
512 & 4096 & 0.3 & 0.1 & \shortstack[r]{$2.03\!\times\!10^{-4}$\\$(2.27\!\times\!10^{-6})$} & \shortstack[r]{$2.02\!\times\!10^{-4}$\\$(2.25\!\times\!10^{-6})$} & \shortstack[r]{$2.02\!\times\!10^{-4}$\\$(2.25\!\times\!10^{-6})$}\\[2pt]
\end{longtable}
\endgroup

%% file: appendix/table_pooled.tex
\begingroup
\small
\setlength{\tabcolsep}{4pt}
\begin{longtable}{rrrrll}
\caption{Pooled-budget suite, $m=128$, $p=1.5$, $\Delta=\zeta$. MSE (one SE). The pooled joint estimator uses diagnostic residuals even at $n=0$.}\label{tab:pooled-full}\\
\toprule
$n$ & $k$ & $h$ & $\Delta$ & Pooled joint & Mean\\
\midrule\endfirsthead
\multicolumn{6}{l}{\tablename\ \thetable\ (continued)}\\
\toprule
$n$ & $k$ & $h$ & $\Delta$ & Pooled joint & Mean\\
\midrule\endhead
\bottomrule\endfoot
0 & 64 & 0.0 & 0.0 & \shortstack[r]{$0.03$\\$(5.55\!\times\!10^{-4})$} & \shortstack[r]{$0.0164$\\$(2.59\!\times\!10^{-4})$}\\*[2pt]
0 & 64 & 0.0 & 0.1 & \shortstack[r]{$0.0291$\\$(5.04\!\times\!10^{-4})$} & \shortstack[r]{$0.0171$\\$(2.75\!\times\!10^{-4})$}\\[2pt]
0 & 64 & 0.3 & 0.0 & \shortstack[r]{$0.0401$\\$(6.02\!\times\!10^{-4})$} & \shortstack[r]{$0.031$\\$(4.42\!\times\!10^{-4})$}\\[2pt]
0 & 64 & 0.3 & 0.1 & \shortstack[r]{$0.0427$\\$(6.05\!\times\!10^{-4})$} & \shortstack[r]{$0.0325$\\$(4.67\!\times\!10^{-4})$}\\[2pt]
0 & 64 & 0.7 & 0.0 & \shortstack[r]{$0.0615$\\$(4.57\!\times\!10^{-4})$} & \shortstack[r]{$0.0653$\\$(4.84\!\times\!10^{-4})$}\\[2pt]
0 & 64 & 0.7 & 0.1 & \shortstack[r]{$0.0615$\\$(4.58\!\times\!10^{-4})$} & \shortstack[r]{$0.0671$\\$(4.90\!\times\!10^{-4})$}\\[2pt]
16 & 256 & 0.0 & 0.0 & \shortstack[r]{$0.00914$\\$(1.68\!\times\!10^{-4})$} & \shortstack[r]{$0.00594$\\$(9.69\!\times\!10^{-5})$}\\[2pt]
16 & 256 & 0.0 & 0.1 & \shortstack[r]{$0.00785$\\$(1.23\!\times\!10^{-4})$} & \shortstack[r]{$0.00592$\\$(9.45\!\times\!10^{-5})$}\\[2pt]
16 & 256 & 0.3 & 0.0 & \shortstack[r]{$0.0176$\\$(2.25\!\times\!10^{-4})$} & \shortstack[r]{$0.0156$\\$(1.96\!\times\!10^{-4})$}\\[2pt]
16 & 256 & 0.3 & 0.1 & \shortstack[r]{$0.0177$\\$(2.12\!\times\!10^{-4})$} & \shortstack[r]{$0.0161$\\$(2.02\!\times\!10^{-4})$}\\[2pt]
16 & 256 & 0.7 & 0.0 & \shortstack[r]{$0.0327$\\$(2.92\!\times\!10^{-4})$} & \shortstack[r]{$0.0375$\\$(3.31\!\times\!10^{-4})$}\\[2pt]
16 & 256 & 0.7 & 0.1 & \shortstack[r]{$0.0316$\\$(2.88\!\times\!10^{-4})$} & \shortstack[r]{$0.0386$\\$(3.34\!\times\!10^{-4})$}\\[2pt]
256 & 16 & 0.0 & 0.0 & \shortstack[r]{$0.0627$\\$(7.73\!\times\!10^{-4})$} & \shortstack[r]{$0.083$\\$(0.00114)$}\\[2pt]
256 & 16 & 0.0 & 0.1 & \shortstack[r]{$0.0886$\\$(9.22\!\times\!10^{-4})$} & \shortstack[r]{$0.085$\\$(0.00116)$}\\[2pt]
256 & 16 & 0.3 & 0.0 & \shortstack[r]{$0.0508$\\$(5.96\!\times\!10^{-4})$} & \shortstack[r]{$0.0902$\\$(0.00119)$}\\[2pt]
256 & 16 & 0.3 & 0.1 & \shortstack[r]{$0.0683$\\$(7.32\!\times\!10^{-4})$} & \shortstack[r]{$0.0928$\\$(0.00121)$}\\*[2pt]
256 & 16 & 0.7 & 0.0 & \shortstack[r]{$0.0628$\\$(5.11\!\times\!10^{-4})$} & \shortstack[r]{$0.117$\\$(6.53\!\times\!10^{-4})$}\\*[2pt]
256 & 16 & 0.7 & 0.1 & \shortstack[r]{$0.0618$\\$(4.71\!\times\!10^{-4})$} & \shortstack[r]{$0.117$\\$(6.58\!\times\!10^{-4})$}\\[2pt]
\end{longtable}
\endgroup

%% file: appendix/table_stein.tex
\begingroup
\small
\setlength{\tabcolsep}{4pt}
\begin{longtable}{rrrllll}
\caption{Clipping-boundary suite, $m=128$, $n=64$. Both reports assess the same clipped-mean repair. MSE and mean bias (one SE); the last column is the observed negative-gain fraction, not a confidence guarantee.}\label{tab:stein-full}\\
\toprule
$c$ & $h$ & $k$ & Gain plug-in & Stein & Stein bias & $\Pr_n(G<0)$\\
\midrule\endfirsthead
\multicolumn{7}{l}{\tablename\ \thetable\ (continued)}\\
\toprule
$c$ & $h$ & $k$ & Gain plug-in & Stein & Stein bias & $\Pr_n(G<0)$\\
\midrule\endhead
\bottomrule\endfoot
0.3 & 0.0 & 16 & \shortstack[r]{$0.193$\\$(0.00311)$} & \shortstack[r]{$0.133$\\$(0.00237)$} & \shortstack[r]{$-8.83\!\times\!10^{-4}$\\$(0.00182)$} & \shortstack[r]{$0.783$\\$(0.00206)$}\\*[2pt]
0.3 & 0.0 & 128 & \shortstack[r]{$0.004$\\$(6.77\!\times\!10^{-5})$} & \shortstack[r]{$0.003$\\$(5.39\!\times\!10^{-5})$} & \shortstack[r]{$4.59\!\times\!10^{-4}$\\$(2.74\!\times\!10^{-4})$} & \shortstack[r]{$0.506$\\$(0.0025)$}\\[2pt]
0.3 & 0.0 & 512 & \shortstack[r]{$4.39\!\times\!10^{-4}$\\$(7.47\!\times\!10^{-6})$} & \shortstack[r]{$3.76\!\times\!10^{-4}$\\$(6.39\!\times\!10^{-6})$} & \shortstack[r]{$9.63\!\times\!10^{-5}$\\$(9.70\!\times\!10^{-5})$} & \shortstack[r]{$0.294$\\$(0.00228)$}\\[2pt]
0.3 & 0.3 & 16 & \shortstack[r]{$0.264$\\$(0.00447)$} & \shortstack[r]{$0.197$\\$(0.00357)$} & \shortstack[r]{$0.00208$\\$(0.00222)$} & \shortstack[r]{$0.43$\\$(0.00248)$}\\[2pt]
0.3 & 0.3 & 128 & \shortstack[r]{$0.00986$\\$(1.48\!\times\!10^{-4})$} & \shortstack[r]{$0.00888$\\$(1.26\!\times\!10^{-4})$} & \shortstack[r]{$-0.00116$\\$(4.71\!\times\!10^{-4})$} & \shortstack[r]{$0.0883$\\$(0.00142)$}\\[2pt]
0.3 & 0.3 & 512 & \shortstack[r]{$0.00192$\\$(2.29\!\times\!10^{-5})$} & \shortstack[r]{$0.00186$\\$(2.10\!\times\!10^{-5})$} & \shortstack[r]{$-1.14\!\times\!10^{-4}$\\$(2.16\!\times\!10^{-4})$} & \shortstack[r]{$0.0265$\\$(8.02\!\times\!10^{-4})$}\\[2pt]
0.3 & 0.7 & 16 & \shortstack[r]{$0.573$\\$(0.00774)$} & \shortstack[r]{$0.508$\\$(0.00735)$} & \shortstack[r]{$0.00193$\\$(0.00356)$} & \shortstack[r]{$0.091$\\$(0.00144)$}\\[2pt]
0.3 & 0.7 & 128 & \shortstack[r]{$0.0441$\\$(4.69\!\times\!10^{-4})$} & \shortstack[r]{$0.0426$\\$(4.19\!\times\!10^{-4})$} & \shortstack[r]{$-2.01\!\times\!10^{-4}$\\$(0.00103)$} & \shortstack[r]{$1.25\!\times\!10^{-4}$\\$(5.59\!\times\!10^{-5})$}\\[2pt]
0.3 & 0.7 & 512 & \shortstack[r]{$0.0102$\\$(8.88\!\times\!10^{-5})$} & \shortstack[r]{$0.0101$\\$(8.49\!\times\!10^{-5})$} & \shortstack[r]{$-3.41\!\times\!10^{-4}$\\$(5.03\!\times\!10^{-4})$} & \shortstack[r]{$0$\\$(0)$}\\[2pt]
0.9 & 0.9 & 16 & \shortstack[r]{$0.128$\\$(6.67\!\times\!10^{-4})$} & \shortstack[r]{$0.643$\\$(0.00469)$} & \shortstack[r]{$3.05\!\times\!10^{-6}$\\$(0.00401)$} & \shortstack[r]{$0.0163$\\$(6.34\!\times\!10^{-4})$}\\*[2pt]
0.9 & 0.9 & 128 & \shortstack[r]{$0.0584$\\$(3.37\!\times\!10^{-4})$} & \shortstack[r]{$0.0776$\\$(6.27\!\times\!10^{-4})$} & \shortstack[r]{$0.00252$\\$(0.00139)$} & \shortstack[r]{$0$\\$(0)$}\\*[2pt]
0.9 & 0.9 & 512 & \shortstack[r]{$0.0192$\\$(1.49\!\times\!10^{-4})$} & \shortstack[r]{$0.0193$\\$(1.54\!\times\!10^{-4})$} & \shortstack[r]{$0.00171$\\$(6.94\!\times\!10^{-4})$} & \shortstack[r]{$0$\\$(0)$}\\[2pt]
\end{longtable}
\endgroup

%% file: appendix/table_robust_learning.tex
\begingroup
\small
\setlength{\tabcolsep}{4pt}
\begin{longtable}{rrrllll}
\caption{Robustness suite: learning MSE and realized mean gain (one SE). Here $t^2=\eta/64$, $m=n=k^3$, and $\eta=8192\log(k)/k$. The base learner is magnitude-only; fallback changes the learner.}\label{tab:robust-learning}\\
\toprule
$k$ & $h$ & $\zeta$ & \shortstack{Base\\learning} & \shortstack{Fallback\\learning} & \shortstack{Base\\gain} & \shortstack{Fallback\\gain}\\
\midrule\endfirsthead
\multicolumn{7}{l}{\tablename\ \thetable\ (continued)}\\
\toprule
$k$ & $h$ & $\zeta$ & \shortstack{Base\\learning} & \shortstack{Fallback\\learning} & \shortstack{Base\\gain} & \shortstack{Fallback\\gain}\\
\midrule\endhead
\bottomrule\endfoot
65536 & 0 & 0.0 & \shortstack[r]{$6.99\!\times\!10^{-15}$\\$(7.06\!\times\!10^{-17})$} & \shortstack[r]{$6.99\!\times\!10^{-15}$\\$(7.06\!\times\!10^{-17})$} & \shortstack[r]{$0$\\$(0)$} & \shortstack[r]{$0$\\$(0)$}\\*[2pt]
65536 & $ .95t$ & 0.0 & \shortstack[r]{$0.0195$\\$(1.66\!\times\!10^{-10})$} & \shortstack[r]{$0.0195$\\$(1.66\!\times\!10^{-10})$} & \shortstack[r]{$0$\\$(0)$} & \shortstack[r]{$0$\\$(0)$}\\[2pt]
65536 & $1.05t$ & 0.0 & \shortstack[r]{$6.13\!\times\!10^{-13}$\\$(6.09\!\times\!10^{-15})$} & \shortstack[r]{$6.13\!\times\!10^{-13}$\\$(6.09\!\times\!10^{-15})$} & \shortstack[r]{$0.0239$\\$(1.84\!\times\!10^{-10})$} & \shortstack[r]{$0.0239$\\$(1.84\!\times\!10^{-10})$}\\[2pt]
65536 & .3 & 0.0 & \shortstack[r]{$1.71\!\times\!10^{-13}$\\$(1.72\!\times\!10^{-15})$} & \shortstack[r]{$1.71\!\times\!10^{-13}$\\$(1.72\!\times\!10^{-15})$} & \shortstack[r]{$0.09$\\$(3.56\!\times\!10^{-10})$} & \shortstack[r]{$0.09$\\$(3.56\!\times\!10^{-10})$}\\[2pt]
65536 & 0 & 0.05 & \shortstack[r]{$0.05$\\$(1.73\!\times\!10^{-9})$} & \shortstack[r]{$0.05$\\$(1.73\!\times\!10^{-9})$} & \shortstack[r]{$-0.05$\\$(1.73\!\times\!10^{-9})$} & \shortstack[r]{$-0.05$\\$(1.73\!\times\!10^{-9})$}\\[2pt]
65536 & 0 & 0.2 & \shortstack[r]{$0.2$\\$(1.86\!\times\!10^{-9})$} & \shortstack[r]{$0.2$\\$(1.86\!\times\!10^{-9})$} & \shortstack[r]{$-0.2$\\$(1.86\!\times\!10^{-9})$} & \shortstack[r]{$-0.2$\\$(1.86\!\times\!10^{-9})$}\\[2pt]
262144 & 0 & 0.0 & \shortstack[r]{$1.10\!\times\!10^{-16}$\\$(1.09\!\times\!10^{-18})$} & \shortstack[r]{$1.10\!\times\!10^{-16}$\\$(1.09\!\times\!10^{-18})$} & \shortstack[r]{$0$\\$(0)$} & \shortstack[r]{$0$\\$(0)$}\\[2pt]
262144 & $ .95t$ & 0.0 & \shortstack[r]{$0.0055$\\$(1.11\!\times\!10^{-11})$} & \shortstack[r]{$0.0055$\\$(1.11\!\times\!10^{-11})$} & \shortstack[r]{$0$\\$(0)$} & \shortstack[r]{$0$\\$(0)$}\\[2pt]
262144 & $1.05t$ & 0.0 & \shortstack[r]{$3.28\!\times\!10^{-14}$\\$(3.24\!\times\!10^{-16})$} & \shortstack[r]{$3.28\!\times\!10^{-14}$\\$(3.24\!\times\!10^{-16})$} & \shortstack[r]{$0.00672$\\$(1.22\!\times\!10^{-11})$} & \shortstack[r]{$0.00672$\\$(1.22\!\times\!10^{-11})$}\\[2pt]
262144 & .3 & 0.0 & \shortstack[r]{$2.70\!\times\!10^{-15}$\\$(2.70\!\times\!10^{-17})$} & \shortstack[r]{$2.70\!\times\!10^{-15}$\\$(2.70\!\times\!10^{-17})$} & \shortstack[r]{$0.09$\\$(4.46\!\times\!10^{-11})$} & \shortstack[r]{$0.09$\\$(4.46\!\times\!10^{-11})$}\\[2pt]
262144 & 0 & 0.05 & \shortstack[r]{$0.05$\\$(2.18\!\times\!10^{-10})$} & \shortstack[r]{$0.05$\\$(2.18\!\times\!10^{-10})$} & \shortstack[r]{$-0.05$\\$(2.18\!\times\!10^{-10})$} & \shortstack[r]{$-0.05$\\$(2.18\!\times\!10^{-10})$}\\[2pt]
262144 & 0 & 0.2 & \shortstack[r]{$0.2$\\$(2.32\!\times\!10^{-10})$} & \shortstack[r]{$8.36\!\times\!10^{-6}$\\$(8.37\!\times\!10^{-8})$} & \shortstack[r]{$-0.2$\\$(2.32\!\times\!10^{-10})$} & \shortstack[r]{$-8.36\!\times\!10^{-6}$\\$(8.37\!\times\!10^{-8})$}\\[2pt]
1048576 & 0 & 0.0 & \shortstack[r]{$1.77\!\times\!10^{-18}$\\$(1.75\!\times\!10^{-20})$} & \shortstack[r]{$1.77\!\times\!10^{-18}$\\$(1.75\!\times\!10^{-20})$} & \shortstack[r]{$0$\\$(0)$} & \shortstack[r]{$0$\\$(0)$}\\[2pt]
1048576 & $ .95t$ & 0.0 & \shortstack[r]{$0.00153$\\$(7.25\!\times\!10^{-13})$} & \shortstack[r]{$0.00153$\\$(7.25\!\times\!10^{-13})$} & \shortstack[r]{$0$\\$(0)$} & \shortstack[r]{$0$\\$(0)$}\\[2pt]
1048576 & $1.05t$ & 0.0 & \shortstack[r]{$1.85\!\times\!10^{-15}$\\$(1.84\!\times\!10^{-17})$} & \shortstack[r]{$1.85\!\times\!10^{-15}$\\$(1.84\!\times\!10^{-17})$} & \shortstack[r]{$0.00187$\\$(8.00\!\times\!10^{-13})$} & \shortstack[r]{$0.00187$\\$(8.00\!\times\!10^{-13})$}\\[2pt]
1048576 & .3 & 0.0 & \shortstack[r]{$4.15\!\times\!10^{-17}$\\$(4.12\!\times\!10^{-19})$} & \shortstack[r]{$4.15\!\times\!10^{-17}$\\$(4.12\!\times\!10^{-19})$} & \shortstack[r]{$0.09$\\$(5.61\!\times\!10^{-12})$} & \shortstack[r]{$0.09$\\$(5.61\!\times\!10^{-12})$}\\[2pt]
1048576 & 0 & 0.05 & \shortstack[r]{$0.05$\\$(2.70\!\times\!10^{-11})$} & \shortstack[r]{$1.96\!\times\!10^{-6}$\\$(1.97\!\times\!10^{-8})$} & \shortstack[r]{$-0.05$\\$(2.70\!\times\!10^{-11})$} & \shortstack[r]{$-1.96\!\times\!10^{-6}$\\$(1.97\!\times\!10^{-8})$}\\[2pt]
1048576 & 0 & 0.2 & \shortstack[r]{$0.2$\\$(2.88\!\times\!10^{-11})$} & \shortstack[r]{$2.10\!\times\!10^{-6}$\\$(2.08\!\times\!10^{-8})$} & \shortstack[r]{$-0.2$\\$(2.88\!\times\!10^{-11})$} & \shortstack[r]{$-2.10\!\times\!10^{-6}$\\$(2.08\!\times\!10^{-8})$}\\[2pt]
4194304 & 0 & 0.0 & \shortstack[r]{$2.73\!\times\!10^{-20}$\\$(2.71\!\times\!10^{-22})$} & \shortstack[r]{$2.73\!\times\!10^{-20}$\\$(2.71\!\times\!10^{-22})$} & \shortstack[r]{$0$\\$(0)$} & \shortstack[r]{$0$\\$(0)$}\\[2pt]
4194304 & $ .95t$ & 0.0 & \shortstack[r]{$4.20\!\times\!10^{-4}$\\$(4.77\!\times\!10^{-14})$} & \shortstack[r]{$4.20\!\times\!10^{-4}$\\$(4.77\!\times\!10^{-14})$} & \shortstack[r]{$0$\\$(0)$} & \shortstack[r]{$0$\\$(0)$}\\[2pt]
4194304 & $1.05t$ & 0.0 & \shortstack[r]{$1.06\!\times\!10^{-16}$\\$(1.05\!\times\!10^{-18})$} & \shortstack[r]{$1.06\!\times\!10^{-16}$\\$(1.05\!\times\!10^{-18})$} & \shortstack[r]{$5.13\!\times\!10^{-4}$\\$(5.24\!\times\!10^{-14})$} & \shortstack[r]{$5.13\!\times\!10^{-4}$\\$(5.24\!\times\!10^{-14})$}\\[2pt]
4194304 & .3 & 0.0 & \shortstack[r]{$6.48\!\times\!10^{-19}$\\$(6.60\!\times\!10^{-21})$} & \shortstack[r]{$6.48\!\times\!10^{-19}$\\$(6.60\!\times\!10^{-21})$} & \shortstack[r]{$0.09$\\$(6.98\!\times\!10^{-13})$} & \shortstack[r]{$0.09$\\$(6.98\!\times\!10^{-13})$}\\*[2pt]
4194304 & 0 & 0.05 & \shortstack[r]{$0.05$\\$(3.34\!\times\!10^{-12})$} & \shortstack[r]{$4.91\!\times\!10^{-7}$\\$(4.89\!\times\!10^{-9})$} & \shortstack[r]{$-0.05$\\$(3.34\!\times\!10^{-12})$} & \shortstack[r]{$-4.91\!\times\!10^{-7}$\\$(4.89\!\times\!10^{-9})$}\\*[2pt]
4194304 & 0 & 0.2 & \shortstack[r]{$0.2$\\$(3.64\!\times\!10^{-12})$} & \shortstack[r]{$5.26\!\times\!10^{-7}$\\$(5.25\!\times\!10^{-9})$} & \shortstack[r]{$-0.2$\\$(3.64\!\times\!10^{-12})$} & \shortstack[r]{$-5.26\!\times\!10^{-7}$\\$(5.25\!\times\!10^{-9})$}\\[2pt]
\end{longtable}
\endgroup

%% file: appendix/table_robust_assessment.tex
\begingroup
\small
\setlength{\tabcolsep}{4pt}
\begin{longtable}{rrrllll}
\caption{Robustness suite: MSE (one SE). Raw and projected reports share the same fallback-learner target. The Stein column assesses a different direct-mean learner and is a pipeline comparator, not a same-target dominance comparison.}\label{tab:robust-assessment}\\
\toprule
$k$ & $h$ & $\zeta$ & \shortstack{Base report/\\learner} & \shortstack{Raw /\\fallback} & \shortstack{Projected /\\fallback} & \shortstack{Stein /\\mean}\\
\midrule\endfirsthead
\multicolumn{7}{l}{\tablename\ \thetable\ (continued)}\\
\toprule
$k$ & $h$ & $\zeta$ & \shortstack{Base report/\\learner} & \shortstack{Raw /\\fallback} & \shortstack{Projected /\\fallback} & \shortstack{Stein /\\mean}\\
\midrule\endhead
\bottomrule\endfoot
65536 & 0 & 0.0 & \shortstack[r]{$0$\\$(0)$} & \shortstack[r]{$0$\\$(0)$} & \shortstack[r]{$0$\\$(0)$} & \shortstack[r]{$7.39\!\times\!10^{-9}$\\$(1.88\!\times\!10^{-10})$}\\*[2pt]
65536 & $ .95t$ & 0.0 & \shortstack[r]{$0$\\$(0)$} & \shortstack[r]{$0$\\$(0)$} & \shortstack[r]{$0$\\$(0)$} & \shortstack[r]{$2.43\!\times\!10^{-6}$\\$(2.51\!\times\!10^{-8})$}\\[2pt]
65536 & $1.05t$ & 0.0 & \shortstack[r]{$5.85\!\times\!10^{-14}$\\$(5.82\!\times\!10^{-16})$} & \shortstack[r]{$5.85\!\times\!10^{-14}$\\$(5.82\!\times\!10^{-16})$} & \shortstack[r]{$5.85\!\times\!10^{-14}$\\$(5.82\!\times\!10^{-16})$} & \shortstack[r]{$2.89\!\times\!10^{-6}$\\$(2.90\!\times\!10^{-8})$}\\[2pt]
65536 & .3 & 0.0 & \shortstack[r]{$6.17\!\times\!10^{-14}$\\$(6.21\!\times\!10^{-16})$} & \shortstack[r]{$6.17\!\times\!10^{-14}$\\$(6.21\!\times\!10^{-16})$} & \shortstack[r]{$6.17\!\times\!10^{-14}$\\$(6.21\!\times\!10^{-16})$} & \shortstack[r]{$1.15\!\times\!10^{-5}$\\$(1.14\!\times\!10^{-7})$}\\[2pt]
65536 & 0 & 0.05 & \shortstack[r]{$0.01$\\$(6.87\!\times\!10^{-10})$} & \shortstack[r]{$0.01$\\$(6.87\!\times\!10^{-10})$} & \shortstack[r]{$0.00179$\\$(9.18\!\times\!10^{-7})$} & \shortstack[r]{$7.68\!\times\!10^{-9}$\\$(1.99\!\times\!10^{-10})$}\\[2pt]
65536 & 0 & 0.2 & \shortstack[r]{$0.16$\\$(2.87\!\times\!10^{-9})$} & \shortstack[r]{$0.16$\\$(2.87\!\times\!10^{-9})$} & \shortstack[r]{$0.00719$\\$(3.81\!\times\!10^{-6})$} & \shortstack[r]{$9.18\!\times\!10^{-9}$\\$(2.56\!\times\!10^{-10})$}\\[2pt]
262144 & 0 & 0.0 & \shortstack[r]{$0$\\$(0)$} & \shortstack[r]{$0$\\$(0)$} & \shortstack[r]{$0$\\$(0)$} & \shortstack[r]{$4.85\!\times\!10^{-10}$\\$(1.37\!\times\!10^{-11})$}\\[2pt]
262144 & $ .95t$ & 0.0 & \shortstack[r]{$0$\\$(0)$} & \shortstack[r]{$0$\\$(0)$} & \shortstack[r]{$0$\\$(0)$} & \shortstack[r]{$1.69\!\times\!10^{-7}$\\$(1.74\!\times\!10^{-9})$}\\[2pt]
262144 & $1.05t$ & 0.0 & \shortstack[r]{$8.82\!\times\!10^{-16}$\\$(8.71\!\times\!10^{-18})$} & \shortstack[r]{$8.82\!\times\!10^{-16}$\\$(8.71\!\times\!10^{-18})$} & \shortstack[r]{$8.82\!\times\!10^{-16}$\\$(8.71\!\times\!10^{-18})$} & \shortstack[r]{$2.07\!\times\!10^{-7}$\\$(2.11\!\times\!10^{-9})$}\\[2pt]
262144 & .3 & 0.0 & \shortstack[r]{$9.71\!\times\!10^{-16}$\\$(9.72\!\times\!10^{-18})$} & \shortstack[r]{$9.71\!\times\!10^{-16}$\\$(9.72\!\times\!10^{-18})$} & \shortstack[r]{$9.71\!\times\!10^{-16}$\\$(9.72\!\times\!10^{-18})$} & \shortstack[r]{$2.93\!\times\!10^{-6}$\\$(2.91\!\times\!10^{-8})$}\\[2pt]
262144 & 0 & 0.05 & \shortstack[r]{$0.01$\\$(8.65\!\times\!10^{-11})$} & \shortstack[r]{$0.01$\\$(8.65\!\times\!10^{-11})$} & \shortstack[r]{$5.01\!\times\!10^{-4}$\\$(2.43\!\times\!10^{-7})$} & \shortstack[r]{$4.96\!\times\!10^{-10}$\\$(1.36\!\times\!10^{-11})$}\\[2pt]
262144 & 0 & 0.2 & \shortstack[r]{$0.16$\\$(3.60\!\times\!10^{-10})$} & \shortstack[r]{$0.04$\\$(3.35\!\times\!10^{-8})$} & \shortstack[r]{$9.19\!\times\!10^{-8}$\\$(1.01\!\times\!10^{-9})$} & \shortstack[r]{$5.60\!\times\!10^{-10}$\\$(1.61\!\times\!10^{-11})$}\\[2pt]
1048576 & 0 & 0.0 & \shortstack[r]{$0$\\$(0)$} & \shortstack[r]{$0$\\$(0)$} & \shortstack[r]{$0$\\$(0)$} & \shortstack[r]{$2.82\!\times\!10^{-11}$\\$(7.58\!\times\!10^{-13})$}\\[2pt]
1048576 & $ .95t$ & 0.0 & \shortstack[r]{$0$\\$(0)$} & \shortstack[r]{$0$\\$(0)$} & \shortstack[r]{$0$\\$(0)$} & \shortstack[r]{$1.17\!\times\!10^{-8}$\\$(1.18\!\times\!10^{-10})$}\\[2pt]
1048576 & $1.05t$ & 0.0 & \shortstack[r]{$1.38\!\times\!10^{-17}$\\$(1.37\!\times\!10^{-19})$} & \shortstack[r]{$1.38\!\times\!10^{-17}$\\$(1.37\!\times\!10^{-19})$} & \shortstack[r]{$1.38\!\times\!10^{-17}$\\$(1.37\!\times\!10^{-19})$} & \shortstack[r]{$1.44\!\times\!10^{-8}$\\$(1.47\!\times\!10^{-10})$}\\[2pt]
1048576 & .3 & 0.0 & \shortstack[r]{$1.50\!\times\!10^{-17}$\\$(1.48\!\times\!10^{-19})$} & \shortstack[r]{$1.50\!\times\!10^{-17}$\\$(1.48\!\times\!10^{-19})$} & \shortstack[r]{$1.50\!\times\!10^{-17}$\\$(1.48\!\times\!10^{-19})$} & \shortstack[r]{$7.15\!\times\!10^{-7}$\\$(7.25\!\times\!10^{-9})$}\\[2pt]
1048576 & 0 & 0.05 & \shortstack[r]{$0.01$\\$(1.07\!\times\!10^{-11})$} & \shortstack[r]{$0.0025$\\$(1.97\!\times\!10^{-9})$} & \shortstack[r]{$5.91\!\times\!10^{-9}$\\$(6.45\!\times\!10^{-11})$} & \shortstack[r]{$3.10\!\times\!10^{-11}$\\$(8.95\!\times\!10^{-13})$}\\[2pt]
1048576 & 0 & 0.2 & \shortstack[r]{$0.16$\\$(4.46\!\times\!10^{-11})$} & \shortstack[r]{$0.04$\\$(8.32\!\times\!10^{-9})$} & \shortstack[r]{$6.35\!\times\!10^{-9}$\\$(6.83\!\times\!10^{-11})$} & \shortstack[r]{$3.46\!\times\!10^{-11}$\\$(8.61\!\times\!10^{-13})$}\\[2pt]
4194304 & 0 & 0.0 & \shortstack[r]{$0$\\$(0)$} & \shortstack[r]{$0$\\$(0)$} & \shortstack[r]{$0$\\$(0)$} & \shortstack[r]{$1.80\!\times\!10^{-12}$\\$(4.82\!\times\!10^{-14})$}\\[2pt]
4194304 & $ .95t$ & 0.0 & \shortstack[r]{$0$\\$(0)$} & \shortstack[r]{$0$\\$(0)$} & \shortstack[r]{$0$\\$(0)$} & \shortstack[r]{$8.09\!\times\!10^{-10}$\\$(8.21\!\times\!10^{-12})$}\\[2pt]
4194304 & $1.05t$ & 0.0 & \shortstack[r]{$2.18\!\times\!10^{-19}$\\$(2.15\!\times\!10^{-21})$} & \shortstack[r]{$2.18\!\times\!10^{-19}$\\$(2.15\!\times\!10^{-21})$} & \shortstack[r]{$2.18\!\times\!10^{-19}$\\$(2.15\!\times\!10^{-21})$} & \shortstack[r]{$9.65\!\times\!10^{-10}$\\$(9.67\!\times\!10^{-12})$}\\[2pt]
4194304 & .3 & 0.0 & \shortstack[r]{$2.33\!\times\!10^{-19}$\\$(2.38\!\times\!10^{-21})$} & \shortstack[r]{$2.33\!\times\!10^{-19}$\\$(2.38\!\times\!10^{-21})$} & \shortstack[r]{$2.33\!\times\!10^{-19}$\\$(2.38\!\times\!10^{-21})$} & \shortstack[r]{$1.77\!\times\!10^{-7}$\\$(1.77\!\times\!10^{-9})$}\\*[2pt]
4194304 & 0 & 0.05 & \shortstack[r]{$0.01$\\$(1.33\!\times\!10^{-12})$} & \shortstack[r]{$0.0025$\\$(4.89\!\times\!10^{-10})$} & \shortstack[r]{$4.04\!\times\!10^{-10}$\\$(4.34\!\times\!10^{-12})$} & \shortstack[r]{$1.91\!\times\!10^{-12}$\\$(4.94\!\times\!10^{-14})$}\\*[2pt]
4194304 & 0 & 0.2 & \shortstack[r]{$0.16$\\$(5.64\!\times\!10^{-12})$} & \shortstack[r]{$0.04$\\$(2.10\!\times\!10^{-9})$} & \shortstack[r]{$4.35\!\times\!10^{-10}$\\$(4.71\!\times\!10^{-12})$} & \shortstack[r]{$2.20\!\times\!10^{-12}$\\$(5.98\!\times\!10^{-14})$}\\[2pt]
\end{longtable}
\endgroup

%% file: appendix/table_bound_sensitivity.tex
\begin{table}[t]
\centering
\small
\caption{Variance-bound ablation at $k=16384$, $m=n=k^3$, $h^2=1.1\eta$, $c=.6$, $\tau=2$, and fast slack. The rule uses $\rho V_*$ with $V_*=4$; the data and allowance are unchanged within a cell. $P(T>0,I_Q=1)$ is the nuisance-draw fraction with a positive eligible threshold. The last column compares total MSE with the matched magnitude-only rule. Negative normalized exponents at small $z$ are permitted.}\label{tab:bound-sensitivity}
\setlength{\tabcolsep}{4pt}
\begin{tabular}{rrrrrr}
\toprule
$z$ & $\rho$ & $P(T>0,I_Q=1)$ & $-\log(k^2R_W)/z$ & $R/E_k$ & $R/R_{\rm mag}$\\
\midrule
16 & 1 & $0$ & $-0.126$ & $6.13\times10^{4}$ & $1$\\
16 & 2 & $0$ & $-0.126$ & $6.13\times10^{4}$ & $1$\\
16 & 4 & $0$ & $-0.126$ & $6.13\times10^{4}$ & $1$\\
16 & 16 & $0$ & $-0.126$ & $6.13\times10^{4}$ & $1$\\
64 & 1 & $0$ & $0.141$ & $8.99$ & $1$\\
64 & 2 & $0$ & $0.141$ & $8.99$ & $1$\\
64 & 4 & $0$ & $0.141$ & $8.99$ & $1$\\
64 & 16 & $0$ & $0.141$ & $8.99$ & $1$\\
256 & 1 & $1$ & $0.433$ & $8.24$ & $1$\\
256 & 2 & $1$ & $0.251$ & $8.24$ & $1$\\
256 & 4 & $0$ & $0.231$ & $8.24$ & $1$\\
256 & 16 & $0$ & $0.231$ & $8.24$ & $1$\\
512 & 1 & $1$ & $0.587$ & $8.11$ & $1$\\
512 & 2 & $1$ & $0.431$ & $8.11$ & $1$\\
512 & 4 & $1$ & $0.249$ & $8.11$ & $1$\\
512 & 16 & $0$ & $0.247$ & $8.11$ & $1$\\
\bottomrule
\end{tabular}
\end{table}